\documentclass[11pt]{article}
\usepackage{graphicx} 

\usepackage{fullpage}

\usepackage[framemethod=tikz]{mdframed}

\definecolor{mycolor}{rgb}{0.122, 0.435, 0.698}

\newmdenv[innerlinewidth=0.5pt, roundcorner=4pt,linecolor=mycolor,innerleftmargin=6pt,
innerrightmargin=6pt,innertopmargin=6pt,innerbottommargin=6pt]{mybox}

\usepackage{xurl}
\usepackage{natbib}
\usepackage{bm}
\usepackage[usenames,dvipsnames]{xcolor}
\usepackage{comment}
\usepackage{bbm}
\usepackage{pgfplots}
\pgfplotsset{compat=1.17}
\usepackage{authblk}

\usepackage{tabularx} 
\usepackage{caption} 

\date{}
\author[1]{Steve Hanneke}
\author[2]{Alkis Kalavasis}
\author[3, 4]{Shay Moran}
\author[4]{Grigoris Velegkas}

\affil[1]{Purdue University}
\affil[2]{Yale University}
\affil[3]{Technion}
\affil[4]{Google Research}

\usepackage{amsthm,amsmath,amsfonts,amssymb}
\usepackage{comment}

\usepackage{grigoris}
\usepackage{tikz}
\usepackage{tikz-cd}

\newcommand{\ind}{\mathbf{1}}

\newcommand{\nats}{\mathbb{N}}

\newcommand{\bsigma}{\boldsymbol{\sigma}}

\newcommand{\sD}{\mathscr{D}}

\Crefname{LP}{LP}{LPs}

\usepackage[suppress]{color-edits} 
\addauthor[Shay]{sm}{green}
\addauthor[Steve]{sh}{magenta}
\addauthor[Alkis]{ak}{blue}
\addauthor[Grigoris]{gv}{red}

\title{On the Learning Curves of Revenue Maximization
}

\begin{document}
\pagenumbering{gobble}      

\maketitle

\begin{abstract}
Learning curves are a fundamental primitive in supervised learning, describing how an algorithm’s performance improves with more data and providing a quantitative measure of its generalization ability. Formally, a learning curve plots the decay of an algorithm’s error 
for a fixed underlying distribution as a function of the number of training samples. Prior work on revenue-maximizing learning algorithms, starting with the seminal work of \citet[STOC]{cole2014sample}, adopts a distribution-free perspective (which parallels the PAC learning framework in learning theory). This approach evaluates performance against the hardest possible \emph{sequence} of valuation distributions, one for each sample size, effectively defining the \emph{upper envelope of learning curves} over all possible distributions, thus leading to error bounds that do not capture the shape of the learning curves.

In this work we initiate the study of learning curves for revenue maximization and we provide a near-complete characterization of their rate of decay in the basic setting of a single item and a single buyer. In the absence of any restriction on the valuation distribution, we show that there exists a \emph{Bayes-consistent} algorithm, meaning its learning curve converges to zero for any arbitrary valuation distribution as the number of samples \( n \to \infty \). However, this convergence \emph{must} be arbitrarily slow, {even if the optimal revenue is finite. In contrast, if the optimal revenue is achieved by a finite price} then the optimal rate of decay is roughly $1/\sqrt{n}$. Finally, for distributions supported on {discrete sets of} values,
we show that learning curves decay {(almost)} \emph{exponentially fast}, a rate unattainable under the PAC framework.

From a technical perspective, establishing lower bounds on learning curves is significantly more challenging than in the PAC framework, as it requires fixing a \emph{single hard} distribution and proving a bound that holds for infinitely many values of \( n \). Conversely, deriving upper bounds involves non-trivial algorithmic principles, including techniques such as \emph{regularization} and \emph{structural risk minimization}, which are crucial for achieving optimal learning rates. 
\end{abstract}

\newpage 

\tableofcontents

\newpage

\pagenumbering{arabic}      

\setcounter{page}{1}        

\section{Introduction}\label{sec:intro}

Machine learning has become an integral tool in economics and mechanism design, where it is used to optimize objectives, such as revenue or welfare, using historical data. A prominent application of these techniques is in auction design, particularly in digital advertising, an industry generating hundreds of billions of dollars annually. In this setting, learning algorithms help auctioneers make data-driven decisions to maximize their desired outcomes. This raises a fundamental statistical question: how should we measure the performance of such  learning algorithms?


A common approach that ML practitioners use to measure an algorithm's performance is to plot its \emph{learning curve}, that is, the rate 
of the decay of the ``loss'' of the algorithm under a fixed data generating distribution, as the number of training samples $n \to \infty$ (see red curves in Figure \ref{fig:2}).
Under this framework, a learning algorithm is considered successful (against a distribution \( \D \)) if its learning curve (with respect to \( \D \)) converges to zero, 
and the difficulty of the problem is captured by the \emph{rate} at which the curve approaches zero.
{Learning curves} are a fundamental primitive in statistics and learning theory, dating back to the work of \citet{stone1977consistent} as they quantify how an algorithm’s performance improves with more data. 
Despite their central role in supervised learning, learning curves have not been systematically studied in the context of revenue maximization.

A formal learning theory for auction design, inspired by the PAC learning framework \citep{valiant1984theory,vapnik2013nature}, was put forth in the pioneering work of \citet{cole2014sample} (building on earlier work of \citealp{balcan2008reducing}) and has inspired a long line of beautiful results (e.g.~\cite{huang2015making,morgenstern2015pseudo,roughgarden2016ironing,guo2019settling}). In this framework, the quality of the algorithm is measured by testing its performance against the \emph{worst-case sequence} of distributions, one for \emph{each different sample size} \( n \). 
Thus, PAC learning studies the rate of decay of the \emph{upper envelope} of the learning curves, rather than the rate of decay of individual learning curves.

While PAC learning upper bounds are very desirable since they provide \emph{distribution-free} guarantees, in several settings PAC lower bounds might not be indicative of the behavior of the algorithm of \emph{fixed} distributions.
In this work, we study a complementary learning framework whose goal is to obtain
guarantees on the rate of decay that hold for \emph{every} individual learning curve, but need not necessarily hold for their upper envelope.
To illustrate how our framework complements existing approaches in revenue maximization,
we first recall the standard auction design setting. In the simplest single-buyer single-item problem (which is the main focus of our work), the auctioneer wants to set a price \( p \in \mathbb{R}_+ \) for selling the item to the buyer with the goal of maximizing the expected revenue from this transaction, given by  
\[
\E_{v \sim \D}[p \cdot \ind\{v \geq p\}].
\]
{Here, the revenue equals the posted price \(p\) whenever the buyer’s value meets or exceeds it (i.e., when~\(\ind\{v \ge p\}=1\)).}
We adopt the same statistical model as \citet{cole2014sample}, assuming that~\(\D\) is unknown to the auctioneer and can only be accessed through i.i.d. samples. It is well-known that the revenue-optimal mechanism in this setting, is a \emph{posted-price}: the auctioneer gives a take-it-or-leave-it price to the buyer, who accepts if its valuation is above the offered price.
Thus, a learning algorithm takes as input $n$ samples from the distribution and outputs a price $\hat t_n$ to be posted by the auctioneer. It is natural to define the ``loss'' of the algorithm 
as the gap between the optimal revenue and the revenue of the price it outputs.
This is precisely what the red learning curves in \cref{fig:2} are describing. On the other hand, the PAC learning framework is studying the blue curve in \cref{fig:2}.

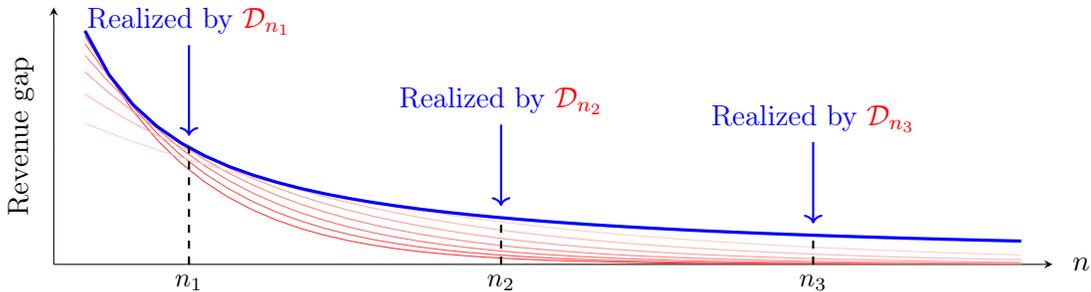
\begin{figure}[!ht]
\centering
\begin{tikzpicture}
\begin{axis}[
    width=0.9\textwidth,
    height=5cm,
    grid=minor,
    xmin=0.7, xmax=10.3, ymax=1.1,
    axis x line=middle,
    axis y line=middle,
    xlabel={$n$},
    ylabel={Revenue gap},
    every axis x label/.style={
        at={(ticklabel* cs:1.01)},
        anchor=west,
    },
    y label style={at={(axis description cs:-.01,.5)},rotate=90,anchor=south},
    yticklabels={,,}, y tick label style={major tick length=0pt},
    xtick={2, 5, 8}, 
    xticklabels={$n_1$, $n_2$, $n_3$}, 
    x tick label style={font=\small}
]

\addplot+[domain=1:10, samples=40, mark=none, color=red!80!white, solid] {0.9*exp(1-0.9*x)};
\addplot+[domain=1:10, samples=40, mark=none, color=red!70!white, solid] {0.8*exp(1-0.8*x)};
\addplot+[domain=1:10, samples=40, mark=none, color=red!60!white, solid] {0.7*exp(1-0.7*x)};
\addplot+[domain=1:10, samples=40, mark=none, color=red!50!white, solid] {0.6*exp(1-0.6*x)};
\addplot+[domain=1:10, samples=40, mark=none, color=red!40!white, solid] {0.5*exp(1-0.5*x)};
\addplot+[domain=1:10, samples=40, mark=none, color=red!30!white, solid] {0.4*exp(1-0.4*x)};
\addplot+[domain=1:10, samples=40, mark=none, color=red!20!white, solid] {0.3*exp(1-0.3*x)};

\addplot+[domain=1:10, samples=40, mark=none, very thick, color=blue, solid] {1/x};

\draw[dashed, thick] (axis cs:2,0) -- (axis cs:2,{1/2});
\draw[dashed, thick] (axis cs:5,0) -- (axis cs:5,{1/5});
\draw[dashed, thick] (axis cs:8,0) -- (axis cs:8,{1/8});

\node[blue] (Dn1) at (axis cs:2, {1/2 + 0.54}) {Realized by \textcolor{red}{$\D_{n_1}$}};
\node[blue] (Dn2) at (axis cs:5, {1/5 + 0.5}) {Realized by \textcolor{red}{$\D_{n_2}$}};
\node[blue] (Dn3) at (axis cs:8, {1/8 + 0.5}) {Realized by \textcolor{red}{$\D_{n_3}$}};

\draw[blue, thick, ->] (Dn1) -- (axis cs:2, {1/2+0.05});
\draw[blue, thick, ->] (Dn2) -- (axis cs:5, {1/5+0.05});
\draw[blue, thick, ->] (Dn3) -- (axis cs:8, {1/8+0.05});



\end{axis}
\end{tikzpicture}
\caption{
Each red curve corresponds to the learning curve of an algorithm against some fixed distribution (see \Cref{def:learning-curve}).
When the worst case distribution is allowed to change with the sample size, we obtain the upper envelope (blue curve) of the learning curves; here, $\D_{n_i}$ is the worst case instance for the given algorithm at the sample size $n_i$. 
}
\label{fig:2}
\end{figure}

At a conceptual level, the learning curves framework complements the PAC framework, offering an alternative perspective for evaluating revenue-maximizing learning algorithms. At a technical level, it introduces new challenges in revenue maximization from samples both for lower and upper bounds.
\begin{enumerate}
    \item (Lower Bounds) Establishing \emph{lower bounds} on learning curves is significantly harder than in the PAC framework, as it requires designing a \emph{single hard} distribution and proving a bound that holds for infinitely many values of \( n \)—a type of guarantee known in statistics as a \emph{strong minimax lower bound}~\citep{antos1996strong}.
    These types of strong minimax lower bounds have been studied in different areas of TCS leading to celebrated results, such as \citet{ben2023new}.
    While the design of hard \emph{sequences} of distributions is quite well understood (with tools such as Fano's inequality or standard constructions using point masses),  crafting a single hard distribution for infinitely many sample sizes requires several new ideas.

    \item (Upper Bounds) 
    At a first sight, upper bounds should be easier in this relaxed framework since any sample complexity achieved by a PAC learner (i.e., all prior works on revenue maximization from samples) directly implies a rate for the learning curves framework. However, these upper bounds might not be \emph{tight}. Indeed, we show several cases where much more optimistic rates can be achieved. For our {upper bounds}, we develop refined variants of the classical Empirical Revenue Maximization (ERM) algorithm, which may have broader applications. Recall that ERM selects a revenue-maximizing price based on the empirical distribution induced by \( n \) samples from \( \D \). 

\end{enumerate}
  
We are now ready to provide a formal treatment of both the PAC model and the learning curves framework.

\subsection{Learning Curves}
We model a learning algorithm as a sequence of (possibly randomized) mappings \( (t_n)_{n \in \N} \) from 
samples drawn from a valuation distribution to prices, i.e., \( t_n: \R_+^n \rightarrow \R_+ \) for sample size \( n \in \N \).
For a given price \( p \), its revenue under \( \D \) ({which is a distribution over valuations}) is given by\footnote{Throughout the paper, we use the convention $F_\D(p) := \Pr_{v\sim \D}[v < p].$}
\[
\rev_\D(p) =  p \cdot (1 - F_{\D}(p)).
\]
{In words, the quantity $1-F_{\D}(p)$ accounts for the mass of the event that the drawn valuation $v$ is (weakly) greater than the price $p$.}
For an algorithm \( (t_n) \), we denote by \( \rev_\D(t_n) \) its revenue, which is a random variable depending on \( S \sim \D^n \) and the internal randomness of the algorithm. The expected revenue is defined as
\[
\E[ \rev_\D(t_n)],
\]
where the expectation is taken over the randomness in \( S \sim \D^n \) and the algorithm. Finally, we define the optimal revenue for \( \D \) as 
\[
\opt_\D = \sup_{p \in \R_+} p\cdot(1- F_\D(p)).
\]
When the distribution is clear from context, we may omit the subscript \( \D \).

A learning curve is a well-established concept in learning theory \citep{stone1977consistent,antos1996strong,bousquet2021theory,schuurmans1997characterizing,antos2001convergence}, and in our setting, it is defined as follows:

\begin{definition}
[Learning Curve]
\label{def:learning-curve}
For any algorithm \( (t_n) \) and a valuation distribution \( \D \), the learning curve of a learning algorithm \( (t_n) \) under \( \D \) is the sequence \((\epsilon_n)\) of expected revenue gaps, where
\[
\epsilon_n(t_n, \D) := \opt_\D - \E [\rev_\D(t_n)].
\]
\end{definition}

An algorithm is a \emph{consistent learner} for a family of distributions \( \mathbb D \) if for any \( \D \in \mathbb D \), it holds that \( \epsilon_n(t_n, \D) \to 0 \) as \( n \to \infty \). For further discussion on consistent learners, see \Cref{proof:BayesConsistency}.  Here, we are interested in the \emph{rate} at which \( \epsilon_n(t_n, \D) \) goes to zero as a function of \( n \), which leads us to contrast the PAC framework with our setting.

\paragraph{PAC Learning.}  
The PAC definition characterizes the \emph{upper envelope} of all learning curves. Specifically, for any sample size \( n \), it takes the pointwise supremum over all valuation distributions in the class. A class of valuation distributions \( \mathbb{D} \) is said to be PAC learnable at rate \( R \) if there exists a learning algorithm \( (t_n) \) such that
\begin{equation}
{\textcolor{red}{(\exists C,c>0)}
\textcolor{blue}{(\forall \D\in \mathbb D)}:
\epsilon_{n}(t_n, \D) \leq C R(cn) \text{ for all }n}.
\label{eq:PAC-UB}
\end{equation}
In words, this ensures that for any sample size \( n \), the worst-case (upper envelope) of the learning curves decays at most as \( C R(cn) \). Crucially, the constants \( c, C \) are \emph{distribution-independent}, meaning the bound holds uniformly over all \( \D \in \mathbb{D} \).

For polynomial rates \( R(n) = n^{-r} \) with \( r > 0 \), the constant \( C > 0 \) alone suffices. However, the additional constant \( c \) accounts for faster rates, such as exponential decay, which we will see are achievable under the less restrictive notion of universal learning (Definition~\ref{def:universal}).

\paragraph{Universal Learning.}
We now introduce a definition that relaxes the uniform requirement in PAC learning and  captures learning curves for individual distributions.

\begin{definition}
[Universal Learning Rate] 
\label{def:universal}
A class of valuation distributions \( \mathbb D \) is \emph{universally learnable} at rate \( R \) if there exists a learning algorithm \( (t_n) \) such that
\begin{equation}
{\textcolor{blue}{(\forall \D\in \mathbb D)}
\textcolor{red}{(\exists C,c>0)}:
\epsilon_{n}(t_n, \D) \leq C R(cn) \text{ for all }n}.
\label{eq:LearningCurve-UB}
\end{equation}
\end{definition}

The key difference from PAC learning (Equation~\eqref{eq:PAC-UB}) is the order of the quantifiers:  
\begin{itemize}
    \item In PAC learning, the bound holds uniformly over all distributions, meaning the constants \( C, c \) are independent of \( \D \).
    \item In universal learning, the constants may depend on the distribution \( \D \), providing a per-distribution guarantee. {The term ``universal''  means that the achieved rate holds for every valuation distribution in the class $\mathbb{D}$, but not \emph{uniformly}
over all distributions.}
\end{itemize}

\paragraph{Strong Minimax Lower Bounds.}  
A key distinction between PAC learning and universal learning lies in how lower bounds are established. In the PAC framework, to derive a lower bound on the (distribution-free) rate, one can construct a sequence of distributions \( (\D_n) \) that vary with the sample size \( n \) and prove the desired lower bound separately for each \( n \).  
In contrast, in the universal framework, a lower bound must hold for \emph{a single valuation distribution} across infinitely many \( n \). That is, one must construct a fixed \( \D \) such that any learning algorithm has a revenue gap of at least \( C R(cn) \) infinitely often.  

Thus, PAC lower bounds have the advantage of adaptively selecting a new worst-case \( \D_n \) at each sample size, whereas universal lower bounds require identifying a single hard instance that remains challenging across all \( n \). This makes universal lower bounds significantly stronger and more difficult to establish.

\subsection{Main Results}
In this section we describe the main results of our work.

\paragraph{Result I} 
Perhaps the most basic question one can ask in this framework is if there exists
a learner whose learning curve converges to zero for \emph{all} underlying distributions. This concept is also referred to as \emph{Bayes-Consistency}.

\begin{defn}[Bayes-Consistency]
Let \(\mathbb D\) be a class of valuation distributions on \(\R_+\). An algorithm \((t_n)_{n\in\N}\) is Bayes-consistent with respect to \(\mathbb D\) if, for every \(\D\in\mathbb D\),
\[
\lim_{n\to\infty}\E[\rev_\D(t_n)] = \opt_\D.
\]
When \(\opt_\D=\infty\), this means that \(\E[\rev_\D(t_n)]\to\infty\).
\end{defn}

Notice that the previous definition allows for distributions whose optimal revenue can be infinite. 
The main question we pose is which are the minimal assumptions on the class $\mathbb D$ that allow for a Bayes-consistent learner. The next result gives a strong positive answer.

\begin{thm}
\label{thm:single-item-bayes-consistency}
    There is a Bayes-consistent learning algorithm for all valuation distributions on $\R_+$.
\end{thm}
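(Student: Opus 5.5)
Our plan is to analyze a \emph{regularized} empirical revenue maximizer that never trusts prices supported by too few samples. Given $S=(v_1,\dots,v_n)$, let $\hat F_n$ be the empirical CDF and choose a threshold $k_n\to\infty$ with $k_n/n\to 0$ (say $k_n=\lceil\sqrt n\,\rceil$, or $k_n=\lceil\log^2 n\rceil$). Let $t_n$ maximize the empirical revenue $p\,(1-\hat F_n(p))$ over those sample points $p$ with $|\{i: v_i\ge p\}|\ge k_n$. Equivalently, we maximize over prices no larger than the $k_n$-th largest sample. Plain ERM can fail here: a single huge sample creates a spurious empirical revenue of $v_{\max}/n$. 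The restriction to prices with many samples above them is what lets the empirical revenue concentrate multiplicatively.

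\textbf{Step 1 (uniform relative deviation).} We will show that, with probability tending to one, every price $p$ with empirical upper tail $q_n(p):=1-\hat F_n(p)\ge k_n/n$ satisfies $|q_n(p)-q(p)|\le \delta_n\, q(p)$, where $q(p)=1-F_\D(p)$ and $\delta_n\to0$. The tool is a relative (ratio-type) VC inequality for the class of half-lines $\{[p,\infty)\}$, of the form
\[
\Pr\bigl[\exists p:\; |q_n(p)-q(p)|>\epsilon\sqrt{q(p)}\bigr]\le 4(2n+1)e^{-n\epsilon^2/4}.
\]
This gives relative error $O\bigl(\sqrt{\log n/(n\,q(p))}\bigr)$, which is $o(1)$ whenever $n\,q(p)\gtrsim k_n\gg\log n$. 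Here $k_n=\lceil\sqrt n\,\rceil$ suffices. Consequently, on this good event, $\rev_\D(t_n)\ge (1-\delta_n)\,t_n\,q_n(t_n)$. Moreover, for any fixed price $p^\ast$ with $q(p^\ast)>0$, the price $p^\ast$ (or the smallest sample $\ge p^\ast$) is eventually feasible. This holds because $n\,q(p^\ast)\gg k_n$.

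\textbf{Step 2 (approximating OPT).} Fix $\eta>0$. Pick a finite price $p^\ast$ with $p^\ast q(p^\ast)\ge \opt_\D-\eta$, or $\ge 1/\eta$ if $\opt_\D=\infty$. Such a price exists by the definition of the supremum, even when the supremum is not attained. By Step 1 and empirical optimality,
\[
\rev_\D(t_n)\ge(1-\delta_n)\,t_nq_n(t_n)\ge(1-\delta_n)\,p^\ast q_n(p^\ast)\ge(1-\delta_n)^2\,p^\ast q(p^\ast)
\]
on the good event. We do have to handle the detail that $p^\ast$ may not be a sample point: we replace it by the smallest sample $\ge p^\ast$, which has the same empirical tail and a larger price. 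Since revenue is nonnegative, $\E[\rev_\D(t_n)]\ge \Pr[\text{good}]\,(1-\delta_n)^2(\opt_\D-\eta)$. Taking $\liminf$ and letting $\eta\to0$ gives consistency, and the same argument gives divergence when $\opt_\D=\infty$. The upper bound $\E[\rev_\D(t_n)]\le\opt_\D$ is trivial.

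\textbf{Main obstacle.} The delicate step is Step 1. We need multiplicative, uniform control over the random, data-dependent set of feasible prices, including prices in the far tail where $q(p)$ is of order $k_n/n$. We also need to rule out the scenario where $t_n$ has large empirical tail but tiny true tail. The relative deviation bound handles this because it is stated in terms of $\sqrt{q(p)}$. Concretely, if $q_n(p)\ge k_n/n$ while $q(p)\le k_n/(2n)$, then the deviation exceeds $\epsilon\sqrt{q(p)}$ once $k_n\gg\log n$. We will first verify this case carefully, bounding $q(p)$ from below in terms of $q_n(p)$ by solving the quadratic inequality $q_n\le q+\epsilon\sqrt q$. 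The rest is bookkeeping with atoms, which the convention $F_\D(p)=\Pr[v<p]$ handles because the half-lines are closed.
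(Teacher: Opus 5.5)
Your route differs from the paper's, and it works once one misquoted tool is repaired. The inequality you display normalizes the deviation by $\sqrt{q(p)}$ in both directions, and that is false in the upward direction. Take a continuous distribution and the price $p$ with $q(p)=n^{-3/2}$. Some sample exceeds $p$ with probability about $n^{-1/2}$, and then $q_n(p)-q(p)\approx 1/n$. This is far above $\epsilon\sqrt{q(p)}=\sqrt{C\log n}\,n^{-5/4}$ for $\epsilon=\sqrt{C\log n/n}$, whereas your right-hand side is $4(2n+1)n^{-C/4}=o(n^{-1/2})$ once $C>6$.

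The standard relative-deviation pair (Vapnik; Anthony--Shawe-Taylor) bounds two different quantities: $\sup_p (q(p)-q_n(p))/\sqrt{q(p)}$ and $\sup_p (q_n(p)-q(p))/\sqrt{q_n(p)}$. Each is controlled by $4(2n+1)e^{-n\epsilon^2/4}$ for half-lines. This is exactly what you need, and it removes the quadratic. Every feasible $p$ has $q(p)\ge q_n(p)-\epsilon\sqrt{q_n(p)}\ge q_n(p)\bigl(1-\sqrt{C\log n/k_n}\bigr)$. At the fixed $p^\ast$, the first inequality (or a single Chernoff bound) gives $q_n(p^\ast)\ge (1-o(1))\,q(p^\ast)$, hence feasibility of the smallest sample $\ge p^\ast$. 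With this, your Step 2 chain goes through unchanged.

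The paper avoids multiplicative concentration by capping prices deterministically. It runs ERM over $[0,\log n]$, so the additive DKW bound at accuracy $\varepsilon/(4\log n)$ gives a uniform additive revenue error of at most $\varepsilon/4$ over all candidates. A fixed near-optimal price $p_\varepsilon$ (or a price with revenue at least $M$ when the optimum is infinite) becomes eligible once $\log n\ge p_\varepsilon$. That proof uses nothing beyond DKW and has no data-dependent feasible set. It is also the same capped-ERM template the paper reuses for its near-$1/\sqrt n$ rate.

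Your guard (ERM below the $k_n$-th largest sample, in the spirit of guarded ERM) costs a ratio-type VC inequality. In exchange, it is scale-equivariant and adapts to where the mass lies: $p^\ast$ becomes usable as soon as $n\,q(p^\ast)\gg k_n$. The $\log n$ cap, by contrast, depends on the units of the valuations and keeps $p_\varepsilon$ out of reach until $n\ge e^{p_\varepsilon}$. Your approach also yields a multiplicative guarantee, which handles the finite and infinite optimum cases in one argument.
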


This result mirrors classical results of classification and regression in statistical learning theory, where the nearest neighbors algorithm is universally consistent (under assumptions) \citep{stone1977consistent,devroye2013probabilistic}. The algorithm for revenue maximization is quite intuitive: for any sample size $n$, truncate the prices on the interval $(0, \log n)$ and then compute the ERM price on that set of prices. In \Cref{proof:BayesConsistency}, we prove that this strategy eventually converges to the optimal revenue (which is allowed to even be infinite). While this result is not very technically involved, it is an important starting point of our exploration. 

{Note that this result does not imply any learning rate; the guarantee applies only in the limit and does not provide information about the rate of convergence.}
Thus, it is natural to ask what kind of convergence guarantees we can obtain. In order to quantify the rate of decay
of the learning curves, 
we need to consider valuation distributions with finite optimal revenue, as otherwise, the rate is not properly defined. We make this assumption throughout.

\paragraph{Result II} 
{Our next result shows that, with only the assumption that the optimal revenue is finite, no algorithm can ensure anything better than an arbitrarily slow rate of convergence.}

\begin{thm}[Arbitrarily Slow Rates]
    Let $\mathbb{D}$ be the class of valuation distributions $\mathcal{D}$ for which $\opt_\D < \infty$. Then, $\mathbb{D}$ requires \emph{arbitrarily slow} rates. 
\end{thm}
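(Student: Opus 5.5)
The formal meaning of ``requires arbitrarily slow rates'' (following \citet{antos1996strong,bousquet2021theory}) is the following. For every sequence $R(n)\to 0$ and every learning algorithm $(t_n)$, there is a distribution $\D\in\mathbb D$ such that $\epsilon_n(t_n,\D)\ge R(n)$ for infinitely many $n$. Our plan is to build, for a given algorithm and rate $R$, a single hard distribution $\D$ by a diagonalization argument. The distribution will have finite $\opt_\D$, but its optimal revenue will be approached only at higher and higher prices, via mass placed far out in the tail. We may assume $R(n)\le 1/16$ and that $R$ is nonincreasing; otherwise we replace $R$ by a larger such sequence tending to zero.

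\textbf{The building blocks.} We work with a family of distributions indexed by binary sequences $b\in\{0,1\}^{\mathbb N}$. Fix rapidly increasing prices $x_1<x_2<\dots$ and masses $q_k$ with $q_k x_k = c_k$, where $\sum_k q_k<1$. Put an atom of mass $q_k$ at $x_k$ when $b_k=1$, and a smaller atom at $x_k$ when $b_k=0$ (or move that mass to a low value), with the remaining mass at a base value such as $1$. Choose the parameters so that the revenue at price $x_k$ is about $x_k\sum_{j\ge k}q_j\approx c_k$, because the tail beyond $x_k$ is dominated by $q_k$ when the $x_j$ grow fast. Two properties are needed. First, whether $x_k$ is (near-)optimal or loses a gap of order $c_k$ depends on the bit $b_k$. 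Second, $\opt_\D\le \sup_k c_k+1<\infty$ holds for every $b$. A cleaner alternative, which we would try first, is to make the optimal revenue a \emph{supremum not attained}. Take $\opt=1$, achieved only in the limit $x_k\to\infty$ with the revenue at $x_k$ equal to $1-\delta_k$ for some $\delta_k\downarrow 0$ that we choose. Under this design, any price in the range where the sample gives no information loses at least the corresponding $\delta_k$.

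\textbf{Indistinguishability at sample size $n_k$.} Choose a subsequence $n_1<n_2<\dots$ and set $q_k\ll 1/n_k$, so that with probability at least $1/2$ none of the $n_k$ samples lands at any atom $x_j$ with $j\ge k$. On that event the sample cannot reveal the bits $b_j$ for $j\ge k$. Yet the loss of any price the algorithm chooses still depends on those bits: posting a price around $x_k$ is good under $b_k=1$ and bad under $b_k=0$. Posting a low price is bad in both cases, with loss of order $c_k$ if we normalize so that the revenue in the tail region exceeds the low-price revenue by $c_k$. A standard averaging argument over a uniformly random $b$ then gives the following. For each $k$, $\E_b[\epsilon_{n_k}(t_{n_k},\D_b)]\ge c\cdot c_k$, and we choose $c_k=C'R(n_k)$ with $n_k$ large enough that $R(n_k)$ is small relative to the scale fixed by the earlier levels.

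\textbf{From average to a single distribution, which we expect to be the main obstacle.} We need one $b$ that works for infinitely many $k$ simultaneously. The per-$k$ averaging bound does not immediately give this, because the events ``the loss at $n_k$ is at least $R(n_k)$'' are coupled across $k$. We plan to follow the approach of \citet{antos1996strong} and \citet{bousquet2021theory}. Since each loss is bounded by $\opt_\D\le O(1)$ and $\E_b[\text{loss}_k/c_k]\ge c$, a reverse Markov inequality gives $\Pr_b[\text{loss}_k\ge (c/2)c_k]\ge c'$ for every $k$. Fatou's lemma applied to the indicators then shows that, with positive probability over $b$, this event occurs for infinitely many $k$. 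Fixing such a $b$ gives $\epsilon_{n_k}\ge R(n_k)$ infinitely often, after rescaling $c_k$ by a constant.

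The delicate point is the choice of $(x_k,q_k,n_k)$, made inductively. The later atoms must neither change which price is optimal at earlier scales by more than a small fraction of $c_k$, nor become visible in samples of size $n_k$. This forces $x_{k+1}\gg x_k$, $q_{k+1}\ll q_k$, and $n_{k+1}$ large enough that $R(n_{k+1})\ll c_k$. Verifying these inequalities is the part we expect to require the most care.
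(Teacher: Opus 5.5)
There is a genuine gap: nothing in your construction stops the learner from posting a price deep in the unseen tail. At sample size $n_k$, every level $j\ge k$ is invisible, not only level $k$. Yet your averaging step only weighs a price ``around $x_k$'' against a ``low'' price.

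\begin{itemize}
\item In your first design, the penalty attached to bit $b_j$ is of order $c_j\to 0$. A learner posting $x_j$ with $j\gg k$ therefore pays only $O(c_j)\ll c_k$ in expectation over the bits, so $\E_b[\epsilon_{n_k}(t_{n_k},\D_b)]\ge c\,c_k$ does not follow. As written, this design is also inconsistent: revenue at $x_k$ is about $c_k\to 0$, while price $1$ earns $1$. So the optimum sits at a fixed low price and the late bits are irrelevant.
\item Your ``cleaner alternative'' fails more sharply, because the $x_k$ are fixed independently of the algorithm. Take the data-independent learner $t_n=x_{m(n)}$, with $m(n)$ growing fast enough that $\delta_{m(n)}\le R(n)/2$; it beats that single distribution. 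Only prices up to $x_k$ are forced to lose $\delta_k$; a price at $x_j$ with $j>k$ loses only $\delta_j$.
\item Separately, your reverse-Markov step is wrong. From a loss bounded by $O(1)$ with mean at least $c\,c_k$, you only get $\Pr_b[\text{loss}_k\ge c\,c_k/2]$ of order $c_k$, which tends to $0$ rather than staying above a uniform constant. You need a constant-probability event, over $b$ and the sample, on which the loss is at least $c\,c_k$. Then apply reverse Fatou to the bounded conditional probabilities, as in the paper's $o(n^{-1/2})$ lower bound.
\end{itemize}

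The paper supplies the missing ingredient by diagonalizing against the algorithm deterministically.
\begin{itemize}
\item Having fixed $i_1<\dots<i_j$, it enumerates the $j^j$ datasets of size $j$ drawn from this prefix. It picks $c_j$ such that on each of them the randomized learner outputs a price above $c_j$ with probability at most $R(j)/4$.
\item It places the next support point at $i_{j+1}>c_j$, with tail mass $P_{j+1}\le R(j)/(2j)$ and $i_{j+1}P_{j+1}=2-R(j)$. Hence $\opt=2$ is not attained.
\item A sample of size $j$ then stays in the prefix with probability at least $1-R(j)/2$. On that event the output is at most $i_{j+1}$ except with probability $R(j)/4$. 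Every price in $[0,i_{j+1}]$ has revenue at most $2-R(j)$.
\end{itemize}
This yields loss at least $R(j)/4$ for all large $j$, with no averaging needed.

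If you want to keep a randomized family with algorithm-independent positions, you must make the penalty for a bad bit non-vanishing. This is in the spirit of your parenthetical option of moving the mass to a low value. For instance, set $\Pr[X\ge x_j]=r_j/x_j$, with $r_j=1-\delta_j$ if $b_j=1$ and $r_j=1/2$ if $b_j=0$, and take $x_k\gg n_k$ so the sample is nearly independent of the hidden bits. Then every price in $(x_{j-1},x_j]$ with $j\ge k$ loses at least $1/2$ with probability about $1/2$. Prices up to $x_{k-1}$ lose at least $\delta_{k-1}$.
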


This result provides a rather strong lower bound: for any algorithm \( t_n \) and any rate function~\(R(n)\), there exists a \emph{fixed} hard distribution \( \D \) for which the learning curve \( \epsilon_n(t_n, \D) \) decays no faster than~\(R(n)\). In fact, our hard distribution is supported on $\N.$ 
Our lower bound construction uses a deterministic argument and does not follow the same path as other arbitrarily slow universal rates lower bounds \citep{bousquet2021theory}.
Inspecting the lower construction, an important technical detail becomes apparent: its \emph{optimal revenue is never attained} by any finite price \( p \in \mathbb{R}_+ \) 
(see \Cref{sec:ex-ditr-no-opt-price} for more examples of distributions with this property). It is, thus, natural to ask: does the landscape change if we exclude such distributions?

\paragraph{Result III} 
{We next show that, once the optimal revenue is realized at a finite price, the difficulty underlying our lower bound disappears, allowing for meaningful convergence guarantees.}

\begin{thm}
\label{thm:NearSqRootRates}
Let \( \mathbb{D} \) be the class of distributions \( \D \) s.t.~$\opt_\D <\infty$ and is achievable by a finite price~$p^*_\D$. Then,
\begin{itemize}
    \item for any rate $R(n) \in \omega(n^{-1/2}),$ the class $\mathbb D$ is universally learnable at rate $R(n)$, and
    \item conversely, for any rate \( R(n) \in o(n^{-1/2}) \), the class \( \mathbb{D} \) is not universally learnable at a rate faster than \( R(n) \). Specifically, for any such rate \( R(n) \) and any algorithm \( (t_n) \), there exists a fixed hard distribution \( \D \) such that the revenue gap \( \epsilon_n(t_n, \D) \) is at least \( C R(cn) \) infinitely often.
\end{itemize}

\end{thm}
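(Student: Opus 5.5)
For the upper bound I would use a structural-risk-minimization variant of ERM that restricts the price range in a way that slowly grows with $n$. Fix $R(n)\in\omega(n^{-1/2})$ and write $R(n)=g(n)/\sqrt n$ with $g(n)\to\infty$. At sample size $n$ the algorithm considers only prices in $(0,B_n]$, where $B_n\to\infty$ is chosen slowly enough that $B_n=o(g(n))$; for instance $B_n=\sqrt{g(n)}$ suitably monotonized. It then outputs the empirical revenue maximizer $\hat t_n$ over this range.

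Fix $\D$ with optimal price $p^*<\infty$. For all $n$ with $B_n\ge p^*$, which holds for all but finitely many $n$, the gap splits into two terms. First, $\opt-\rev(\hat t_n)\le 2\sup_{p\le B_n}|\widehat{\rev}(p)-\rev(p)|$. Second, $\sup_{p\le B_n}|\widehat{\rev}(p)-\rev(p)|\le B_n\sup_p|\hat F_n(p)-F(p)|$, since the threshold functions form a VC class; the DKW inequality then gives expected error $O(B_n/\sqrt n)=o(g(n)/\sqrt n)$. The finitely many small $n$ are absorbed into the distribution-dependent constant $C$, using that the gap is at most $\opt<\infty$. This yields $\epsilon_n\le C R(n)$ for all $n$.

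For the lower bound I need a single distribution that forces gap $\ge R(n)$ infinitely often, where $R=o(n^{-1/2})$. The natural route is a two-point distribution whose revenue curve has two nearly tied maxima, which an algorithm cannot resolve with $n$ samples. For a single fixed $\D$, however, the gap is $\Theta(1)$ at large $n$ only if the tie is exact. With an exact tie every price in $\{a,b\}$ is optimal and there is no loss, so that construction fails. The step I expect to be the main obstacle is therefore designing one fixed hard distribution, and my first plan is a strong-minimax argument over a parametrized family. Consider the two-point distributions $\D_\theta$ with $v=1$ with probability $1-q_\theta$ and $v=2$ with probability $q_\theta=1/2+\theta$, so that $\rev(2)-\rev(1)=2\theta$. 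Take $\theta=\pm\delta$ with $\delta\approx n^{-1/2}$. A Le Cam two-point bound shows that any algorithm loses $\Omega(\delta)$ on one of the two at sample size $n$. Hence for each $n$ a constant fraction of parameters in a small interval are bad.

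To fix a single $\D$, I would construct a nested sequence of closed intervals $I_1\supset I_2\supset\cdots$ of $\theta$ and sample sizes $n_1<n_2<\cdots$, in the spirit of the Bousquet et al.\ constructions. At stage $k$, choose $n_k$ large and pick a subinterval around a point where the algorithm's expected gap at $n_k$ is $\gtrsim n_k^{-1/2}\ge R(n_k)$ up to constants. By continuity of the expected gap in $\theta$ at a fixed $n_k$ (the sample law is continuous in $\theta$), this property persists on a small neighborhood, and that neighborhood becomes $I_{k+1}$. The limit $\theta^\ast\in\bigcap I_k$ then defines a fixed distribution with finite optimal price that satisfies the bound at every $n_k$. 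The delicate point is that the averaging argument needs the gap to be nonzero at the shrinking scale while $\theta^\ast$ stays bounded away from $0$ (otherwise the tie is exact). I would handle this by centering each Le Cam pair at the current interval's midpoint: the distinguishing problem between $\theta_0\pm\delta$ is equally hard for any $\theta_0$ provided the revenue difference is replaced by a third price option. In practice this means using a three-point support, so that the near-tie can be maintained around a generic base point, and then absorbing constants into $C,c$.
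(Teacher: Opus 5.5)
Your upper bound is the paper's own argument: capped ERM over $[0,B_n]$, DKW in expectation, and the finitely many small $n$ absorbed into the constant. It is fine.

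\textbf{The gap is in the lower bound, and the three-point fix does not close it.} Any nested-interval scheme outputs a single parameter $\theta^*$, hence one fixed finitely supported distribution. By \Cref{Thm:Finite}, plain ERM has $\epsilon_n\le c_1e^{-c_2 n}$ on every such distribution. So against ERM with, say, $R(n)=1/n\in o(n^{-1/2})$, no member of your family can serve as the hard instance.

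Inside your scheme the failure shows up as follows. The pair $\theta_0\pm\delta$ is indeed equally hard to distinguish for every $\theta_0$, but this costs revenue only if the two hypotheses have \emph{different} optimal prices. That requires $\theta_0$ to be within $O(\delta)$ of a tie of the revenue curve; a third support point only relocates the ties. Against ERM, imposing this at every scale $\delta_k\approx n_k^{-1/2}\to 0$ drives $\theta^*$ onto an exact tie, where all tied prices are optimal and the loss vanishes. Keeping $\theta^*$ off ties instead makes ERM's loss exponentially small.

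There is also a quantitative obstruction. Your per-stage target is an expected gap $\gtrsim n_k^{-1/2}$ with $k$-independent constants. For ERM this cannot hold for infinitely many $n$ on any fixed bounded-support distribution, because \Cref{Thm:Bounded} gives $o(n^{-1/2})$ there. The per-level loss must be allowed to degrade.

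What is missing is an \emph{infinitely} supported distribution that carries a fresh near-tie at each of infinitely many scales. Its level-$k$ loss is of order $\sqrt{p_k/n_k}$, where the mass $p_k$ of the new point tends to $0$; this is exactly where $R\in o(n^{-1/2})$ enters. The paper builds this distribution in $(1/2,1]$ plus an atom at $0$:
\begin{enumerate}
\item Given the current candidate optimum $x$ with revenue $xq$, it adds the point $x_{p,q}=\frac{q}{p+q}x$, whose revenue with the extra mass $p$ is again exactly $xq$.
\item It gives this point mass $p\pm\gamma$, so the sign decides on which side the optimum lies, with margin $\gamma/4$ (\Cref{lem:uniform-construction}).
\item It takes $\gamma_k=\sqrt{cp_k/n_k}$. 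By the small-bias coin lemma the sign is then undetectable from $n_k$ samples, because the KL divergence is of order $\gamma^2/p$, not $\gamma^2$.
\item It chooses $n_k$ after $p_k$, so that $R(n_k)\le\frac{c'}{16}\gamma_k$.
\item Later points have total mass small enough that $n_k$ samples miss them with probability at least $1-c'/2$. They also sit inside the $\pm\gamma^2$ tolerances of earlier levels, so earlier comparisons are undisturbed.
\item The signs are drawn i.i.d.\ uniform along the resulting infinite binary tree, and reverse Fatou extracts one fixed sign sequence that is bad for infinitely many $k$.
\end{enumerate}
Since the support lies in $[0,1]$, the optimum is attained, so this hard distribution belongs to the class in the statement.
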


Recall that $R(n) \in \omega(n^{-1/2})$ means that $\lim_{n \to \infty} R(n) \cdot \sqrt{n} = \infty$. For instance, some valid rates are $R(n) = \log n/\sqrt{n}$ or $R(n) = 1/n^{1/3}$. This means that for any rate arbitrarily close to $1/\sqrt{n}$ (but slower), there is an algorithm that learns $\mathbb D$ at that rate.
We refer the reader to \Cref{def:omega-R-opt} for a formal definition of ``universally learnable at rate $\omega(R(n))$''. Thus, this result gives an (almost) tight bound of $1/\sqrt{n}.$ The lower bound construction is significantly involved, and is, perhaps, the most technically challenging construction in our work. To obtain this result, we create an ensemble of uncountably many hard distributions and, through a probabilistic argument, we show that for any algorithm, one of them will illustrate the slow rates. We elaborate more on it in \cref{sec:tech-overview}.

Interestingly, the above result can be improved for distributions with bounded support distributions, yielding \emph{sharp}
rates for this setting.

\begin{thm}
[Bounded Support]
\label{Thm:Bounded}
Let \( \mathbb{D} \) be the class of all distributions \( \D \) with a bounded support.  
Then, \( \mathbb{D} \) is universally learnable at an optimal \( o(n^{-1/2}) \) rate.
\end{thm}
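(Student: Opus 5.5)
The statement has two parts, and I would prove them separately. The upper bound says that a single algorithm achieves $\epsilon_n = o(n^{-1/2})$ for every bounded-support $\D$. Optimality says that no uniform improvement is possible: for every $R(n)\in o(n^{-1/2})$ and every algorithm there is a bounded-support $\D$ with $\epsilon_n \ge C R(cn)$ infinitely often.

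\textbf{Lower bound.} I would reuse the lower-bound construction behind \Cref{thm:NearSqRootRates}. I would first check whether its uncountable ensemble of hard distributions can be taken supported on a fixed interval $[0,H]$. I expect this is possible, or can be arranged by truncation, since the $n^{-1/2}$ hardness comes from two competing near-optimal prices whose revenues differ by a vanishing amount, not from heavy tails. If that check succeeds, the probabilistic argument in that proof transfers verbatim.

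\textbf{Upper bound: set-up.} The algorithm is plain ERM over prices, made measurable by restricting to sample points. Fix $\D$ supported on $[0,H]$. Let $\Delta(p)=\opt_\D-\rev_\D(p)$ and $G_n(p)=\widehat{\rev}_n(p)-\rev_\D(p)$. Since $\rev_\D$ is upper semicontinuous and the support is bounded, the set $A_0$ of optimal prices is nonempty and compact. Optimality of the ERM price $\hat p$ gives, for every $q\in A_0$,
\[
\Delta(\hat p)\le G_n(\hat p)-G_n(q),
\qquad\text{hence}\qquad
\Delta(\hat p)\le \inf_{q\in A_0}\bigl(G_n(\hat p)-G_n(q)\bigr).
\]
The class $\{v\mapsto p\,\ind\{v\ge p\}\}$ together with $\{v\mapsto p\,\ind\{v> p\}\}$, for $p\in[0,H]$, is a bounded VC-type class, so it is Donsker. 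Two consequences follow: $\sqrt n\,G_n$ converges to a tight Gaussian process, and $\Delta(\hat p)\to0$ almost surely, because uniform convergence gives $\Delta(\hat p)\le 2\sup|G_n|$.

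\textbf{Upper bound: main step.} I would show $\inf_{q}\sqrt n\,\bigl(G_n(\hat p)-G_n(q)\bigr)\to0$ in probability. Asymptotic equicontinuity in the $L_2(\D)$ semimetric reduces this to the following deterministic claim:
\[
\text{as }\Delta(p)\to0,\qquad \inf_{q}\ \bigl\|p\,\ind\{v\ge p\}-q\,\ind\{v\ge q\}\bigr\|_{L_2(\D)}\to0,
\]
where $q$ ranges over $A_0$ together with the right-limit versions $q^+$. I would argue by compactness. If $p_k\to q$ with $\Delta(p_k)\to0$, then either $p_k\uparrow q$, in which case $\ind\{v\ge p_k\}\to\ind\{v\ge q\}$ in $L_2$ and $q\in A_0$ by semicontinuity; or $p_k\downarrow q$, in which case the limit is $q\,\ind\{v>q\}$ and the supremum is approached from the right. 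Handling the second case, possibly by enlarging the index class to include $q^+$ and checking that the bound $\Delta(\hat p)\le G_n(\hat p)-G_n(q^+)$ still holds via a limiting argument, is the step I expect to be the main obstacle. Finally, $\sqrt n\,\Delta(\hat p)$ is uniformly integrable, since it is bounded by $2\sup\sqrt n|G_n|$, which has sub-Gaussian tails by DKW. Convergence in probability then upgrades to $\sqrt n\,\epsilon_n\to0$.
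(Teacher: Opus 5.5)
\textbf{The lower bound is missing.} Your upper-bound argument is sound (see below). The optimality half, however, is where the real difficulty of the theorem lies, and your proposal contains no argument for it. Deferring to the lower bound of \Cref{thm:NearSqRootRates} does not help. In this paper that lower bound has no separate construction: it is obtained from the $[0,1]$ construction of \Cref{thm:LowerBoundBounded}, which is exactly the statement you need, so the reduction assumes what is to be proved. Your intuition, two competing near-optimal prices whose revenues differ by a vanishing amount, is the PAC-style argument, which uses a different $\D_n$ for each $n$. For one fixed $\D$, any two fixed prices have a fixed revenue gap, which ERM resolves exponentially fast. An arbitrary algorithm could even be hard-wired to output the correct price. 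So no single two-price gadget can force $\epsilon_n(t_n,\D)\ge R(n)$ for infinitely many $n$ against every algorithm.

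What is needed is a way to embed infinitely many hardness scales in one distribution and to defeat all algorithms at once. The paper does this as follows.
\begin{itemize}
\item For each sign sequence $\sigma\in\{-1,1\}^{\N}$ it builds a distribution $\D_\sigma$ on $\{0\}\cup(1/2,1]$, supported on a decreasing chain of points.
\item At level $k$ a new point receives mass $p_k+\sigma_{k+1}\gamma_k$, with $\gamma_k=\sqrt{c\,p_k/n_k}$. \Cref{lem:uniform-construction} ensures that any price on the wrong side of the midpoint between this point and the current candidate optimum loses at least $\gamma_k/4$. \Cref{lem:anti-bernstein} ensures that $n_k$ samples cannot reliably reveal $\sigma_{k+1}$.
\item Since $p_k$ is fixed before $n_k$ is chosen and $R(n)\sqrt n\to0$, $n_k$ can be taken so large that $\gamma_k/4$ dominates $R(n_k)$. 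This is exactly where the hypothesis $R\in o(n^{-1/2})$ enters.
\item Deeper masses are made so small that they are unlikely to appear among $n_k$ samples. The $\pm\gamma^2$ slack in \Cref{lem:uniform-construction} keeps each level's guarantee valid whatever the later signs are.
\item Drawing $\sigma$ uniformly at random makes the learner err with constant probability at every level. A reverse Fatou argument then yields a single $\sigma$ whose learning curve exceeds $R(n)$ at infinitely many $n$.
\end{itemize}

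\textbf{Upper bound: correct, via a different route.} You and the paper share three ingredients: ERM, the basic inequality $\opt_\D-\rev_\D(\hat p)\le G_n(\hat p)-G_n(q)$ for optimal $q$, and the same deterministic fact. That fact is \Cref{lemma:Sequences}: the paper's $\Delta(\epsilon)=\sup_{t\in T(\epsilon)}\inf_{t'\in T^*}\max\{|t-t'|,g_\D(t,t')\}$ tends to $0$, and this quantity controls $\|f_t-f_{t'}\|_{L_2(\D)}^2$. The paper then argues nonasymptotically. DKW places $\hat t_n$ in $T(C'\sqrt{\log n/n})$, and the localized Bernstein bound of \Cref{lem:uniform-bernstein} bounds the gap by $C\sqrt{\Delta\log(1/\Delta)/n}+C\log n/n$, with $\Delta$ evaluated at that level. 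Your argument via the Donsker property, asymptotic equicontinuity and uniform integrability avoids that specialized inequality and uses only standard empirical-process facts. The cost is that it gives no explicit form for the $o(1)$ factor.

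The step you flag as the main obstacle is easy; it is precisely the last step of the paper's proof of \Cref{lemma:Sequences}. Suppose $p_k\downarrow q$ strictly and $\rev_\D(p_k)\to\opt_\D$. Then $\opt_\D=q\,\D((q,\infty))\le\rev_\D(q)\le\opt_\D$, so $q\,\D(\{q\})=0$ and $q\in A_0$. Hence $q\ind\{v>q\}=q\ind\{v\ge q\}$ $\D$-almost surely, and no right-limit functions need to be added to the class.
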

We refer the reader to Definition~\ref{def:o-R-opt} for a formal definition of ``universally learnable at an optimal \( o(n^{-1/2}) \) rate'' and proceed with a semi-formal explanation. 
This rate is optimal in the sense that:
\begin{itemize}
    \item There exists an algorithm \( (t_n) \) that, for any \( \D \in \mathbb{D} \), estimates the optimal revenue at some rate \( R_\D(n) \) that belongs to the class of \( o(n^{-1/2}) \), where the specific rate depends on \( \D \).  
    \item Conversely, for any rate \( R(n) \in o(n^{-1/2}) \), the class \( \mathbb{D} \) is not universally learnable at a rate faster than \( R(n) \): for any such rate \( R(n) \) and any algorithm \( (t_n) \), there exists a fixed hard distribution \( \D \) such that the revenue gap \( \epsilon_n(t_n, \D) \) is at least \( C R(cn) \) infinitely often.
\end{itemize}

Having established these results, it is natural to ask. Are $\sqrt{1/n}$ rates the best we can hope for under this framework?

\paragraph{Result IV} This brings us to our last main result, which shows that this is not necessarily the case. For that,  we consider the setting where the support of every distribution is a discrete and closed set. Discreteness of the support is natural since in real-world prices are discrete (e.g., multiples of a cent). {A bit more formally,} a subset $\mathcal X$ of $\R_+$ is \emph{closed and discrete} if its elements are isolated and it has no limit points; {e.g., the natural numbers or a uniform $\epsilon$-cover of $\R_+$ are closed and discrete sets; the set $\{1/n : n \in \N\}$ is not closed since it accumulates at 0 while the set  
$\{0\} \cup \{1/n : n \in \N\}$ is not discrete since $0$ is not isolated. Moreover,  
$\mathbb Q \cap \R_+$ is not discrete since every interval contains infinitely many rational points.}

\begin{thm}[Discrete Support]   
\label{Thm:Discrete}   
Let $\mathbb{D}$ be the class of valuation distributions $\mathcal{D}$ whose support is discrete and closed, and such that $\opt_\D < \infty$ is achieved by some $p^*_\D.$ Then,
\begin{enumerate}
    \item $\mathbb{D}$ is universally learnable at an \emph{(almost) exponential rate} $e^{-o(n)}$, but
    \item ERM is \emph{not} a Bayes-consistent learner for $\mathbb{D}.$
\end{enumerate}   
\end{thm}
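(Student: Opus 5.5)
For the first part, I would not use plain ERM. Instead I would use a regularized, structural-risk-minimization variant: restrict attention to prices that appear in the sample and are at most some slowly growing threshold $B_n$ (say $B_n\to\infty$ with $B_n = o(n)$ chosen so the final rate is $e^{-o(n)}$), and maximize the empirical revenue over these. Fix $\D\in\mathbb D$ with support $\mathcal X$ closed and discrete, and an optimal price $p^*\in\mathcal X$ (an optimal price may be taken in the support, since revenue only increases when $p$ is moved up to the next support point). The candidates at most $B_n$ are then only finitely many, namely $\mathcal X\cap[0,B_n]$.

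The key quantity is the gap $\Delta_B := \opt - \sup\{\rev(p): p\in\mathcal X\cap[0,B],\ \rev(p)<\opt\}$. It is strictly positive for each fixed $B$, because that set is finite. For $n$ large enough, $B_n\ge p^*$, and $p^*$ appears in the sample with probability at least $1-(1-\Pr[v=p^*])^n$, which is exponentially close to $1$. Hoeffding's inequality plus a union bound over the at most $|\mathcal X\cap[0,B_n]|$ candidates shows that every empirical revenue is within $\Delta_{B_n}/3$ of the true revenue, except with probability $|\mathcal X\cap[0,B_n]|\exp(-c n\Delta_{B_n}^2/B_n^2)$. On this event the selected price is optimal. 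Off it, the loss is at most $\opt$.

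If I let $B_n$ grow slowly enough, the failure probability is $e^{-o(n)}$ but still decays faster than $e^{-\epsilon n}$ fails to hold only by a sub-exponential factor. To make this uniform over $\D$ without knowing $\Delta$, I would define $B_n$ through an explicit slowly growing sequence, for instance $B_n=\log\log n$. For each fixed $\D$, $\Delta_{B_n}$ and $|\mathcal X\cap[0,B_n]|$ then vary slowly with $n$, giving $\epsilon_n\le C e^{-c\, n/g(n)}$ for some $g\to\infty$ depending on $\D$. I expect this tuning to be the main obstacle. $\Delta_B$ can shrink as $B$ grows, since more near-optimal points enter, and the count of support points below $B$ can blow up. What saves us is that we only need a statement for each fixed $\D$ and we are allowed $e^{-o(n)}$, so I would formalize "almost exponential" as in the paper's $o$/$\omega$ rate definitions: for every rate $R(n)=e^{-o(n)}$ that is slower than exponential, we achieve it, with $B_n$ chosen as a function of $R$.

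For the second part I would construct a single distribution on $\N$ on which ERM fails. Put mass $q_0$ on a moderate value $a$ whose revenue $a\,q_0$ is the unique optimum, and put masses $q_k$ on huge values $v_k$ with $v_k q_k$ slightly below $\opt$, so that $\opt$ is finite and attained at $a$. Choose $q_k\approx 1/v_k$ but arrange, say with $v_k = 2^{2^k}$ and the $q_k$ summable, that for each $n$ there is some $k$ with $q_k n$ of constant order. With constant probability exactly one sample then lands at $v_k$. The empirical revenue of $v_k$ is then $v_k/n$, which greatly exceeds $a$ when $v_k\gg n$, and ERM selects a price of revenue bounded away from $\opt$; alternatively ERM selects the largest sample. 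Making the masses decay so that $\Pr[\max \text{sample}\ \text{has empirical revenue} > \opt]$ stays bounded below for infinitely many $n$ gives $\liminf_n \epsilon_n>0$, so ERM is not Bayes-consistent.
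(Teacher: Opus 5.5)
\textbf{Part 2.} Your second part is essentially the paper's construction. The paper uses support $\{1\}\cup\{4^k\}$ with $\rev_\D(4^k)=1/2$ and $\opt_\D=1$ attained at price $1$, and takes $n_k=4^k$. With constant probability two samples are $\ge 4^k$, so the empirical revenue of $4^k$ is at least $2>1$. Your parameters need tidying, though. With $q_k\approx 1/v_k$ you cannot have both $nq_k=\Theta(1)$ and $v_k\gg n$; take $n$ a constant fraction of $v_k$ instead. The revenue of $v_k$ must sit a constant amount below $\opt_\D$, not ``slightly'' below, or the loss vanishes. And the conclusion is $\limsup_n\epsilon_n>0$, not $\liminf$.

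\textbf{Part 1: the gap.} Capped ERM with a threshold $B_n$ fixed before $\D$ does not achieve $e^{-g(n)}$ for, say, $g(n)=\sqrt n$. The step ``for fixed $\D$, $\Delta_{B_n}$ varies slowly'' is unjustified. $\Delta_B$ is only known to be positive, and since $\D$ is chosen after the algorithm, it can shrink arbitrarily fast relative to $B_n$.

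Here is a concrete counterexample. Let $N_k$ be the first $n$ with $B_n\ge v_k$. Choose $v_1\ge 2$, $v_{k+1}\ge 2v_k$ and $v_{k+1}>B_{N_k}$. Set $\Pr[V\ge v_k]=(1-\delta_k)/v_k$ with $\delta_k=\min\{1/2,\min_{j\le k}\sqrt{v_j/N_j}\}$, and put the remaining mass on $1$.
\begin{itemize}
\item Then $\D\in\mathbb D$, $\opt_\D=1$ is attained only at price $1$, and the empirical revenue of price $1$ is always exactly $1$.
\item At $n=N_k$, the number of samples $\ge v_k$ exceeds $n/v_k$ with probability bounded below by a constant. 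The required excess is a bounded number of standard deviations, and $n/v_k\to\infty$ since $B_n=o(n)$.
\item On that event ERM outputs some $v_j$ with $j\le k$. Since $\delta_k$ is nonincreasing, it loses at least $\delta_k\ge 1/\sqrt{N_k}$ for large $k$.
\end{itemize}
So $\epsilon_n\gtrsim n^{-1/2}$ infinitely often. No choice of cap repairs this, because capping does not stop a near-optimal price above $p^*$ from winning on a tail fluctuation.

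\textbf{The missing idea.} The paper exploits an asymmetry. Support points below the first optimal price $p_{i^*}$ are finitely many, so they have a fixed gap $d>0$. Points above $p_{i^*}$ may be arbitrarily close to optimal. One must therefore bias toward small prices instead of capping. The paper's structural ERM outputs the largest sorted sample point $p_i$ that beats every earlier sample point $p_j$ by $(p_j+p_i)f(n)$, with $f=o(1)$ and $nf(n)^2\ge g(n)$.
\begin{itemize}
\item Rightward errors: on the DKW event $\sup_x|F_n(x)-F(x)|\le f(n)$, no point to the right of $p_{i^*}$ can beat $p_{i^*}$ by this margin, whatever its true revenue, since that revenue is at most $\opt_\D$. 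These errors cost only $2e^{-2nf(n)^2}\le 2e^{-2g(n)}$, with no dependence on near-optimal gaps.
\item Leftward errors: once $2(p_j+p_{i^*})f(n)<d$ for the finitely many $p_j<p_{i^*}$, the point $p_{i^*}$ wins. It is present in the sample with probability at least $1-e^{-m^*n}$.
\end{itemize}
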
  
 
In this result, the achievable rate is slightly slower than exponential but \emph{arbitrarily close to it}:  
for every function \( R(n) \in o(n) \), we can achieve a rate of \( e^{-R(n)} \) for that class.  
Examples of such functions \( R(n) \) include \( \sqrt{n}, n / \log n, n / \log^* n \).
Notably, exponential rates do not appear in the PAC framework. In fact, standard PAC impossibility results \citep{cole2014sample} rely on sequences of hard distributions supported only on two points. In contrast, our result shows that these distributions admit exponentially fast learning curves under the universal learning framework.

Interestingly, this rate is \emph{not} achievable by ERM,\footnote{Recall that ERM stands for Empirical Revenue Maximization and corresponds to the procedure that, given i.i.d. samples from the valuation distribution, computes the empirical cdf and outputs the price that maximizes the empirical revenue.} but it is achievable by our structured ERM algorithm that is ``biased'' towards smaller prices. In fact, our result shows that ERM is not even converging to the optimal revenue in this setting. To the best of our knowledge, this is the first result in the line of work on revenue maximization from samples that is formally showing that ERM provably fails to learn in a setting where some other algorithm does not. 


It turns out that when the support of the distribution is \emph{finite}, we can achieve \emph{sharp} exponential rates, and ERM is optimal.
\begin{thm}[Finite Support]  
\label{Thm:Finite}  
Let \( \mathbb{D} \) be the class of finitely supported valuation distributions. Then, \( \mathbb{D} \) is universally learnable at an optimal exponential rate \( e^{-n} \).  
\end{thm}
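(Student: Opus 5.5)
The proof has two parts: an upper bound showing that ERM achieves $\epsilon_n \le Ce^{-cn}$ for every finitely supported $\D$, and a lower bound exhibiting a single finitely supported $\D$ on which every algorithm has gap at least $Ce^{-c'n}$ infinitely often.

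\textbf{Upper bound via ERM.} Fix $\D$ with finite support $\{v_1<\dots<v_k\}$. We restrict ERM to output a price in the empirical support; this is without loss, since between atoms the empirical revenue increases up to the next atom. Because revenue is piecewise linear between atoms, every optimal price lies in the support. Let $P^*\subseteq\{v_i\}$ be the set of optimal atoms, and let
\[
\Delta := \opt_\D - \max_{v_i\notin P^*}\rev_\D(v_i) > 0 .
\]
The gap $\Delta$ is positive because the support is finite.

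ERM can fail in only two ways. First, the sample may miss the smallest optimal atom $v^*$. This happens with probability $(1-\D(v^*))^n$, which is exponentially small. Second, some suboptimal atom may have empirical revenue exceeding that of $v^*$. Each empirical revenue $v_i(1-\hat F_n(v_i))$ is an average of i.i.d.\ terms bounded in $[0,v_k]$. By Hoeffding's inequality and a union bound over the $k$ atoms, the second event has probability at most
\[
2k\exp\!\bigl(-n\Delta^2/(2v_k^2)\bigr).
\]
On the complement of both events the output is optimal, so
\[
\epsilon_n \le \opt_\D\cdot\bigl(\Pr[\text{fail}_1]+\Pr[\text{fail}_2]\bigr) \le Ce^{-cn},
\]
with $C,c$ depending on $\D$, as the universal definition permits. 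This part is routine.

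\textbf{Lower bound: a single hard instance.} Here we need one fixed finitely supported $\D$ such that every algorithm has $\epsilon_n\ge Ce^{-c'n}$ for infinitely many $n$. The plan is to use a two-point family $\D_q$ supported on $\{1,2\}$ with $\Pr[v=2]=q$. Then
\[
\rev(1)=1,\qquad \rev(2)=2q .
\]
Consider $q_0=1/2-\eta$ and $q_1=1/2+\eta$ for a fixed $\eta$. Under $\D_{q_0}$ the optimal price is $1$; under $\D_{q_1}$ it is $2$. Choosing the wrong price costs at least $2\eta$ under either distribution, and any price outside $\{1,2\}$ is no better than the worse of the two.

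With probability $\min\{q_0,q_1\}^n$ each, both distributions produce the all-$2$ sample, and on that sample the algorithm's (possibly randomized) output is the same. Hence for each $n$, at least one of $q_0,q_1$ incurs a gap of at least
\[
\eta\,(1/2-\eta)^n .
\]
Whichever index $q_j$ incurs this gap for infinitely many $n$ is the fixed hard distribution.

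\textbf{Main obstacle.} The main subtlety is to pass from ``for each $n$, one of two distributions is bad'' to ``a single fixed distribution is bad infinitely often.'' The pigeonhole step above handles this because there are only two candidates, so no ensemble construction is needed. It remains to check the fine print of the definition of an ``optimal'' rate: the constants may depend on $\D$, and $C$ must absorb the mismatch between $c$ and $c'$ so that matching $e^{-\Theta(n)}$ rates are established. It is also worth checking that randomization of the algorithm does not break the argument; it does not, because the algorithm's output distribution on the all-$2$ sample is the same under both $\D_{q_0}$ and $\D_{q_1}$.
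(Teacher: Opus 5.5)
Your proposal is correct and follows essentially the same route as the paper. The upper bound is ERM over sample points, controlled by a ``the optimal atom is missed'' event plus a uniform concentration event (the paper uses DKW where you use Hoeffding with a union bound over atoms), and the lower bound is a pigeonhole over two finitely supported distributions that share a single constant sample. The only cosmetic difference is the hard pair: the paper pairs a point mass at $p$ with a two-point distribution on $\{p,p'\}$ and conditions on the all-$p$ sample, while your symmetric pair $\mathcal{D}_{1/2\pm\eta}$ conditioned on the all-$2$ sample also works, because for every price the two conditional losses sum to at least $2\eta$.
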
  

\paragraph{Summary of the Results}
The results above establish the following landscape, summarized in Figure \ref{fig:venn}, regarding the learning rates for the class of distributions where the maximum revenue is finite:
\begin{enumerate}
    \item If the optimal revenue is \emph{finite but not attained} by any price in the support, then we have shown a strong lower bound: \emph{arbitrarily slow} rates, even when the support is closed and discrete.
    \item When the optimal revenue is \emph{attained} by some price in the support, the structure of the support determines the learning rate:
    \begin{enumerate}
        \item If the support is \emph{finite}, the optimal universal rate is \emph{exponential}.
        \item If the support is \emph{closed and discrete}, the universal rate is almost \emph{exponentially} fast.
        \item If the support is \emph{bounded}, the optimal universal rate converges faster than $1/\sqrt{n}$.
        
        \item If the support is \emph{arbitrary}, the  universal rate converges \emph{almost as fast as $1/\sqrt{n}$}.
    \end{enumerate}  
\end{enumerate}

\begin{figure}
    \centering
\includegraphics[width=0.8\linewidth]{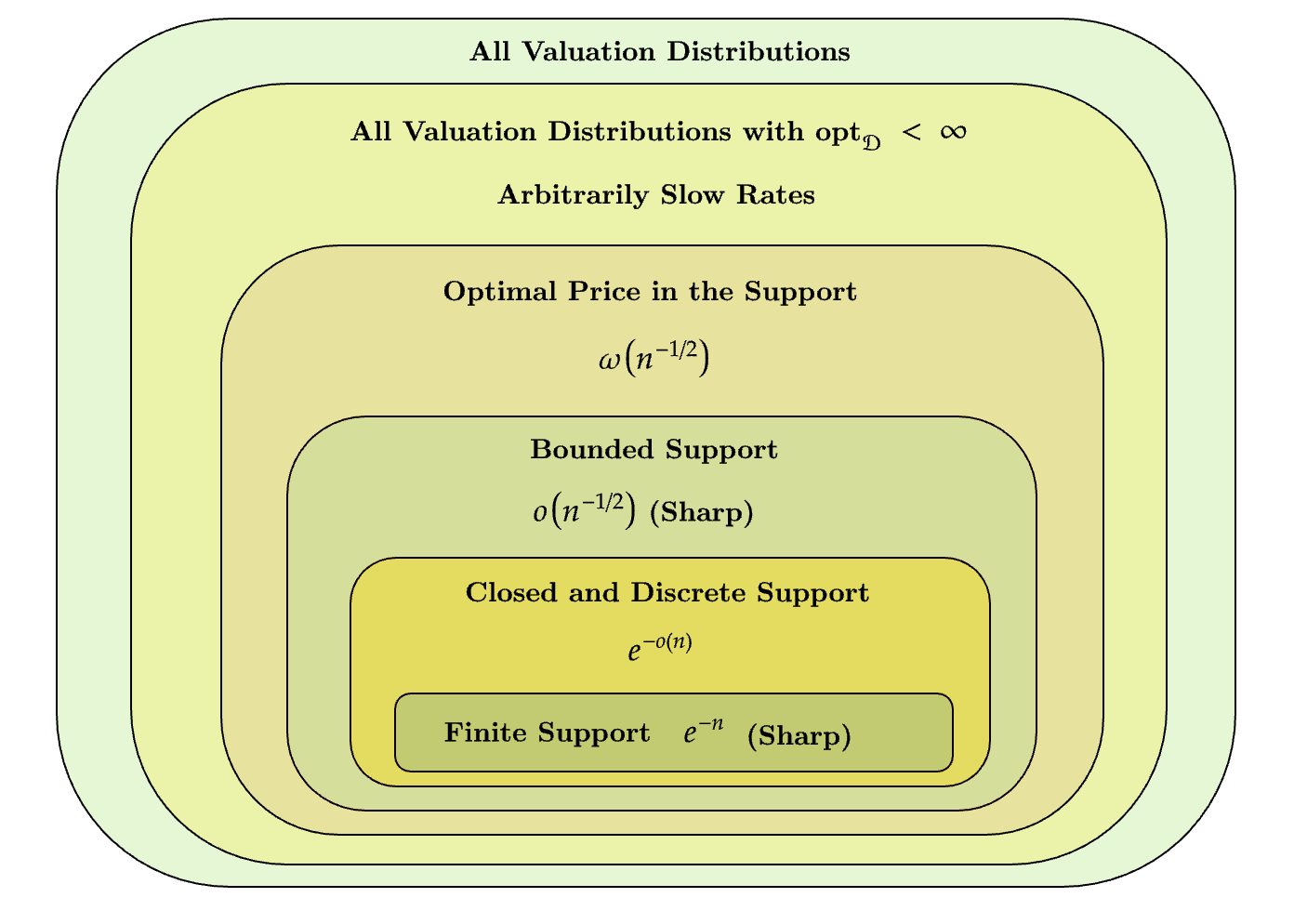}
    \caption{Summary of universal learning rates for revenue maximization in the single-item single buyer setting.}
    \label{fig:venn}
\end{figure}

\paragraph{Comparison with PAC Results}  
{An algorithm that PAC learns a class of valuation distributions \( \mathbb{D} \) provides a strong guarantee, in the sense that it must succeed against \emph{any} sequence of distributions \( (\D_n) \subseteq \mathbb{D} \): at round \(n\), the learner receives \(n\) samples drawn from \( \D_n \), its output is evaluated with respect to \( \D_n \), and the resulting sequence of risks must converge to zero.}
This worst-case nature of the PAC model is reflected in existing learning-theoretic results for revenue maximization. In particular, \citet{cole2014sample} observed that strong structural assumptions on \( \mathbb{D} \) are necessary to obtain positive results in this setting.  

To see why, consider the following family, adapted from \citet{cole2014sample}: \( \mathbb{D} = \{\D_1, \{\D_2^n\}_{n \in \N}\} \), where \( \D_1 \) places all its mass on \( 1 \), and \( \D_2^n \) places \( 1 - \nicefrac{1}{n^2} \) mass on \( 1 \) and \( \nicefrac{1}{n^2} \) mass on \( n^3 \). With high probability, an algorithm drawing \( n \) samples from \( \D_1 \) or \( \D_2^n \) will observe only the value \( 1 \), making it impossible to distinguish whether the optimal selling price should be \( 1 \) or \( n^3 \).  

It is crucial to note that the sequence of hard distributions constructed in PAC lower bounds consists of distributions supported on a finite set (just two points). In contrast, in the universal learning framework, each of these ``hard'' distributions is learnable exponentially fast, as established in \Cref{Thm:Finite}. This indicates that the difficulty in previous PAC lower bounds arises from the adversary’s ability to design a shifting sequence of hard distributions rather than from an inherent challenge in learning from any single fixed distribution.  

In the single-buyer single-item setting, which is the focus of this work, \citet{huang2015making} established near-tight PAC rates for well-structured families of distributions: for monotone hazard rate (MHR) distributions, the PAC rate is \( \widetilde{\Theta}(n^{-2/3}) \); for regular distributions, it is \( \widetilde{\Theta}(n^{-1/3}) \); and for distributions supported on \( [1,H] \), the rate is \( \widetilde{\Theta}(\sqrt{H/n}) \). The universal rates established in this work complement the existing PAC literature by offering a different set of rates under distinct learnability conditions. 
It remains an interesting open question and a natural next step from our work to characterize the optimal rates for such distributions (see also \cref{sec:conclusion}).

As an illustration, the universal rate for distributions with bounded support is asymptotically faster than the PAC learning rate. Moreover, the universal rates for general unbounded distributions with a realized optimal revenue are faster than the PAC rates e.g., for regular distributions.

\begin{rem}
[Additive vs. Multiplicative Error]
We mention that most of prior works have studied revenue maximization with multiplicative error, i.e., aiming at approximating $\opt_\D$ up to a multiplicative $(1-\epsilon)$ factor. We stress that in our learning curves framework, additive and multiplicative approximations are equivalent and do not affect the rate. For more details, we refer to \Cref{sec:additive-mult-error}.
\end{rem}

{
\subsection{Technical Overview}\label{sec:tech-overview}
Before giving formal proofs of our main results,
we believe it is useful to highlight some of technical aspects and conceptual takeaways from our results.

\subsubsection{Lower Bound Constructions} 
Perhaps the most technically challenging results in our work are our lower bounds. As we alluded to, establishing lower bounds in our framework is significantly more challenging than in the PAC framework, since one needs to find a single hard distribution for infinitely many values of the sample size $n.$ We use
three different techniques to establish our three main lower bounds: the arbitrarily slow rates lower bound for general distributions, the $o(n^{-1/2})$ lower for bounded distributions, and the lower bound which shows that ERM does not even converge to the optimal revenue. We additionally show an exponential lower bound on the convergence rates, holding for any non-trivial family of distributions. The proof in this case is more standard.

\paragraph{Arbitrarily slow rates lower bound.} For our lower bound showing that
there are distributions for which arbitrarily slow rates are necessary, we develop a \emph{deterministic} construction that is tailored to the underlying 
learning algorithm. Recall that given a rate $R: \N \rightarrow (0,1)$
and a learning algorithm,
our goal is to construct a distribution
such that for infinitely many sample sizes $\{n_i\}_{i\in \N}$ the revenue gap between the output of the learner and the true revenue is at least $R(n_i).$ 
We achieve that goal by designing a distribution that has discrete support. The first main idea behind our approach
is to try to select the points of the support and their mass
so that two (seemingly conflicting) goals are achieved simultaneously:
{\begin{enumerate}
    \item the mass
of the first $n_i$ points is sufficiently large so that
when the learner draws $n_i$ samples from the distribution
it does not see any points beyond these (with sufficiently high probability), and,
\item the 
optimal revenue of any price among these points is at least $R(n_j)$ far from the optimal one.
\end{enumerate}
}
Unfortunately, we have no control over how the learner behaves, so even if we were able to construct such a distribution, it could be the case that for these pathological sample sizes, the learner is trying to output prices that are sufficiently larger than what it sees in the samples so that it competes better with the optimal revenue. This motivates the second main idea of our approach: we need to pick the elements and the mass of the distribution after the first $n_i$ ones so that
\begin{enumerate}
    \item when the learner draws $n_i$ samples with sufficiently high probability it will not observe anything past these elements, and,
    \item  no matter which element the learner outputs on these inputs, its revenue will be $R(n_j)$ far from the optimal one. 
\end{enumerate}
It turns out that such a goal is asking for too much: the learner can use internal randomization, so it could be the case that even upon restricting its inputs to the first $n_i$ elements of the distribution, its output cannot be always bounded. Thankfully, it turns out that we can obtain an upper bound on the output of the algorithm, conditioned on the inputs being elements from the (finite) set of the first $n_i$ elements of $\D,$ that holds \emph{with 
sufficiently high probability}. With these main conceptual ideas in mind, there are still non-trivial technical hurdles we need to overcome.

\paragraph{$o(n^{-1/2})$ rates lower bound for bounded distributions.} Next, we give an overview of the most technically involved result of our work, which is the \(o(n^{-1/2})\) lower bound
in the case of \emph{bounded} distributions. In the PAC setting, a lower bound of this sort is usually achieved by considering an ensemble of two distributions, parametrized by $n$, that are sufficiently ``close''. Then, one needs to argue that if some PAC algorithm can achieve revenue better than $\sqrt{1/n},$ it can be used to distinguish between these two distributions at a rate faster than what information theory allows for. Crucially, these two distributions change as a function of $n,$ so this approach cannot be used for our result. In fact, to obtain our result, we need to create an ensemble of uncountably many hard distributions. 
The mental picture to have in mind to understand the  construction is an infinite binary tree whose elements come from $[\nicefrac{1}{2},1]$. The distributions of the ensemble are supported on different (infinite) paths of the tree. An important ``gadget'' which will be used repeatedly (across the infinite chosen path) in the construction is a new uniform (PAC) lower bound we prove in \cref{lem:uniform-construction}, which in turn leverages a classical result about the difficulty of estimating the bias of a coin. The root of the tree has value 1. As we label new nodes along a chosen path, their values will be (weakly) decreasing. The values, even among the children of the same node, can be repeated, and can even be equal to the value of the parent. Moreover, the mass of every node across a given level is not the same. Across any path, and any node of the path, the branches indicate whether the optimal price is to the left or the right of the node. The high level idea is that, given a path, we can apply the PAC lower bound repeatedly as a ``gadget'' which takes as input certain parameters of the prefix of the path constructed so far and returns parameters for the current node (its value and mass) that create the following ``confusing'' scenario for the algorithm: after taking as input 
$k$ samples, w.h.p., the algorithm will only see samples from the prefix up to that level. Moreover, in order to obtain revenue gap smaller than $R(k)$, the algorithm must guess whether the optimal price is to the left or to the right of the current node. Thus, such an algorithm can be converted into a primitive that guesses the continuation of the path. Hence, 
in order to fool the algorithm we pick the target path in a randomized way: we decide its branches
by flipping a fair coin sequentially. Then, after we apply our gadget across the infinite chosen path, we can ensure that the algorithm will fail in this task for infinitely many sample sizes. 

There are several technical details that complicate this construction, as choosing the right parameters to make our strategy work is non-trivial and these choices turn out to be very brittle. On top of that, the argument described so far, shows that for any \emph{fixed} level $k$ guessing the continuation of the path is hard. One important subtlety is that we need to ensure this holds \emph{simultaneously} for \emph{infinitely} many such $k$. To do that, we use the reverse Fatou lemma to, in a very rough sense, ``derandomize'' the construction.

We believe our high-level idea of representing an ensemble of hard distributions as an infinite binary tree, plugging in as a ``gadget'' an appropriate PAC bound to construct the parameters of the tree, and choosing the target path in a randomized manner will be useful for obtaining universal lower bounds in other revenue maximization settings.

\paragraph{Non-Convergence of ERM algorithms.}
Since we have already developed two lower bound constructions, it is natural to see if we can
use them to prove that ERM does not converge to the optimal revenue. Unfortunately, neither of these 
approaches can be used. To see that, notice that the $o(n^{-1/2})$ lower bound is catered to bounded distributions, and \cref{Thm:Bounded} shows that ERM is indeed an optimal algorithm in this setting, so we cannot hope to show non-convergence of ERM here. Moving to our arbitrarily slow rates construction, this ensures that no matter how many samples $n$ the algorithm receives, there is a price that is larger than what the algorithm outputs and (with sufficiently high probability) achieves  revenue $R(n)$ better than the output of the algorithm.
The fact that $R(n) \rightarrow 0$ is crucial for our construction to work; indeed, if $R(n) \geq c, c > 0,$ for all~$n,$ then
our approach breaks fundamentally. 
Thus, this argument is not sufficient to ensure that the algorithm will have a constant
gap to the optimal revenue infinitely often. One attempt to overcome this issue is to use the same lower bound construction and analyze it on ERM algorithms. Unfortunately this does not work either, for a subtle but crucial reason: this construction ensures that the revenue of the prices is monotonically increasing, thus to show that an algorithm does not converge to the optimal revenue one needs to show that it will get stuck on low prices infinitely often. This is certainly \emph{not} the case for ERM; for any \emph{fixed} price $p$ of this distribution, for large enough $n,$ ERM will output a price higher than $p$ with high probability. This failed attempt reveals a crucial insight for our successful construction below: arbitrarily slow rates are caused by prices that are ``out of reach'' for the algorithm, whereas lack of convergence of ERM must be caused by large prices that look better on the sample (because of some tail event), but under the true distribution have revenue that is bounded away from the optimal by a constant. If we can ensure that these tail events happen infinitely often with constant probability, we will achieve our goal. This is precisely achieved by our construction below.
 
We design a distribution supported on the set $\{1,4^k\}_{k\in \N}$
such that the optimal monopoly price is $p^* = 1~(\rev_\D(p^*) = 1),$ the revenue of any price from $\{4^k\}_{k \in \N}$ is at most $\nicefrac{1}{2},$
yet infinitely often a learner that outputs the best price on the empirical distribution constructed by samples does not output $p^* = 1$, with constant probability. The distribution places mass (roughly) $\nicefrac{C}{4^k}$ on the value~$4^k,$ and the remaining mass is placed on $1.$ We choose the constant $C$ so that the revenue of any $p_k = 4^k$ is at most $\nicefrac{1}{2}.$ We now have all the ingredients in place to show our result. Consider the sample size $n_k = 4^k$. Then, we can see that with constant probability, the input sample will contain at least two copies with value $4^k$. Let us call this event $E_k.$ Under $E_k,$ it is not hard to see that $p_k = 4^k$ has better revenue guarantee than the optimal price $p^* = 1$
on the empirical distribution; indeed, since there are (at least) two copies of $4^k$, its empirical revenue is (at least) 2,
whereas the empirical revenue of $p^*$ is always 1. Since this happens with constant probability for all $k,$ we have witnessed an infinite sequence of sample sizes where the revenue of the output of ERM is bounded away by a constant from the optimal revenue. Interestingly, the support of this hard distribution is discrete and closed.

\paragraph{Exponential Rates Lower Bound.} Lastly, we prove a fourth lower bound, showing that no learner can achieve rates faster than exponential even for distributions supported only on three values\footnote{The formal result holds even for two values.} $v_1 = 1, v_2 = 2, v_3 = 3$. This result follows from standard arguments:
consider two distributions $\D_1, \D_2$ such that $\D_1$
splits its mass between $v_1, v_2$ and $\D_2$ splits its mass between $v_2, v_3.$ Notice that the optimal price for $\D_1$ is $p_1 = 2$ and the optimal price for $\D_2$ is $p_2 = 3$. 
No matter which of the two distributions the adversary chooses, with exponentially small probability, a draw of $n$ samples will
contain only $v_2$. Thus, under this event, the learner must make an arbitrary choice between $p_1, p_2.$ A pigeonhole principle argument shows that no matter how the algorithm behaves, its choice will be wrong for one of the two distributions infinitely often, under this event.

\subsubsection{Algorithmic Constructions and Analyses}
{Our algorithms rely on the following general recipe:
\begin{enumerate}
    \item First, we draw a sample $S \sim \mathcal D^n$ of valuations and construct the empirical distribution $\wh {\mathcal D}_n$.

    \item Then, depending on the type of the support of the valuation distribution $\mathcal D$, the algorithm performs some variant of the ERM principle with respect to $\wh {\mathcal D}_n$.
\end{enumerate}

\paragraph{Finite Support and ERM.} If the support is finite $\{p_1,...,p_m\}$, then the algorithm outputs the observation from the training set $S$ that maximizes the empirical expected revenue with respect to $\wh{\mathcal D}_n$ (observe that the optimal revenue is realized by at least one of the points in the support.) To see why this approach achieves exponential rates, first notice that since there are finitely many prices, the revenue gap between the best and the ``second best'' price is constant.\footnote{The formal version of this argument works even when the set of optimal prices is a multiset or the set of ``second-best'' prices is empty.} Then, we can use the CDF concentration inequality (\cref{lem:dkw-ineq}) and the fact
that the prices are bounded (since the support is finite), to argue that, except for an event with exponentially small probability, the empirical revenue of the optimal price will outperform the empirical revenue of any other price. Recall that exponential rates convergence means that for any distribution, the revenue gap drops as~$C \cdot e^{-c \cdot n}, $ for distribution-dependent $c, C >0.$ In this case, we can show that these constants depend on two parameters of the distribution: the revenue gap between the best, and second-best price, as well as the magnitude of the largest price in the support.


\paragraph{Closed-Discrete Support and Structural ERM.} 
{The natural next after studying distributions with finite support is to consider distributions with an (unbounded) discrete support, such as the natural numbers. To be concrete, we will consider the case where
the support of the valuation distribution is closed and discrete, i.e., its elements are isolated (not arbitrarily close to each other) and it has no limit points. 
Since the support is not finite, our argument on the performance of ERM from the previous paragraph does not apply. In fact, our lower bound shows that ERM provably does not even converge to the optimal revenue in this setting (see \Cref{Thm:Discrete}[Item 2]). But if ERM achieves exponential rates for finitely supported distributions, why does it behave so poorly for discrete, unbounded distributions?} 

To understand that, it is instructive to explain which ingredient of the exponential rates upper bound is violated in our hard instance. The only two properties we used for the exponential rates bound are that \textbf{i)} the revenue gap between the optimal and ``second-optimal'' price is constant, and \textbf{ii)} the prices are bounded. In our lower bound construction, property \textbf{(i)} is not violated; indeed, it still the case that the optimal price has a constant revenue gap to the second best. However, it is clearly the case that we have no a-priori bound on the optimal price. It is also not hard to see that any algorithm which tries to artificially bound the space of prices it is searching over is doomed to fail. Thus, we need to answer the following: how can an algorithm search over increasing large price spaces, avoid getting fooled by tail events, and pinpoint the optimal price (almost) exponentially fast?

To do that, we design a variant of ERM that is \textbf{i)} biased towards smaller prices to protect against tail events, while \textbf{ii)} continuously decreasing the bias over time to ensure that it does not get stuck on prohibitively low prices. As we explained above, both of these components are necessary to achieve the desired result. 
The algorithm uses the training set $S = \{p_1,...,p_n\}$ to create the empirical distribution and sorts the
observed points in non-decreasing order $p_1 \leq ... \leq p_n$. It then finds the rightmost point of the sorted list that beats all its
prior points in empirical revenue with some confidence ``$\mathrm{gap}(n)$'', where this $\mathrm{gap(n)}$ is decreasing as a function of the sample size. More formally, the algorithm outputs the price $p_{i^*} \in S$ such that
\[
i^\star = \max_{i \in [n]} \{ ~\forall j \leq i ~:~\mathrm{Rev}_{\wh{\mathcal{D}}_n}(p_i) >
\mathrm{Rev}_{\wh{\mathcal{D}}_n}(p_j) + \mathrm{gap}(n, p_i,p_j)
\}
\]
It turns out there exists a choice of the gap function to balance these two desiderata, i.e, have continuously increasing exploration while protecting against tail events, that allows us to get the (almost) exponential rates result.

\paragraph{Unbounded Support and Capped ERM.}
Our third algorithmic variant occurs in the case where the support of the valuation distribution is unbounded. 
In this regime, we show two results: \textbf{i)} under no assumptions on the underlying distribution (even allowing for infinite revenue), there is an algorithm whose empirical revenue converges to the optimal one, i.e., $\lim_{n\rightarrow\infty} \rev_\D(t_n) = \opt_\D$,\footnote{If $\opt_\D = \infty$, then $\lim_{n\rightarrow\infty} \rev_\D(t_n) = \infty.$} and \textbf{ii)} if the optimal revenue of $\D$ is achieved by some price $p^*$ then there is an algorithm that achieves (almost) $\nicefrac{1}{\sqrt{n}}$ rates.\footnote{Recall that if the optimal revenue is not achieved by some price, we have an arbitrarily slow rates lower bound.}
Our algorithm for this case is inspired by the \emph{guarded} ERM approach of \citet{dhangwatnotai2010revenue,huang2015making}, i.e.,
for any sample size $n$, we output the best distribution on the sample that is bounded by a certain number.
The main difference with these works is that our cut-off point is a ``hard'' threshold that does not depend on the realization of the samples, but is rather a (slowly growing) function of $n$, whereas they choose the optimal price on the empirical distribution excluding the top $\varepsilon \cdot n$ fraction of the prices, for some appropriate $\varepsilon$. 
The reason for using this thresholding approach is that since we do not know which price achieves the optimal revenue, we explore an increasing space of the support, i.e., we perform ERM on the set $G_n = [0, g(n)]$ for some increasing function $g$. It turns out that this type of truncation is necessary; indeed, as we showed, unrestricted ERM cannot converge to the optimal revenue. Instantiating $g(\cdot)$ with different functions allows us to get the desired rates.
}

\paragraph{Analysis of ERM for Bounded Distributions.} In the setting of distributions with bounded support, our result shows that ERM is indeed an optimal learner, and the rate it achieves is (slightly) better than its uniform counterpart. To get this result, we use a ``localized'' version of Bernstein's inequality, to show that the concentration of the revenue of all $\varepsilon$-optimal prices happens at a rate (roughly) $O\left(\sqrt{\nicefrac{\Delta(\varepsilon)}{n} \log(1/\Delta(\varepsilon)) }\right),$ where $\Delta(\varepsilon)$ should be thought of as a variance term. Our main technical work here comes in proving that for all distributions $\Delta(\varepsilon) = o(1)$. This, combined with our concentration inequality, gives the desired $o(1/\sqrt{n})$ bound.


\subsection{Takeaways and Open Questions}\label{sec:conclusion}
We conclude by highlighting a few takeaways from our work. First, unlike in the PAC setting—where obtaining any meaningful learnability guarantees typically requires strong distributional assumptions (e.g., regularity, MHR, or bounded support)—in our learning-curves framework the problem is always tractable. Second, our results show that whether the optimal revenue is actually attained by some price or only approached asymptotically has a dramatic effect on the difficulty of the learning problem. Third, the landscape of optimal learning rates is surprisingly rich: the geometry of the distribution’s support plays a central role in determining the taxonomy of achievable rates. Finally, our framework is what makes it possible to compare natural learning primitives such as structured ERM and ERM. While in the PAC setting both have trivial worst-case guarantees—even for discrete, closed supports—in our framework structured ERM can substantially outperform ERM, and in fact we show that the latter does not even converge to the optimal revenue.

There are several immediate directions for subsequent work. In particular, many mechanism-design learning problems for which PAC-style bounds are known can now be revisited through our framework. Below we list some concrete directions.

\paragraph{Rates For General Single-Dimensional Setting} In our work we studied the problem of a selling a single item to a single buyer. We believe our results, and new ideas, can be utilized to study a general single-dimensional setting of selling one item to multiple buyers.



\paragraph{Rates Characterization for Distribution Families} Instead of fixing the support of the class of distributions, it could be interesting to study parametric classes of distributions, including regular and MHR distributions.
For instance, our results already show that the universal rate for regular distributions is faster than the PAC rate. However, it would be interesting to obtain a tight characterization.

\paragraph{Rates of Empirical Revenue Maximization}
Our results show a sharp contrast in the behavior of ERM across distribution classes. On the one hand, \Cref{Thm:Bounded} shows that ERM is optimal for distributions with bounded support. On the other hand, \Cref{Thm:Discrete} shows that ERM can fail to be Bayes-consistent for closed and discrete unbounded supports, even though a structured ERM algorithm achieves nearly exponential rates in that setting. Since ERM is a fundamental learning primitive, it remains interesting to characterize the distributional conditions under which ERM succeeds and the rates it achieves.


}

\subsection{Related Work}
\label{sec:RelatedWork}

\paragraph{Revenue Maximization. } {The study of revenue maximization from 
samples through a formal learning theoretic model was initiated in the seminal work of \citet{cole2014sample}, building on earlier work by \citet{balcan2008reducing}. PAC sample complexity bounds for regular and MHR distributions
were shown by \citet{dhangwatnotai2010revenue},
while learning finitely supported distributions
was also studied by \citet{elkind2007designing}, without 
providing bounds on the number of samples needed.
\citet{huang2015making} showed (almost) tight PAC sample complexity bounds in the single-item single-buyer setting.
A long line of work \citep{morgenstern2016learning,morgenstern2015pseudo,gonczarowski2017efficient,syrgkanis2017sample,devanur2016sample} provided improved sample complexity bounds for single parameter environments, culminating in the work of \citet{guo2019settling} who provided optimal sample complexity bounds (up to polylogarithmic factors) using efficient algorithms. The sample complexity of auctions has also been studied in the more challenging multi-parameter setting \citep{balcan2016sample,cai2017learning,balcan2018general,balcan2018dispersion,brustle2020multi,gonczarowski2021sample}.
A different line of work \citep{neeman2003effectiveness, segal2003optimal,baliga2003market} showed that certain classes
of valuation distributions, such as those with finite first moment, can be learnt asymptotically, without providing
bounds on the rates of convergence.
{Relatedly, \citet{alon2017submultiplicative} 
studied when the estimated revenue from samples of every price converges to its true revenue, providing bounds that have the
same flavor as the concentration bounds for the CDF of a distribution.
}
}

\paragraph{Learning Curves.}
The roots of universal learning go back to the work of \cite{stone1977consistent} for establishing Bayes consistency for the nearest-neighbors algorithm. Regarding universal lower bounds, there is a rich line of works for various problems such as classification, regression and density estimation \citep{boyd1978lower,devroye1984distribution,antos2000lower,antos1999performance,devroye1983arbitrarily,devroye2002distribution,antos2001convergence}. More recently, \citet{bousquet2021theory} provided a clean framework to study universal rates and gave 
a complete characterization of the optimal rates for binary classifications. Subsequent works studied other classification settings such as multiclass classification \citep{kalavasis2022multiclass,hanneke2023universal}, and active learning
\citep{hanneke2022universal,hannekeuniversal2024b}. Furthermore, \citet{hanneke2024universal} studied the optimal universal rates of Empirical Risk Minimization,  \citet{attias2024universal} studied rates of regression, and \citet{kalavasis2024limits}  provided rates for language generation. \citet{bousquet2023fine} provided more fine-grained
bounds for the rates of binary classification.

\section{Roadmap}
In \cref{sec:formal-defs} we give some more formal definitions from our framework. In \cref{proof:BayesConsistency} we give the formal proof of the result regarding Bayes consistency. In \cref{proof:universalDiscreteLB} we prove the arbitrarily slow rates lower bound. In \cref{proof:upperUnbounded} we show the $\omega(1/\sqrt{n})$ upper bound for distributions whose optimal revenue is achieved by some price. In \cref{proof:root-n-lb} we provide the proof
of the $o(1/\sqrt{n})$ lower bound. In \cref{proof:upperBounded} we give the proof of the $o(1/\sqrt{n})$ upper bound. In \cref{proof:Exponential}
we give the proof of our (nearly) exponential rates upper bound for closed and discrete support. In \cref{sec:exp-rates-upper-bound-finite} we discuss the exponential rates upper bound algorithm for finite support.
In \cref{sec:erm-proof} we give the proof of the ERM lower bound. In \cref{proof:expr-rates-lower-bound} we give the proof of the exponential rates lower bound.

\section{Formal Definitions of Learning Rates}\label{sec:formal-defs}

We have already defined in Equation \eqref{eq:PAC-UB} and \Cref{def:universal} the learning rates in the  PAC setting and the universal case respectively. We now introduce formal definitions for the associated lower bounds.

\subsection{PAC Lower Bound}
We start with the definition of a PAC lower bound.

\begin{defn}[PAC Lower Bound]
Let \(R:\N\to(0,1]\) be a rate function. We say that the class of valuation distributions \(\mathbb D\) is not PAC learnable at a rate faster than \(R\) if, for every learning algorithm \((t_n)_{n\in\N}\), there exist constants \(C,c>0\) such that
\begin{equation}
\limsup_{n \to \infty} \sup_{\D \in \mathbb D}
\frac{\epsilon_n(t_n,\D)}{R(cn)} \ge C.
\label{eq:PAC-LB}
\end{equation}
\end{defn}

In words, for every algorithm \((t_n)_{n\in\N}\), there exist infinitely many sample sizes \(n_k\) and distributions \(\D_k\in\mathbb D\) such that
\[
\epsilon_{n_k}(t_{n_k},\D_k) \ge C R(c n_k)
\]
for some constants \(C,c>0\).

\subsection{Universal Lower Bound}
We now introduce the notion of a universal, or individual, lower bound.

\begin{defn}[Universal Rates -- Lower Bound]
Let \(R:\N\to(0,1]\) be a rate function, i.e., \(R(n)\downarrow 0\). We say that a class of valuation distributions \(\mathbb D\) is not universally learnable at a rate faster than \(R\) if, for every learning algorithm \((t_n)_{n\in\N}\), there exist a distribution \(\D\in\mathbb D\) and constants \(C,c>0\) such that
\begin{equation}
\limsup_{n \to \infty}
\frac{\epsilon_n(t_n,\D)}{R(c n)} \ge C.
\label{eq:LearningCurve-LB}
\end{equation}
\end{defn}

{
\subsection{Arbitrarily Slow Universal Rates}
Using the previous lower bound definition, and following \citet{bousquet2021theory}, we define what it means for $\D$ to
require \emph{arbitrarily slow rates.}

\begin{defn}
[Arbitrarily Slow Rates]
\label{def:slow}
We will say that a class of 
valuation distributions $\mathbb D$ requires arbitrarily slow rates if 
for any rate function $R,$ it cannot be universally learned at a rate faster than $R.$
\end{defn}
In words, this means that for any learning algorithm $(t_n)_n$ and
for any rate function $R,$ there exists some $\D \in \mathbb{D}$
for which the revenue gap of the algorithm is at least $R(n),$
for infinitely many $n \in \N.$
Thus, this rate captures problems that are extremely ``hard:'' if a class $\mathbb D$ requires arbitrarily slow rates we cannot find any upper bound on the rate of decay of its learning curves.

\subsection{Optimal Universal Rates}
Combining the upper bound and lower bound definitions, we give 
the following definition about the optimality of some rate.

\begin{defn}[Optimal Rate]\label{def:opt-rate}
    We say that a class of valuation distributions $\mathbb{D}$
    is universally learnable at an optimal rate $R$, if \textbf{i)} 
    $\mathbb{D}$ is universally learnable at rate $R$, and \textbf{ii)} 
    $\mathbb{D}$ is not universally learnable at rate faster than $R.$
\end{defn}

\subsection{Universal $o(R(n))$ Rates} 
A function $f(n)$ is $o(R(n))$ if $\lim_{n \to \infty} f(n)/R(n) = 0.$ We will now need the following definition of $o(R(n))$ rates.

\begin{defn}
[Learnable at Universal Rate $o(R(n))$]
\label{def:o-R-upper}
We say that a class of valuation distributions $\mathbb{D}$
    is universally learnable at an $o(R)$ rate
    if there is an algorithm $(t_n)_{n \in \N}$
    such that for any $\D \in \mathbb{D}$ there exist $C,c>0$ such that $\epsilon_n(t_n, \D) \leq g(n)$ for  \emph{some}
    rate $g(n) \in o(R(n)).$ 
\end{defn}

Similarly, we need to introduce a universal $o(R(n))$ lower bound.

\begin{defn}
[Universal $o(R(n))$ Lower Bound]
\label{def:o-R-lower}
We say that a class of valuation distributions $\mathbb{D}$
    has a universal $o(R)$  lower bound if 
    for every rate
    $g(n) \in o(R(n))$ the class $\mathbb{D}$ is not universally learnable 
    at rate faster than $g(n).$
\end{defn}

Combining the above definitions, we can define optimal $o(R(n))$ rates.

\begin{defn}[Optimal $o(R(n))$]
\label{def:o-R-opt}
We say that a class of valuation distributions \(\mathbb D\) is universally learnable at an optimal \(o(R)\) rate if:
\begin{enumerate}
    \item \(\mathbb D\) is universally learnable at an \(o(R)\) rate, and
    \item for every rate \(g(n)\in o(R(n))\), the class \(\mathbb D\) is not universally learnable at a rate faster than \(g(n)\).
\end{enumerate}
\end{defn}

\subsection{Universal $e^{-o(n)}$ Rates}

We use \(e^{-o(n)}\) to denote the family of rates of the form \(e^{-h(n)}\), where \(h:\N\to\R_+\) satisfies
\[
h(n)\to\infty
\qquad\text{and}\qquad
h(n)=o(n).
\]
Thus these rates converge to zero, but more slowly than any exponential rate \(e^{-cn}\) with constant \(c>0\). Examples include \(e^{-\sqrt n}\), \(e^{-\log n}\), and \(e^{-n/\log n}\).

\begin{defn}[Learnable at Universal Rate $e^{-o(n)}$]
\label{sec:exp-o-upper}
We say that a class of valuation distributions \(\mathbb D\) is universally learnable at rate \(e^{-o(n)}\) if, for every nondecreasing function \(g:\N\to\R_+\) such that \(g(n)\to\infty\) and \(g(n)=o(n)\), the class \(\mathbb D\) is universally learnable at rate \(e^{-g(n)}\).
\end{defn}

\subsection{Universal $\omega(R(n))$ Rates}

In a similar fashion we can define what it means for an algorithm to achieve $\omega(R(n))$ rates such as $\omega(1/\sqrt{n})$. Intuitively, it means that the class is learnable at a rate slightly slower than $R(n)$: A class is universally learnable at rate $\omega(R(n))$ if for any $g(n) \in \omega(R(n))$, there is an algorithm that achieves this rate.

\begin{defn}
[Learnable at Universal Rate $\omega(R(n))$]
\label{def:omega-R-opt}
We say that a class of valuation distributions $\mathbb{D}$
    is universally learnable at  $\omega(R(n))$ rate
    if for every $g(n) \in \omega(R(n))$ rate, $\mathbb{D}$ 
    is universally learnable 
    at rate $g(n).$ 
\end{defn}

We note that the collection of rates $\omega(e^{-n})$ strictly contains the collection $e^{-o(n)}$ and this why we included two distinct definitions.
To see the strict inclusion,  the class  \( \omega(e^{-n}) \) contains functions \( f(n) \) that grow asymptotically faster than \( e^{-n} \), meaning:
   \[
   \lim_{n \to \infty} \frac{f(n)}{e^{-n}} = \infty.
   \]
   This includes: (i) sub-exponential decays (like \( e^{-o(n)} \)), such as \( e^{-\sqrt{n}} \), and
   (ii) exponential decays with slightly slower rates, such as \( n e^{-n} \). These have exponents of the form \( -g(n) \) where \( g(n) \in \Theta(n) \) but \( n - g(n) \to \infty \).
   
}

\section{General Distributional Setting}
In this section we consider a general revenue maximization setting where we do not place any restrictions on the distribution.

\subsection{Bayes Consistency}
\label{proof:BayesConsistency}
In this section we give the formal proof of the result regarding Bayes-consistent learners.

\begin{proof}
[Proof of \Cref{thm:single-item-bayes-consistency}]
    Let $\D$ be the underlying valuation 
    distribution and $\opt_\D = \sup_{p \in \R_+} p \cdot \Pr_{v \sim \D}[v \geq p].$
    We define the learning algorithm as follows:
    for every $n \in \N$ we create
    the empirical distribution $\wh \D_n$ on the samples
    and output the price $\wh t_n =\argmax_{p \leq \log n} p \cdot \Pr_{v \sim \wh \D_n} [v \geq p].$
    Let $F_n$ be the empirical CDF 
    constructed by taking $n$ i.i.d. samples
    from $\D$.
    We consider two cases.\\

    \noindent \textbf{Case $i$: $\opt_\D = \infty.$}  Consider any number $M \in \R_{+}.$ Then,
    by definition of $\opt_\D,$ there is 
    some $p_M \in \R_+$ such that
    $\rev_\D(p) \geq M.$ Consider 
    some $n_0 \in \N$ large enough such that
     $\log(n_0) \geq p_M$ and $2e^{-2n_0 \cdot \left(\frac{M}{4 \log n_0}\right)^2} \leq \nicefrac{1}{2}.$ Then, notice that
     for all $n \geq n_0$ we also have
      $\log(n) \geq p_M$ and $2e^{-2n \cdot \left(\frac{M}{4 \log n}\right)^2} \leq \nicefrac{1}{2}.$ Recall the DKW inequality (\cref{lem:dkw-ineq}) which
      shows that for all $n \in \N, \varepsilon > 0$
      it holds that
      \[
         \Pr[\sup_{x \in \R}\abs{F_n(x) - F(x)} > \varepsilon] \leq 2e^{-2n\varepsilon^2} \,.
      \]
      Substituting $\varepsilon = M / (4\log n)$
      gives
      \[
  \Pr\left[\sup_{x \in \R}\abs{F_n(x) - F(x)} > \frac{M}{4\log n}\right]
  \leq
  2e^{-2n \left(\frac{M}{4\log n}\right)^2}.
\]
            Thus, for all \(n \geq n_0\),
      \[
         \Pr\left[\sup_{x \in \R}\abs{F_n(x) - F(x)} > \frac{M}{4\log n}\right] \leq \frac{1}{2}.
      \]
      Define the good event
      \[
      E_n :=
      \left\{
      \sup_{x \in \R}\abs{F_n(x)-F(x)}
      \le
      \frac{M}{4\log n}
      \right\}.
      \]
      Then \(\Pr(E_n)\ge 1/2\). Let \(\wh t_n\)
      be the price our algorithm outputs.
      Under the event \(E_n\), we have
      \[
        \abs{F_n(\wh t_n) - F(\wh t_n)} \leq \frac{M}{4\log n} \,,
      \]
      so since $\wh t_n \leq \log n$ it follows that
      \[
  \abs{\wh t_n \cdot (1-F_n(\wh t_n)) - \wh t_n \cdot (1-F(\wh t_n))}
  \leq
  \frac{\wh t_n \cdot M}{4\log n}
  \leq
  \frac{M}{4}.
\]
      Similarly,
      \[
        \abs{ p_M \cdot (1-F_n(p_M)) -  p_M \cdot (1- F( p_M))} \leq \frac{p_M \cdot M}{4\log n} \leq \frac{M}{4} \,.
      \]
      By definition of the algorithm, it holds
      that
      \[
        \wh t_n \cdot (1- F_n(\wh t_n)) \geq p_M \cdot (1 - F_n(p_M)) \,.
      \]
            Chaining the above inequalities, we get that on the event \(E_n\),
      \begin{equation}\label{eq:bayes-rev-bound}
          \rev_\D(\wh t_n)
          =
          \wh t_n \cdot (1- F(\wh t_n))
          \geq
          p_M \cdot (1- F(p_M)) - \frac{M}{2}
          \geq
          \frac{M}{2}.
      \end{equation}
      Therefore,
      \[
      \E[\rev_\D(\wh t_n)\mid E_n]\ge \frac{M}{2}.
      \]
        
      Thus, for $n \geq n_0$ the revenue of the algorithm satisfies
      \begin{align*}
                  \E[\rev_\D(\wh t_n)] &= \E[\rev_\D(\wh t_n)| E_n] \cdot \Pr[E_n] + \E[\rev_\D(\wh t_n)| \lnot E_n] \cdot \Pr[\lnot E_n] & (\text{Total expectation})\\
                  &\geq \E[\rev_\D(\wh t_n)| E_n] \cdot \Pr[E_n]  & (\text{Non-negativity})\\
                  &\geq  \frac{M}{2} \cdot \Pr[E_n]  & (\text{\Cref{eq:bayes-rev-bound}}) \\
                   &\geq  \frac{M}{4}  \,.& (\text{Def. of $E_n$}) 
      \end{align*}
      Since this holds for any $M > 0,$ we 
      have shown that $\lim_{n \rightarrow \infty} \E[\rev_\D(\wh t_n)] = \infty = \opt_\D.$\\

       \noindent \textbf{Case $ii$: $\opt_\D < \infty.$} 
        If \(\opt_\D=0\), then \(\rev_\D(p)=0\) for every \(p\), and therefore
       \[
       \E[\rev_\D(\wh t_n)] = 0 = \opt_\D
       \]
       for all \(n\). Hence the claim is immediate. We may therefore assume that \(\opt_\D>0\).

       This case is almost identical to the previous one, but we spell out the details for completeness. For any $\varepsilon > 0,$ let $p_\varepsilon \in \R_+$ be a price such that
       $\rev_\D(p_\varepsilon) \geq \opt - \varepsilon.$
       Notice that such a price always exists.
       Consider 
    some $n_0 \in \N$ large enough such that
     $\log(n_0) \geq p_\varepsilon$ and $2e^{-2n_0 \cdot \left(\frac{\varepsilon}{4 \log n_0}\right)^2} \leq \nicefrac{\varepsilon}{\opt_\D}.$ Then, notice that
     for all $n \geq n_0$ we also have
      $\log(n) \geq p_\varepsilon$ and $2e^{-2n \cdot \left(\frac{\varepsilon}{4 \log n}\right)^2} \leq \nicefrac{\varepsilon}{\opt_\D}.$ Recall the DKW inequality (\cref{lem:dkw-ineq}) which
      shows that for all $n \in \N, \tilde \varepsilon > 0$
      it holds that
      \[
         \Pr[\sup_{x \in \R}\abs{F_n(x) - F(x)} > \tilde \varepsilon] \leq 2e^{-2n\tilde \varepsilon^2} \,.
      \]
      Substituting $\tilde \varepsilon = \varepsilon / (4\log n)$
      gives
      \[
  \Pr\left[\sup_{x \in \R}\abs{F_n(x) - F(x)} > \frac{\varepsilon}{4\log n}\right]
  \leq
  2e^{-2n \left(\frac{\varepsilon}{4\log n}\right)^2}.
\]
           Thus, for all \(n \geq n_0\),
      \[
         \Pr\left[\sup_{x \in \R}\abs{F_n(x) - F(x)} > \frac{\varepsilon}{4\log n}\right]
         \leq
         \frac{\varepsilon}{\opt_\D}.
      \]
      Define the good event
      \[
      E_n :=
      \left\{
      \sup_{x \in \R}\abs{F_n(x)-F(x)}
      \le
      \frac{\varepsilon}{4\log n}
      \right\}.
      \]
      Then \(\Pr(E_n)\ge 1-\varepsilon/\opt_\D\). Let \(\wh t_n\)
      be the price our algorithm outputs.
      Under the event \(E_n\), we have
      \[
        \abs{F_n(\wh t_n) - F(\wh t_n)} \leq \frac{\varepsilon}{4\log n} \,,
      \]
      so since $\wh t_n \leq \log n$ it follows that
      \[
  \abs{\wh t_n \cdot (1-F_n(\wh t_n)) - \wh t_n \cdot (1-F(\wh t_n))}
  \leq
  \frac{\wh t_n \cdot \varepsilon}{4\log n}
  \leq
  \frac{\varepsilon}{4}.
\]
      Similarly,
      \[
        \abs{ p_\varepsilon \cdot (1-F_n(p_\varepsilon)) -  p_\varepsilon \cdot (1- F( p_\varepsilon))} \leq \frac{p_\varepsilon \cdot \varepsilon}{4\log n} \leq \frac{\varepsilon}{4} \,.
      \]
      By definition of the algorithm, it holds
      that
      \[
        \wh t_n \cdot (1- F_n(\wh t_n)) \geq p_\varepsilon \cdot (1 - F_n(p_\varepsilon)) \,.
      \]
            Chaining the above inequalities, we get that on the event \(E_n\),
      \begin{equation}\label{eq:bayes-rev-bound-2}
          \rev_\D(\wh t_n)
          =
          \wh t_n \cdot (1- F(\wh t_n))
          \geq
          p_\varepsilon \cdot (1- F(p_\varepsilon)) - \frac{\varepsilon}{2}
          \geq
          \opt_\D -\frac{3\varepsilon}{2}.
      \end{equation}
      Therefore,
      \[
      \E[\rev_\D(\wh t_n)\mid E_n]\ge \opt_\D-\frac{3\varepsilon}{2}.
      \]
        
      Thus, for $n \geq n_0$ the revenue of the algorithm satisfies
      \begin{align*}
                  \E[\rev_\D(\wh t_n)] &= \E[\rev_\D(\wh t_n)| E_n] \cdot \Pr[E_n] + \E[\rev_\D(\wh t_n)| \lnot E_n] \cdot \Pr[\lnot E_n] & (\text{Total expectation})\\
                  &\geq \E[\rev_\D(\wh t_n)| E_n] \cdot \Pr[E_n]  & (\text{Non-negativity})\\
                  &\geq  \left(\opt -\frac{3\cdot \varepsilon}{2} \right)\cdot \Pr[E_n]  & (\text{\Cref{eq:bayes-rev-bound-2}}) \\
                   &\geq   \left(\opt - \frac{3\varepsilon}{2}\right) \cdot \left(1-\frac{\varepsilon}{\opt}\right) & (\text{Def. of $E_n$}) \\
                   &\geq  \opt - \frac{5\varepsilon}{2}  \,. & 
      \end{align*}
      Since this holds for any $\varepsilon > 0,$
      and $\E[\rev_\D(\wh t_n)] \leq \opt,$ we 
      have shown that $\lim_{n \rightarrow \infty} \E[\rev_\D(\wh t_n)] = \opt.$
\end{proof}

\subsection{Arbitrarily Slow Rates}
\label{proof:universalDiscreteLB}
In this section, we will prove the following.
\begin{thm}
[Arbitrarily Slow Rates]
\label{thm:arbitrarily-slow}
Let $\mathbb{D}$ be the class of valuation distributions $\mathcal{D}$ that are supported on a discrete closed set and such that $\opt_\D < \infty$. Then, 
$\mathbb{D}$ requires \emph{arbitrarily slow} rates.   
\end{thm}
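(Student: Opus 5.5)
Fix an algorithm $(t_n)$ and a rate $R$ with $R(n)\downarrow 0$. We build, adaptively to the algorithm, a distribution supported on an increasing sequence of integers $1=x_0<x_1<x_2<\dots$. Masses are chosen so that the revenue $r_k=\rev_\D(x_k)$ increases strictly to a finite limit $\opt_\D$ that is never attained. A convenient parametrization fixes target revenues $r_k\uparrow 1$ (say $r_k=1-\delta_k$ with $\delta_k\downarrow 0$). We then set $\Pr[v\ge x_k]=r_k/x_k$, which is decreasing in $k$ as long as $x_{k+1}>x_k r_{k+1}/r_k$. Since the support is a subset of $\N$ with no accumulation points, it is closed and discrete. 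For any price $p\in(x_{k-1},x_k]$ we have $\rev_\D(p)\le r_k$, so $\opt_\D=1$ is not attained. The construction runs in stages $i=1,2,\dots$. At stage $i$ we have already fixed $x_0,\dots,x_{k_i}$ and their masses, up to the tail mass $q_i=r_{k_i}/x_{k_i}$ that will sit beyond $x_{k_i}$.

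At stage $i$ we first pick a sample size $n_i$ (increasing), and then choose everything after $x_{k_i}$. Let $\D^{(i)}$ be the finite distribution obtained by collapsing all tail mass onto $x_{k_i}$; this is fully determined already. Because the algorithm is fixed, $t_{n_i}$ run on $S\sim(\D^{(i)})^{n_i}$ has some output law, and we choose $B_i$ with $\Pr[t_{n_i}\le B_i]\ge 3/4$. Now we insert the next support point $x_{k_i+1}>2B_i$ and keep the tail mass tiny, so that $n_i q_i$ is small. A cleaner way to arrange this is to choose $n_i$ first and then make the tail mass at most $1/(10n_i)$, which forces the next points to be huge. Then with probability at least $0.9$ the $n_i$ real samples fall in $\{x_0,\dots,x_{k_i}\}$, and on that event their law coincides with the law under $\D^{(i)}$ conditioned appropriately. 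A TV bound gives $\Pr_{\D}[t_{n_i}\le B_i]\ge 3/4-0.1$. On $\{t_{n_i}\le B_i\}$, every price $p\le B_i<x_{k_i+1}$ has revenue at most $r_{k_i}$, since all mass above $x_{k_i}$ lies at or beyond $x_{k_i+1}$. Hence
\[
\epsilon_{n_i}(t_{n_i},\D)\ \ge\ 0.6\,(\opt_\D-r_{k_i}) .
\]
It remains to ensure $\opt_\D-r_{k_i}\ge R(n_i)$, by choosing the future revenue increments to sum to at least $R(n_i)$. For example, the next points can be designed so that the eventual revenues exceed $r_{k_i}$ by $2R(n_i)$, which is still possible since $R(n_i)\to0$ and leaves room for later stages.

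The main obstacle is consistency of the ordering. The tail mass $q_i=r/x$ must be small relative to $n_i$, while the revenue must still climb by about $R(n_i)$ later. Both constraints are compatible only because mass $r/x$ becomes arbitrarily small once $x$ is large. So I would fix the order as follows: choose $n_i$; choose the gap size $g_i=2R(n_i)$ (decreasing and summable after passing to a subsequence, with $\opt_\D=r_0+\sum_i g_i<\infty$); compute $B_i$ from the collapsed distribution; and place $x_{k_i+1}\ge\max(2B_i,\,10n_i)$ with revenue $r_{k_i}+g_i/2$. The remaining $g_i/2$ is left to later stages, which guarantees $\opt_\D-r_{k_i}\ge R(n_i)$.

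A subtlety must be checked at this point. The collapsed distribution must place the tail mass $q_i$ at a location the algorithm has effectively already "seen", and $q_i$ at stage $i$ must already be at most $1/(10n_i)$. The order above has to be adjusted accordingly: $n_i$ is chosen at stage $i$ so that $n_i\le 1/(10q_i)$ is still large, which is possible because $q_i$ was made tiny at the previous stage. We therefore take $n_i\approx 1/(10q_i)$ and define $g_i$ from $R(n_i)$, pushing $q_{i+1}$ smaller by taking $x$ even larger. Finally, the bound holds for infinitely many $n_i$ with constant $C=0.6$, $c=1$, which is exactly what \Cref{def:slow} requires.
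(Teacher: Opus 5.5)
Your proposal is correct and takes a genuinely different route from the paper.

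\textbf{How the two constructions differ.} The paper ties one support point to every sample size $j$. It first replaces $\phi$ by a non-increasing envelope $R$ with $R(j)=\phi(j)$ infinitely often. For each of the $(j-1)^{j-1}$ ordered datasets over $\{i_1,\dots,i_{j-1}\}$ it takes a threshold that $t_{j-1}$ exceeds with probability at most $R(j-1)/4$. It then places $i_j$ beyond the maximum of these thresholds, with $i_jP_j=2-R(j-1)$ and tail mass $P_j\le R(j-1)/(2(j-1))$. A conditioning computation, with failure probabilities of order $R(j)$, then shows the gap is at least roughly $R(j)/4$ at every large $j$.

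You instead work along a sparse, adaptively chosen subsequence $n_i$. You replace the per-dataset worst case by a single quantile $B_i$ of the output law under the collapsed finite proxy $\D^{(i)}$, and transfer it to $\D$ by a total-variation bound. Since $\opt_\D-\rev_\D(t)\ge 0$ pointwise, you get $\epsilon_{n_i}\ge\Pr[\text{good}]\cdot(\opt_\D-r_{k_i})$, so a constant-probability good event suffices.

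\textbf{What each buys.} Your route is shorter: no monotonization of the rate, no enumeration of datasets, and no fine probability accounting. The paper's route gives a lower bound at every sufficiently large sample size against the monotone envelope, not just along a subsequence. Only the $\limsup$ is needed for \Cref{def:slow}, so both suffice.

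\textbf{Details to repair.} None of these affects the approach.
\begin{itemize}
\item Normalization. With $x_0=1$ as the least support point, $r_0=\Pr[v\ge 1]=1$, so revenues cannot increase strictly to $1$. Either move leftover mass to $0$, or let $r_k$ climb from $r_0=1$ to a limit at most $2$. Your later formula $\opt_\D=r_0+\sum_i g_i$ implicitly does the latter, with $\sum_i g_i\le 1$; this is arranged by choosing $R(n_i)\le 2^{-i-1}$.
\item Prices between $x_{k_i}$ and $B_i$. A price $p\in(x_{k_i},B_i]$ has revenue $p\Pr[v\ge x_{k_i+1}]<r_{k_i+1}/2$ because $x_{k_i+1}>2B_i$. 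This is at most $r_{k_i}$ only because $\opt_\D\le 2r_0$. The reason you give, that all mass above $x_{k_i}$ lies at or beyond $x_{k_i+1}$, does not by itself yield the bound.
\item The closing ``subtlety.'' It conflates $q_i=\Pr[v\ge x_{k_i}]$ with the mass strictly beyond $x_{k_i}$. The total-variation distance between $\D$ and $\D^{(i)}$ is $\Pr_\D[v\ge x_{k_i+1}]=r_{k_i+1}/x_{k_i+1}$. You already control this by choosing $x_{k_i+1}$ large after fixing $n_i$ and $B_i$; for example, $x_{k_i+1}\ge 20n_i$ when all $r_k\le 2$. Your original order of choices was therefore right, and the re-ordering $n_i\approx 1/(10q_i)$ is unnecessary.
\end{itemize}
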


\begin{proof}
[Proof of \Cref{thm:arbitrarily-slow}]
    Let $\phi: \N \rightarrow (0,1)$ be a rate function, i.e., $\lim_{n \rightarrow \infty} \phi(n) = 0.$ 
    We define the function $R:\N \rightarrow \R$
    inductively as follows. We let $R(1) = \phi(1)$
    and $R(j) = \phi(j),$ if $\phi(j) \leq R(j-1)$, 
    otherwise $R(j) = R(j-1)$, for any $j \geq 2.$ 
    Notice that $R(\cdot)$ is non-increasing
    and there are infinitely many $i \in \N$
    for which $R(i) = \phi(i).$
    Recall that
    we denote by $\{t_n : \R^n_{+} \rightarrow \R_+\}$ the learning algorithm maps $n$ valuations to a price for any $n \in \N$.

    \paragraph{Construction of the Hard Distribution}
    Then, we define two sequences of non-negative numbers $(i_j)_{j \in \N}, (P_j)_{j \in \N}$
    inductively in the following way: we start with 
    $i_1 = 0, P_1 = 1.$ Then, for every $j \in \N, j \geq 2,$ assuming we have already defined
    $i_1,\ldots,i_{j-1}, P_1,\ldots,P_{j-1}$
    we first define the number $c_{j-1}$
    as follows: we consider all the possible
    $k_{j-1} = (j-1)^{j-1}$ ordered datasets of 
    size $j-1$ whose elements come from the set
    $\{i_1,\ldots,i_{j-1}\}$ the algorithm can get 
    as input. Let $S_{j-1}^1, S_{j-1}^2,\ldots,S_{j-1}^{k_{j-1}}$ be all these different datasets. 
    For any $\ell \in [k_{j-1}]$ we let
    $c_{j-1}^\ell$ be an arbitrary (finite) number in
    $\R_+$ such that
    \[
        \Pr\left[ t_{j-1}\left(S_{j-1}^\ell\right) > c_{j-1}^\ell \right] \leq \frac{R(j-1)}{4} \,,
    \]
    where the probability is taken
    with respect to the internal randomness
    of the algorithm. 
    Notice that this number is always well-defined.
    We let 
    \[
        c_{j-1} = \max_{\ell \in [k_{j-1}]}\left\{c_{j-1}^\ell\right\} \,.
    \]
        We then define \(i_j, P_j\) to be any finite numbers in \(\N\) and \((0,1)\), respectively, so that the following constraints are satisfied:
    \begin{align*}
        i_j &> \max\left\{i_{j-1}, c_{j-1}\right\},\\
        P_j &\leq \min\left\{\frac{P_{j-1}}{2}, \frac{R(j-1)}{2(j-1)}\right\},\\
        i_j \cdot P_j &= 2 - R(j-1).
    \end{align*}
    Notice that the above set of constraints
    is non-empty (as we can choose $i_j$ sufficiently large and $P_j$ sufficiently small, as needed), and for any feasible $P_j$ it
    holds that $P_j > 0.$ To see that, 
    notice that $2- R(j-1) \geq 1$
    and any feasible $i_j$ satisfies $i_j > 0,$
    thus since $i_j \cdot P_j = 2- R(j-1) \geq 1$
    it follows that $P_j > 0.$

    Having defined $(i_j)_{j \in \N}, (P_j)_{j \in \N}$ we define a distribution $\D$ supported
    entirely on $(i_j)_{j \in \N}$ where
    \[
        \Pr_{X \sim \D}[X = i_j] = P_j - P_{j+1} \,.
    \]
    To verify that this is indeed a valid 
    distribution notice that
    \[
         P_j - P_{j+1} \geq P_j - P_j /2 = P_j / 2 > 0\,,
    \]
    and
    \[
        \sum_{j=1}^\infty \left(P_j - P_{j+1}\right) = P_1 - \lim_{j \rightarrow \infty} P_j = 1\,,
    \]
    since
    \[ 
       0<  P_j \leq \frac{R(j-1)}{2(j-1)} \,,
    \]
    so $\lim_{j \rightarrow \infty} P_j = 0.$
    
    \paragraph{Optimal Revenue under $\D$}
    For any $j \in \N, j \geq 2$, notice that
    \[
        \rev_{\D}(i_j) = i_j \cdot \Pr_{X \sim \D}[X \geq i_j] = i_j \cdot \sum_{j' \geq j} \left(P_{j'} - P_{j'+1}\right)= i_j \cdot P_j = 2 - R(j-1) \,.
    \]
    Moreover, for any number $p \in \R_+, p \notin \supp(\D)$, let $j' = \min\{j \in \N : i_j > p\}$. It holds that
    \[
        \rev_{\D}(p) = p \cdot \Pr_{X \sim \D}[X \geq p] = p \cdot \Pr_{X \sim \D}[X \geq i_{j'}] < i_{j'} \cdot \Pr_{X \sim \D}[X \geq i_{j'}] = 2 - R(j'-1) \,.
    \]
    Thus, we can see that
    \[
        \opt_{\D} = \sup_{p \in \R_+} \rev_{\D}(p) = 2\,.
    \]

    \paragraph{Sub-Optimality of $t_n$ under $\D$}
    Notice that, by definition of $\D$, 
    for any $j \in \N$
    it holds that
    \[
        \Pr_{X \sim \D}[X \leq i_j] = 1 - \Pr_{X \sim \D}[X \geq i_{j+1}] = 1 - P_{j+1} \geq 1 - \frac{R(j)}{2j} \,.
    \]
    By taking a union bound over $j$ i.i.d. draws
    from $\D$ we see that
    \[
        \Pr_{X_1,\ldots,X_j \sim \D^j}[X_1 \leq i_j ,\ldots,X_j \leq i_j] \geq 1-\frac{R(j)}{2} \,.
    \]

    Let $E_j$ be the event that
    $X_1 \leq i_j, \ldots, X_j \leq i_j.$
    Then, under the event $E_j$, for
    any realization of $X_1,\ldots,X_j$, by the definition of 
    $i_{j+1}$, we have
    that
    \[
        \Pr[t_j(X_1,\ldots,X_j) \geq i_{j+1} | E_j] \leq \frac{R(j)}{4} \,,
    \]
    where the probability is taken with respect to the random
    draw of the algorithm.    Let $E_j'$ be the event that $t_j(X_1,\ldots,X_j) \leq i_{j+1}$.
    Next, we show that for
    any $p \in [0, i_{j+1}]$
    it holds that
    \[
        \rev_{\D}(p) \leq 2 - R(j) \,.
    \]
        To see that, first note that the case \(p=0\) is immediate, since \(\rev_\D(0)=0\le 2-R(j)\). Now fix \(p\in(0,i_{j+1}]\), and let
    \(j' \in \N\) be the smallest number such that \(i_{j'+1}\ge p\).
    Notice that \(j'\le j\). Since \(p>0=i_1\) and the support points are increasing, there is no support point in the interval \([p,i_{j'+1})\). Hence
    \[
    \Pr_{X\sim\D}[X\ge p]
    =
    \Pr_{X\sim\D}[X\ge i_{j'+1}].
    \]
    It follows that
    \begin{align*}
           \rev_{\D}(p) &= p \cdot \Pr_{X \sim \D}[X \geq p]\\
           &= p \cdot \Pr_{X \sim \D}[X \geq i_{j'+1}]\\
           &\leq
        i_{j'+1} \cdot \Pr_{X \sim \D}[X \geq i_{j'+1}] & (\text{Definition of $j'$})\\
        &\leq 2 - R(j') & (\text{Definition of $i_{j'+1}$})\\
        &\leq 2 - R(j)\,. & (\text{$R(\cdot)$ is decreasing}) 
    \end{align*}
Hence, for the revenue of the algorithm under the events 
$E_j, E'_j$ we have that
\[
    \E_{X_1,\ldots,X_j \sim \D^j}\left[\rev_{\D}(t_j(X_1,\ldots,X_j))|E_j, E'_{j}\right] \leq 2 -R(j) \,.
\]
To simplify the notation, let us
denote $X_{1:j} = X_1,\ldots,X_j.$
Thus, for the revenue of the algorithm we have that
\begin{align*}
    \E_{X_1,\ldots,X_j \sim {\D}^j}\left[\rev_{\D}(t_j(X_{1:j}))\right] &= 
    \E_{X_1,\ldots,X_j \sim {\D}^j}\left[\rev_{\D}(t_j(X_{1:j}))|E_j\right] \Pr[E_j] & \\
    &+ \E_{X_1,\ldots,X_j \sim {\D}^j}\left[\rev_{\D}(t_j(X_{1:j}))|\lnot E_j\right] \Pr[\lnot E_j] & (\text{Total probability)}\\
    &\leq \E_{X_1,\ldots,X_j \sim {\D}^j}\left[\rev_{\D}(t_j(X_{1:j}))|E_j\right] \Pr[E_j] & \\
    &+ 2\cdot \Pr[\lnot E_j] & (\opt = 2)\\
    &\leq \E_{X_1,\ldots,X_j \sim {\D}^j}\left[\rev_{\D}(t_j(X_{1:j}))|E_j, E'_{j}\right]\Pr[E'_j|E_j] \Pr[E_j] &\\
        &+\E_{X_1,\ldots,X_j \sim {\D}^j}\left[\rev_{\D}(t_j(X_{1:j}))\mid E_j, \lnot E'_{j}\right]\Pr[\lnot E'_j\mid E_j]\Pr[E_j] & \\
    &+2\cdot(1-\Pr[E_j]) & (\text{Total probability})\\
     &\leq\E_{X_1,\ldots,X_j \sim {\D}^j}\left[\rev_{\D}(t_j(X_{1:j}))|E_j, E'_{j}\right] \Pr[E'_j|E_j] \Pr[E_j] & \\
    &+ 2\Pr[\lnot E'_j|E_j]\Pr[E_j] + 2\cdot(1-\Pr[E_j])  & (\opt = 2)\\
    &\leq \E_{X_1,\ldots,X_j \sim {\D}^j}\left[\rev_{\D}(t_j(X_{1:j}))|E_j, E'_{j}\right]  \Pr[E_j] & \\
    & +2\Pr[\lnot E'_j|E_j] + 2\cdot(1-\Pr[E_j])  & ( \text{prob.} \leq 1) \\
    &\leq \E_{X_1,\ldots,X_j \sim {\D}^j}\left[\rev_{\D}(t_j(X_{1:j}))|E_j, E'_{j}\right]  \Pr[E_j] & \\
    & +\frac{R(j)}{2} + 2\cdot(1-\Pr[E_j])  & ( \text{Def. of $E_j'$}) \\
    & \leq \left(\E_{X_1,\ldots,X_j \sim {\D}^j}\left[\rev_{\D}(t_j(X_{1:j}))|E_j, E'_{j}\right]  - 2\right)\Pr[E_j] & \\
    &+ \frac{R(j)}{2} + 2 \,.& (\text{Rearrange})
\end{align*}

Now notice that
\[
     \left(\E_{X_1,\ldots,X_j \sim {\D}^j}\left[\rev_{\D}(t_j(X_1,\ldots,X_j))|E_j, E'_{j}\right]  - 2\right)\Pr[E_j]
\]
decreases as $\Pr[E_j]$ increases, because the expression in the
parentheses is negative. Since 
$\Pr[E_j] \geq 1-R(j)/2$ we have that
\begin{align*}
    \left(\E_{X_1,\ldots,X_j \sim {\D}^j}\left[\rev_{\D}(t_j(X_1,\ldots,X_j))|E_j, E'_{j}\right]  - 2\right)\Pr[E_j] \leq \\
    \left(\E_{X_1,\ldots,X_j \sim {\D}^j}\left[\rev_{\D}(t_j(X_1,\ldots,X_j))|E_j, E'_{j}\right]  - 2\right)(1-R(j)/2) \,.
\end{align*}
Moreover,
\[
    \E_{X_1,\ldots,X_j \sim {\D}^j}\left[\rev_{\D}(t_j(X_1,\ldots,X_j))|E_j, E'_{j}\right] \leq 2 -R(j) \,.
\]
Chaining these two inequalities gives us
\[
    \left(\E_{X_1,\ldots,X_j \sim {\D}^j}\left[\rev_{\D}(t_j(X_1,\ldots,X_j))|E_j, E'_{j}\right]  - 2\right)\Pr[E_j] \leq - R(j)(1-R(j)/2) = -R(j) +\frac{R^2(j)}{2} \,.
\]
Now chaining this inequality with the sequence of inequalities
we had above gives us
\begin{align*}
    \E_{X_1,\ldots,X_j \sim {\D}^j}\left[\rev_{\D}(t_j(X_1,\ldots,X_j))\right] &\leq -R(j) +\frac{R^2(j)}{2}  + \frac{R(j)}{2} + 2\\
    &= \opt - \frac{R(j)}{2} + \frac{R^2(j)}{2} \,.
\end{align*}
Since $R(j) < 1, \lim_{j \rightarrow \infty} R(j) = 0$, there is some
$j_0 \in \N$ such that for all $j \geq j_0$ it holds
that $\nicefrac{R^2(j)}{2} \leq \nicefrac{R(j)}{4}.$
Thus, for all $j \geq j_0$ we have
\begin{align*}
         \E_{X_1,\ldots,X_j \sim {\D}^j}\left[\rev_{\D}(t_j(X_1,\ldots,X_j))\right] &\leq \opt - \frac{R(j)}{2} + \frac{R(j)}{4} \\
         &\leq \opt -  \frac{R(j)}{4} \,.
\end{align*}
Recall that there is an infinite sequence of
$j \in \N$ such that $R(j) = \phi(j).$ 
Thus, the previous inequality shows that there is
an infinite sequence of $j \in \N$ such that
\[
   \E_{X_1,\ldots,X_j \sim {\D}^j}\left[\rev_{\D}(t_j(X_1,\ldots,X_j))\right] \leq
   \opt -  \frac{\phi(j)}{4} \,.
\]
This concludes the proof.   
\end{proof}

By inspecting the construction that witnesses
the arbitrarily slow rates lower bound in the previous result, one can observe that the distribution has the property that the
supremum of the revenue is not achieved by any price. It is 
worth highlighting that there are many natural distributions,
even regular
ones, that have this property. We refer the reader to \cref{section:Examples} 
for more detailed examples of such distributions.

\section{Near $1/\sqrt{n}$ Upper Bound for Unbounded Support}
\label{proof:upperUnbounded}

In this section, we prove the following upper bound that is achieved by the capped ERM algorithm.
The algorithm is simple and its pseudo-code is as follows.
\smallskip
\begin{mybox}
    \begin{center}
        The Algorithm of \Cref{thm:NearSqRootRates} (Capped Empirical Revenue Maximization)
    \end{center}
    \begin{enumerate}
        \item \textbf{Input}: Draw $S \sim \D^n$, non-decreasing function $g : \N \to \N$

        \item Construct the empirical distribution $\wh \D_n$

        \item Set $G = [0,g(n)]$ ~~~~~ \emph{\#Truncate the domain until point $g(n)$.}

        \item \textbf{return} the price $\wh t_n = \argmax_{t \in G} \rev_{\wh \D_n}(t)$.
    \end{enumerate}
\end{mybox}

\begin{lem}
[Near $1/\sqrt{n}$ Upper Bound]
Let $\mathbb{D}$ be the class of all valuation 
distributions over $\R_+$ 
whose optimal revenue is finite and 
can be attained by some price in the support.
Then, for any rate $R(n) \in \omega(n^{-1/2})$, 
there is an algorithm that learns $\mathbb{D}$ at universal rate $R(n).$
\end{lem}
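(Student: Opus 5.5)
We run the capped ERM algorithm above with a cap $g$ tuned to the target rate. Since $R(n)\sqrt{n}\to\infty$, set $h(n) := R(n)\sqrt{n}$, which tends to infinity. Choose a non-decreasing, integer-valued $g$ with $g(n)\to\infty$ that grows slowly enough that $g(n) = o(h(n))$. For instance, take $g(n) \le \sqrt{\inf_{m\ge n} h(m)}$, suitably rounded and floored at $1$; monotonicity of $g$ is arranged by using this running infimum. Fix $\D\in\mathbb D$ with optimal price $p^*$ and $\opt_\D=\rev_\D(p^*)<\infty$. Let $n_0$ be such that $g(n)\ge p^*$ for all $n\ge n_0$. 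The constants $C,c$ in \Cref{def:universal} may depend on $\D$, so the finitely many $n<n_0$ are absorbed into $C$: the gap is at most $\opt_\D$ and $R$ is positive there.

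Now fix $n\ge n_0$, so $p^*\in G=[0,g(n)]$. Let $\Delta := \sup_x |F_n(x)-F(x)|$. For every $t\in G$,
\[
|\rev_{\wh\D_n}(t)-\rev_\D(t)| \le t\,\Delta \le g(n)\,\Delta .
\]
Here $F_n$ and $F$ are taken with the strict-inequality convention $\Pr[v<p]$; DKW (\cref{lem:dkw-ineq}) applies equally to left limits, so this causes no issue. Since $\wh t_n$ maximizes empirical revenue over $G$ and $p^*\in G$, the standard ERM chain gives
\[
\opt_\D-\rev_\D(\wh t_n) \le 2g(n)\,\Delta .
\]
If the argmax is not attained on $G$, we instead take a near-maximizer within slack $1/n$, which is $o(R(n))$.

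Taking expectations, $\epsilon_n \le 2g(n)\,\E[\Delta]$. Integrating the DKW tail $\Pr[\Delta>\varepsilon]\le 2e^{-2n\varepsilon^2}$ gives $\E[\Delta]\le \sqrt{\pi/(2n)}$, so
\[
\epsilon_n \le C_0\, g(n)/\sqrt n = C_0\,\frac{g(n)}{h(n)}\,R(n),
\]
with $C_0$ an absolute constant. Since $g(n)\le\sqrt{h(n)}$ eventually, this is at most $C_0 R(n)/\sqrt{h(n)} \le R(n)$ for large $n$. That proves the bound with $c=1$ and a $\D$-dependent $C$.

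The main obstacle is the first step. The cap $g$ must be independent of $\D$ (we need a single algorithm), grow to infinity so that it eventually covers every $p^*$, and yet stay $o(\sqrt n\,R(n))$. The only source of $\D$-dependence is the threshold $n_0$, which enters through $C$, and this is exactly what the universal definition permits. The construction also needs some care when $h$ is not monotone, which is why we use the running infimum. Everything else is routine DKW plus the ERM comparison.
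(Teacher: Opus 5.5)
Your proposal is correct and takes essentially the same route as the paper. Both run capped ERM on $[0,g(n)]$ with a $\D$-independent cap $g\to\infty$, integrate the DKW tail to get $\mathbb{E}[\sup_x|F_n(x)-F(x)|]\le\sqrt{\pi/(2n)}$, compare against $p^*$ once $g(n)\ge p^*$, and absorb the finitely many early $n$ into the $\D$-dependent constant. Your explicit choice $g(n)\approx\sqrt{\inf_{m\ge n}R(m)\sqrt{m}}$, together with your remarks on the strict-inequality CDF convention and on attainment of the argmax, just makes precise what the paper leaves implicit: the paper only asserts that some non-decreasing $g\in\omega(1)$ with $g(n)/\sqrt{n}\le R(n)$ exists.
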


\begin{proof}
    Let $\D \in \mathbb{D}$ be the target  distribution in the 
    family $\mathbb D$, and let $p^* \in \R_+$ be a price
    such that $\opt_\D = \rev_\D (p^*) = \sup_{p\geq 0} p \cdot \Pr_{v \sim \D}[v \geq p].$ By definition of the class $\mathbb D,$ such a $p^*$ is well-defined.

    Since $R(n) \in \omega(1/\sqrt{n})$ there exists some 
    non-decreasing function $g(n) \in \omega(1)$ and
    some ${n_1} \in \N$ 
    such that for all $n\geq {n_1}$ it holds that 
    $g(n)/\sqrt{n} \leq R(n).$ Let $n_2 \in \N$ be the smallest 
    number such that $g(n_2) \geq p^*.$ Since $g(n) \in \omega(1),$ such a number exists. Let also
    $n_3 = \max\{n_1, n_2\}.$

    We define the learning algorithm as follows:
for every \(n \in \N\) we create
the empirical distribution \(\hat \D_n\) on the samples
and output the price
\[
\wh t_n \in \argmax_{p \leq g(n)}
p \cdot \Pr_{v \sim \hat \D_n}[v \geq p].
\]
    Let $F_n$ be the empirical CDF 
    constructed by taking $n$ i.i.d. samples
    from $\D$ {and $F$ be the true CDF}.

    Then, notice that
     for all $n \geq n_3$ we also have
      $g(n) \geq p^*$. Recall the DKW inequality (\cref{lem:dkw-ineq}) which
      shows that for all $n \in \N,  \varepsilon > 0$
      it holds that
      \[
         \Pr[\sup_{x \in \R}\abs{F_n(x) - F(x)} > \varepsilon] \leq 2e^{-2n \varepsilon^2} \,.
      \]
      In order to translate the DKW bound to a bound on the expectation $\E[\sup_{x \in \R}\abs{F_n(x) - F(x)}]$,
      we can write
      \begin{align*}
          \E\left[\sup_{x \in \R}\abs{F_n(x) - F(x)}\right] &= \int_{0}^\infty \Pr[\sup_{x \in \R}\abs{F_n(x) - F(x)} > t] dt & (\text{Def. of expectation})\\
          &\leq \int_{0}^\infty 2 e^{-2n t^2} dt & (\text{DKW inequality})\\
          &= \sqrt{\frac{\pi}{2n}}\,. & 
      \end{align*}
      Let $\wh t_n$
      be the price that our algorithm outputs.
      Since $\wh t_n \leq g(n)$ it follows that
      \begin{equation}\label{eq:dkw-empirical-price}
        \E\left[\abs{\wh t_n \cdot (1-F_n(\wh t_n)) - \wh t_n \cdot (1-F(\wh t_n))} \right] \leq g(n) \cdot \sqrt{\frac{\pi}{2n}}  \,.
      \end{equation}
      Let us consider $n \geq n_3$. Similarly, we have
      \begin{equation}\label{eq:dkw-true-price}
        \E\left[\abs{ p^* \cdot (1-F_n(p^*)) -  p^* \cdot (1- F( p^*))}\right] \leq p^*\cdot \sqrt{\frac{\pi}{2n}} \,.
      \end{equation}
      By definition of the algorithm, it holds
      that
      \[
        \wh t_n \cdot (1- F_n(\wh t_n)) \geq p^* \cdot (1 - F_n(p^*)) \,.
      \]
      Chaining the above inequalities, we get
      that for all $n \geq n_3$
     \begin{align*}
         \E[\rev(\wh t_n)] &= \E\left[\wh t_n (1 - F(\wh t_n))\right] &\\
         &\geq \E\left[\wh t_n (1 - F_n(\wh t_n))\right] - g(n) \cdot \sqrt{\frac{\pi}{2n}} & (\text{\Cref{eq:dkw-empirical-price}})\\
         &\geq \E\left[ p^* (1 - F_n(p^*))\right] - g(n) \cdot \sqrt{\frac{\pi}{2n}} & (\text{Def. of alg.}) \\
             &\geq \E\left[ p^* (1 - F(p^*))\right] - (g(n) + p^*)\cdot \sqrt{\frac{\pi}{2n}}  & (\text{\Cref{eq:dkw-true-price}})\\
        &= \opt_\D - (g(n)+p^*)\sqrt{\frac{\pi}{2n}}.
    \end{align*}
    Since \(R(n)\in \omega(n^{-1/2})\) and \(g(n)/\sqrt n \le R(n)\) for all sufficiently large \(n\), we have
    \[
    (g(n)+p^*)\sqrt{\frac{\pi}{2n}}
    \le
    \sqrt{\frac{\pi}{2}}R(n)
    +
    p^*\sqrt{\frac{\pi}{2n}}
    \le
    C_\D R(n)
    \]
    for all sufficiently large \(n\), where the last inequality uses \(R(n)\sqrt n\to\infty\). Increasing \(C_\D\) if necessary to handle finitely many smaller values of \(n\), and taking \(c_\D=1\), we obtain
    \[
    \epsilon_n(\wh t_n,\D)
    =
    \opt_\D-\E[\rev_\D(\wh t_n)]
    \le
    C_\D R(c_\D n)
    \]
    for all \(n\).
\end{proof}

\section{Universal Lower Bound for Bounded Support }\label{proof:root-n-lb}
In this section we give the formal proof of the $o(1/\sqrt{n})$ lower bound.
\label{proof:UniversalLBbounded}
\begin{thm}
[$o(1/\sqrt{n})$-Rates (Lower Bound)]
\label{thm:LowerBoundBounded}
Consider the class of all valuation distributions $\mathbb D$ on $[0,1]$ with finite maximum revenue $\opt_D$. For any rate $R(n) \in o(n^{-1/2})$, the class $\mathbb D$ cannot be learned at a universal rate faster than $R(n).$
\end{thm}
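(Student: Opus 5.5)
The plan follows the binary-tree construction sketched in \cref{sec:tech-overview}. Fix an algorithm $(t_n)$ and a rate $R(n)\in o(n^{-1/2})$. The goal is a single distribution $\D$ on $[\nicefrac12,1]$ such that $\epsilon_n(t_n,\D)\ge C R(cn)$ for infinitely many $n$. Since $\sqrt{n}R(n)\to 0$, there is room to pick a rapidly increasing sequence of sample sizes $n_1<n_2<\cdots$. At level $k$, a two-point ``gadget'' with separation $\delta_k \approx 1/\sqrt{n_k}$ will be embedded, so that the achievable revenue gap is of order $\delta_k\cdot(\text{mass}_k)$. This quantity still dominates $R(n_k)$ as long as the mass placed at level $k$ is at least $\sqrt{n_k}R(n_k)/c$, which tends to $0$; hence the level masses can be made summable.

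\textbf{Step 1 (uniform gadget, \cref{lem:uniform-construction}).} First I would prove a PAC-type lemma. Consider a fixed prefix distribution $\D_{<k}$ carrying mass $1-m_k$, and append a component of mass $m_k$ in one of two configurations $\D^{+},\D^{-}$. In each configuration the optimal price lies either at a designated point $a_k$ or at a slightly higher point $b_k$, and the two configurations differ only through a coin with bias $\nicefrac12\pm\delta_k$. The lemma should say that, given $n_k$ samples, any (randomized) price rule loses revenue at least $\Omega(m_k\delta_k)$ under one of the two configurations with constant probability. The argument is the classical fact that estimating a coin's bias to within $\delta$ requires $\Omega(1/\delta^2)$ flips: a price rule that is near-optimal for both configurations yields a test between them. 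The values should be chosen so that the revenue curves are flat up to $O(m_k\delta_k)$ around the optimum, which makes the decision ``left vs.\ right of the node'' forced.

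\textbf{Step 2 (tree and random path).} Next I would build the infinite binary tree inductively. Each path is described by a sequence of bits $\sigma\in\{\pm\}^{\N}$, and the bit $\sigma_k$ selects the configuration at level $k$. Later levels get masses $m_{k+1}\ll 1/n_k$, so with $n_k$ samples the data are w.h.p.\ (failure probability $\le n_k\sum_{j>k}m_j$, kept small) indistinguishable from samples of the depth-$k$ truncation. Later levels must also not change which side of node $k$ is optimal: their values are placed weakly below and within the already chosen interval, and their mass is too small to move the optimum by more than $o(m_k\delta_k)$. This is where the parameters are brittle. Choose $n_{k+1}$ so large that $\sqrt{n_{k+1}}R(n_{k+1})\le m_{k+1}\cdot(\text{small constant})$ — possible because $\sqrt{n}R(n)\to0$ — and then fix $\delta_{k+1}=1/\sqrt{n_{k+1}}$. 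The result is that $\D_\sigma$ has support in $[\nicefrac12,1]$, hence finite and attained optimum, and $\opt_{\D_\sigma}-\rev_{\D_\sigma}(p)\gtrsim m_k\delta_k\ge C R(n_k)$ whenever $p$ lies on the wrong side at level $k$.

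\textbf{Step 3 (derandomization).} Draw $\sigma$ uniformly, i.e.\ by fair coins. Conditional on $\sigma_{<k}$, Step 1 applied to the algorithm $t_{n_k}$ gives, up to the small tail-contamination error, $\Pr_\sigma[\epsilon_{n_k}(t_{n_k},\D_\sigma)\ge CR(n_k)]\ge q>0$ for every $k$. Let $A_k$ denote this event. Reverse Fatou gives $\Pr[\limsup_k A_k]\ge\limsup_k\Pr[A_k]\ge q>0$, so some fixed $\sigma$ lies in infinitely many $A_k$. That $\D_\sigma$ is the required hard distribution, with $c=1$.

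\textbf{Main obstacle.} I expect the main difficulty to be Step 2: choosing values and masses so that three requirements hold simultaneously. (i) The gadget at level $k$ still forces an $\Omega(m_k\delta_k)$ loss after all deeper levels are added. (ii) Deeper levels are invisible to $n_k$ samples. (iii) $m_k\delta_k$ beats $R(n_k)$. My first attempt would be to take $m_k=2^{-k}$ and then choose $n_k$ after $m_k$ via the condition $\sqrt{n_k}R(n_k)\le 2^{-k}\epsilon$. The remaining check is that this choice is compatible with $m_{k+1}\ll 1/n_k$; if not, I would let the masses decay super-exponentially along the sequence $n_k$ instead.
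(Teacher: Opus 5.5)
Your plan is correct and is essentially the paper's own proof. The paper iterates a coin-bias PAC gadget (Lemma~\ref{lem:uniform-construction}, which adds one atom $x_{p,q}=\frac{q}{p+q}x$ of mass $p\pm\gamma$ revenue-tied with the current optimum) along a fair-coin random path of an infinite binary tree, chooses each level's mass after the previous sample size so that deeper levels are invisible and fit inside $\pm\gamma^2$ slack, and uses reverse Fatou to extract a single hard $\mathcal{D}_\sigma$, which is your outline. The differences are only in parametrization: your fixed-mass two-point split with $\delta_k=1/\sqrt{n_k}$ gives a gap of order $m_k/\sqrt{n_k}$ rather than the paper's $\gamma/4\asymp\sqrt{p/n}$, which is still enough since $n_k$ is chosen after $m_k$, and it keeps each level's total mass independent of $\sigma$; your fallback of choosing $m_{k+1}$ after $n_k$ is exactly the paper's ordering, which is needed because $m_k=2^{-k}$ is not compatible in general.
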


\subsection{Lower Bound for Coin Estimation}

We introduce the following standard lemma which lower bounds the number of samples needed to distinguish between two similar coins. This lemma will be crucial for the universal lower bound and follows by observing that $\mathrm{KL}(\mathrm{Bernoulli}(p-\gamma) \parallel \mathrm{Bernoulli}(p+\gamma) ) \in O(\gamma^2/p)$ when $\gamma \ll p$ for small $p.$
\begin{lem}
[Lower Bound for Coin Estimation]
\label{lem:anti-bernstein}
Let $p$ be sufficiently small and $\gamma \ll p.$
Let $n = cp/\gamma^2$ for a universal absolute constant $c$.
Let $\wh \sigma_n : \{0,1\}^n \to \{-1,1\}$.
Let $\D_{-1} = \mathrm{Bernoulli}(p-\gamma)$ and $\D_{1} = \mathrm{Bernoulli}(p+\gamma)$.
Let $\sigma \sim \mathrm{Uniform}(\{-1,1\})$ and let $S$ be conditionally $\D_\sigma^n$ given $\sigma.$
Then 
\[
\Pr_S[\wh \sigma_n(S) \neq \sigma] \geq c'\,,
\]
for some universal constant $c'$.
\end{lem}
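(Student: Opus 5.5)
We prove this via Le Cam's two-point method. The Bayes error of the best test is governed by the total variation distance between the two product distributions. Concretely, for any (possibly randomized) estimator $\wh\sigma_n$,
\[
\Pr[\wh\sigma_n(S)\neq\sigma] \;\ge\; \tfrac12\bigl(1-\mathrm{TV}(\D_{-1}^n,\D_{1}^n)\bigr),
\]
since the optimal decision rule under a uniform prior is the likelihood-ratio test, whose error is exactly this quantity. The task therefore reduces to showing that $\mathrm{TV}(\D_{-1}^n,\D_1^n)$ is bounded away from $1$ by a universal constant when $n=cp/\gamma^2$.

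We bound the total variation via Pinsker's inequality, $\mathrm{TV}\le\sqrt{\mathrm{KL}/2}$, together with tensorization of KL:
\[
\mathrm{KL}(\D_{-1}^n\|\D_1^n)=n\,\mathrm{KL}(\mathrm{Bernoulli}(p-\gamma)\|\mathrm{Bernoulli}(p+\gamma)).
\]
The key estimate is the single-coin KL bound. We use the chi-square upper bound $\mathrm{KL}(P\|Q)\le\chi^2(P\|Q)$, which for two Bernoullis gives
\[
\chi^2=\frac{(2\gamma)^2}{p+\gamma}+\frac{(2\gamma)^2}{1-p-\gamma}=\frac{4\gamma^2}{(p+\gamma)(1-p-\gamma)}.
\]
Under the hypotheses that $p$ is sufficiently small (say $p\le 1/4$) and $\gamma\ll p$ (say $\gamma\le p/2$), this is at most $C\gamma^2/p$ for an absolute constant $C$, e.g.\ $C=8$.

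Combining the steps, $\mathrm{KL}(\D_{-1}^n\|\D_1^n)\le nC\gamma^2/p = cC$. Choosing $c$ small enough that $cC\le 1/2$ gives $\mathrm{TV}\le 1/2$, and hence
\[
\Pr[\wh\sigma_n(S)\neq\sigma]\ge 1/4 =: c'.
\]
If the algorithm is randomized, we condition on its internal randomness, apply the bound for each fixed deterministic test, and average.

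The only delicate point is making the informal phrases ``sufficiently small'' and ``$\gamma\ll p$'' precise so that the KL bound is uniform. The chi-square route avoids Taylor expansions, so I expect no real obstacle. One caveat is that if $n=cp/\gamma^2$ is not an integer, we take the floor; the bound only gets better with fewer samples.
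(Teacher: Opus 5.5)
Your proof is correct and follows the paper's route. The paper justifies the lemma only by noting that $\mathrm{KL}(\mathrm{Bernoulli}(p-\gamma)\,\|\,\mathrm{Bernoulli}(p+\gamma)) = O(\gamma^2/p)$; you supply the omitted two-point (Le Cam), tensorization and Pinsker steps, and your $\chi^2$ bound $4\gamma^2/((p+\gamma)(1-p-\gamma)) \le 8\gamma^2/p$ is a clean way to get that KL estimate. Pinsker forces you to take $c$ small, which fits the lemma's existential reading of $c$ and how it is used later (and if you wanted the bound for an arbitrary fixed $c$, the Bretagnolle--Huber inequality gives $c' = \tfrac14 e^{-8c}$ instead).
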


\subsection{Uniform Lower Bound for Bounded Support}
In this section, we provide a novel uniform lower bound for bounded support distributions, which we will use in the construction of the universal lower bound.

Before proving the exact statement, we provide its intuition.
In words, given a point $x \in (1/2,1]$ the next Lemma gives another point $x_{p,q}$ between $1/2$ and x such that there exists two collections of distributions $\sD_{-1}$ and $\sD_{1}$ and a number $\gamma > 0$ with the following property:
\begin{enumerate}
    \item Any price $t'$ that is on the left of $\frac{1}{2}(x_{p,q} + x)$ is $\gamma$ sub-optimal for any distribution in $\sD_{-1}$
    \item Any price $t'$ that is on the right of $\frac{1}{2}(x_{p,q} + x)$ is $\gamma$ sub-optimal for any distribution in $\sD_{1}$.
\end{enumerate}
This property will be key for our lower bound construction since, roughly speaking, we are going to force the learning algorithm to repeatedly guess whether the optimal price is on the left or right of $x_{p,q} + x$, and the above guarantees that we can force the algorithm make a wrong prediction.

\begin{figure}[!ht]
    \centering
    \includegraphics[width=0.7\linewidth]{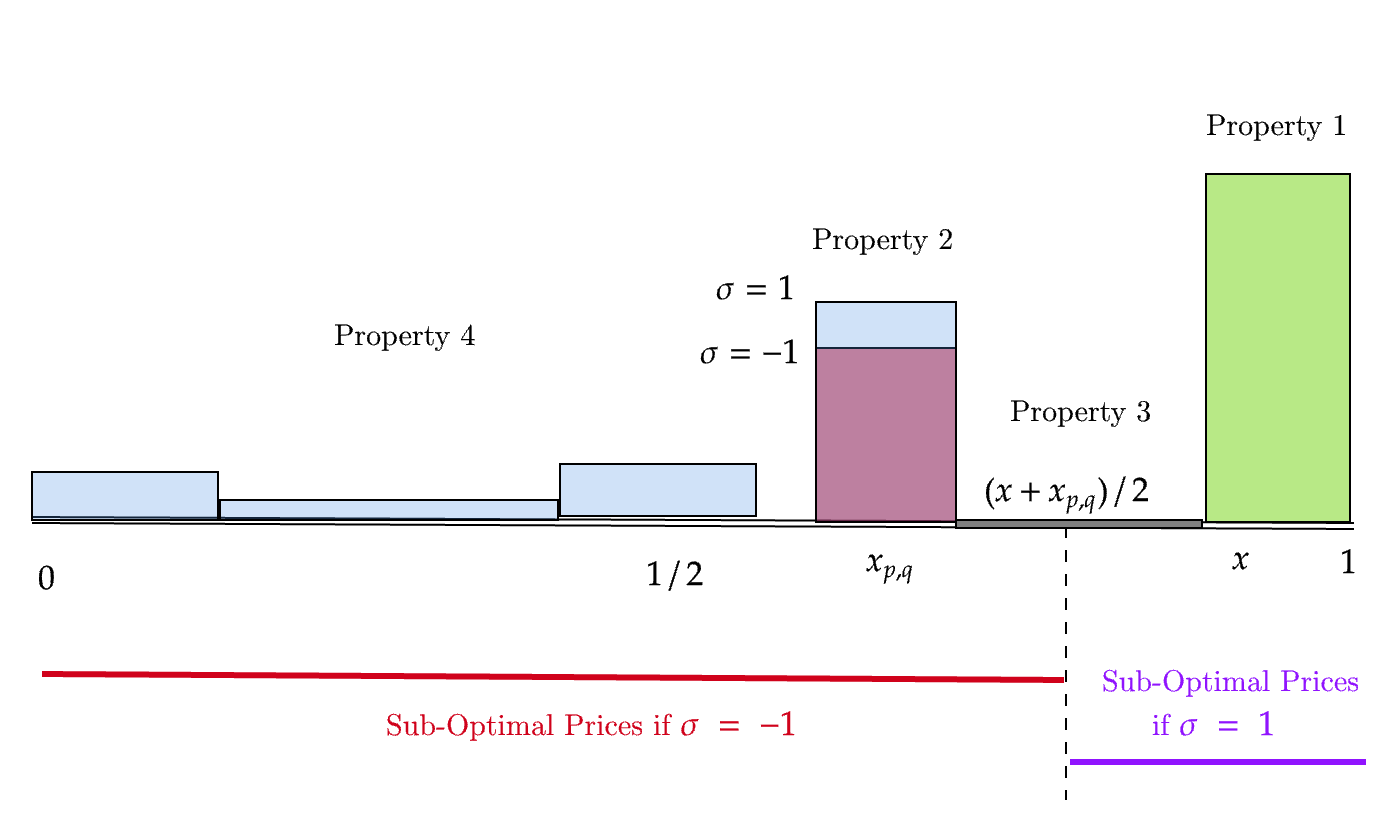}
    \caption{Illustration of distribution $\D$ satisfying the constraints of \Cref{lem:uniform-construction}. Properties 1-3 are specified directly by \Cref{lem:uniform-construction}. Property 4 places an additional restriction on how $\D$ should behave and we just illustrate it in a hand-wavy manner by putting some mass before the point $x_{p,q} - \gamma^2$. }
    \label{fig:uniform}
\end{figure}

\begin{lem}
[Uniform Lower Bound]
\label{lem:uniform-construction}
Fix any $x \in (1/2,1]$,
value $q \geq \frac{1}{2}$, 
and any sufficiently small $p > 0$ (for the given $x,q$).
Let $x_{p,q} = \frac{q}{p+q}x, x_{p,q} \in (1/2, x)$.
Let $\gamma > 0$ be any sufficiently small value 
($\gamma \ll p$, $\gamma \ll x-x_{p,q}$, $\gamma \ll x_{p,q}$).
For $\sigma \in \{-1,1\}$, 
let $\sD_{\sigma}$ be the set of all distribution $\D$ on $[0,1]$
such that 
\begin{enumerate}
\item $\D((x-\gamma^2,1]) \in [ q, q + \gamma^2]$, 
\item $\D([x_{p,q}-\gamma^2,x_{p,q}+\gamma^2)) \in [p + \sigma \gamma - \gamma^2, p + \sigma\gamma + \gamma^2]$, 
\item 
$\D([x_{p,q}+\gamma^2, x-\gamma^2]) = 0$,
\item The distribution $\D'$ with 
$\D'(\cdot| (0,1] \setminus (x_{p,q}-\gamma^2,x_{p,q}+\gamma^2)) = \D(\cdot | (0,1] \setminus (x_{p,q}-\gamma^2,x_{p,q}+\gamma^2))$, $\D'((0,1] \setminus (x_{p,q}-\gamma^2,x_{p,q}+\gamma^2)) = \D( (0,1] \setminus (x_{p,q}-\gamma^2,x_{p,q}+\gamma^2))$, 
and $\D'(0) = 1 - \D((0,1] \setminus (x_{p,q}-\gamma^2,x_{p,q}+\gamma^2))$
has 
$\argmax_t t \D'([t,1]) \in [x-\gamma^2,x]$
and 
$\argmax_{t \leq x_{p,q} - \gamma^2} t \D'( [t,1] ) = x_{p,q} - \gamma^2$. 
\end{enumerate}
Then $x_{p,q}$ satisfies:
\begin{itemize}
\item $\forall \D \in \sD_{-1}$, 
any $t' \leq \frac{1}{2}(x_{p,q}+x)$ has $t' \D( [t',1]) < \max_{t} t \D([t,1]) - \frac{\gamma}{4}$.
\item $\forall \D \in \sD_{1}$, 
any $t' \geq \frac{1}{2}(x_{p,q}+x)$ has 
$t' \D( [t',1] ) < \max_{t} t \D([t,1]) - \frac{\gamma}{4}$.
\end{itemize}
\end{lem}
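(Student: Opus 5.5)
The plan is a direct revenue comparison. I would identify two benchmark prices, $t_{\mathrm{low}} := x_{p,q}-\gamma^2$ (the left end of the window around $x_{p,q}$) and $t_{\mathrm{high}} \in [x-\gamma^2, x]$ (the maximizer of $t\,\D'([t,1])$ given by Property 4). I would compute their revenues up to $O(\gamma^2)$ errors, and then show that every price on the ``wrong'' side of the midpoint $m := \frac12(x_{p,q}+x)$ is dominated by one of them. The identity that drives everything is $x_{p,q}(p+q) = qx$, which is exactly the definition of $x_{p,q}$.

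\textbf{Step 1 (benchmark revenues).} By Properties 1--3, $\D([t_{\mathrm{low}},1])$ equals the window mass plus the mass above $x-\gamma^2$, since nothing lies in between. This gives
\[
\rev_\D(t_{\mathrm{low}}) = (x_{p,q}-\gamma^2)\bigl(q+p+\sigma\gamma \pm O(\gamma^2)\bigr) = qx + \sigma\gamma x_{p,q} \pm O(\gamma^2).
\]
Also $\rev_\D(x-\gamma^2) \ge (x-\gamma^2)q = qx - O(\gamma^2)$. For the upper bound at high prices, I would use that for $t \ge x-\gamma^2$ the set $[t,1]$ misses the window, so $\D([t,1]) = \D'([t,1]) \le \D((x-\gamma^2,1])$ up to the boundary point. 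Hence $\rev_\D(t) \le x(q+\gamma^2) = qx + O(\gamma^2)$ for all such $t$. Since $x_{p,q} > 1/2$, the two benchmarks differ by about $\gamma/2$, with the sign of the difference set by $\sigma$.

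\textbf{Step 2 (case $\sigma=-1$, prices $t'\le m$).} Here $\opt_\D \ge qx - O(\gamma^2)$. I would split into three ranges.
\begin{itemize}
\item For $t' \in (x_{p,q}+\gamma^2, m]$: $\D([t',1]) \le q+\gamma^2$, so $\rev_\D(t') \le m(q+\gamma^2) = qx - \tfrac{q}{2}(x-x_{p,q}) + O(\gamma^2)$. This is far below $\opt_\D$ because $\gamma \ll x-x_{p,q}$.
\item For $t'$ inside the window: $\rev_\D(t') \le (x_{p,q}+\gamma^2)(q+p-\gamma+O(\gamma^2)) = qx - \gamma x_{p,q} + O(\gamma^2)$.
\item For $t' \le x_{p,q}-\gamma^2$: write $\D([t',1]) = \D'([t',1]) - \D'(\{0\}\text{-adjustment}) + w$, where $w$ is the window mass. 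By the second half of Property 4, $t'\D'([t',1]) \le t_{\mathrm{low}}\D'([t_{\mathrm{low}},1])$, and trivially $t'w \le t_{\mathrm{low}}w$. Hence $\rev_\D(t') \le \rev_\D(t_{\mathrm{low}}) = qx - \gamma x_{p,q} + O(\gamma^2)$.
\end{itemize}
In every range the gap to $\opt_\D$ is at least $\gamma x_{p,q} - O(\gamma^2) \ge \gamma/2 - O(\gamma^2) > \gamma/4$, using $\gamma \ll p$ and $\gamma$ small.

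\textbf{Step 3 (case $\sigma=1$, prices $t'\ge m$).} Now $\opt_\D \ge \rev_\D(t_{\mathrm{low}}) \ge qx + \gamma x_{p,q} - O(\gamma^2)$. Prices $t' \in [m, x-\gamma^2)$ earn at most $m(q+\gamma^2)$, which is far smaller. Prices $t' \ge x-\gamma^2$ earn at most $qx + O(\gamma^2)$ by Step 1. Again the gap exceeds $\gamma/4$.

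\textbf{Main obstacle.} The delicate part is the bookkeeping of the $O(\gamma^2)$ slack together with the endpoint conventions. This includes the half-open window $[x_{p,q}-\gamma^2, x_{p,q}+\gamma^2)$ versus the open set removed when defining $\D'$, and the boundary point $x-\gamma^2$ that appears in both Property 1 and Property 3. In particular, the third range of Step 2 relies on correctly transferring the optimality in Property 4 from $\D'$ back to $\D$. I would handle this by noting that for $t \le t_{\mathrm{low}}$ the difference $\D([t,1]) - \D'([t,1])$ is the same constant, namely $w$ minus whatever mass $\D'$ moved to $0$ (which does not affect prices $t>0$). Adding $t\cdot(\text{constant})$ to both sides then preserves the inequality, because $t \le t_{\mathrm{low}}$. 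The constants hidden in ``sufficiently small'' are then chosen so that all $O(\gamma^2)$ terms sit below $\gamma/4$.
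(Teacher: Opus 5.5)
Your decomposition matches the paper's proof. You use the same benchmark price $x_{p,q}-\gamma^2$ evaluated via $x_{p,q}(p+q)=qx$, the same three ranges for $\sigma=-1$, and the same transfer of the second half of Property 4 from $\D'$ to $\D$ by adding $t\,\D((x_{p,q}-\gamma^2,x_{p,q}+\gamma^2))$ to both sides.

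The gap is in Step 1, and your whole $\sigma=1$ case rests on it. There you claim $t\,\D([t,1])\le x(q+\gamma^2)$ for all $t\ge x-\gamma^2$, deduced from the mass bound $\D([t,1])\le q+\gamma^2$. That only gives $t(q+\gamma^2)$, which is useless for $t\in(x,1]$ when $x<1$. No argument using only Properties 1--3 and the second half of Property 4 can give the claim. To see this, let $\D$ put mass $q$ at the point $1$, mass $p+\gamma$ at $x_{p,q}$, and the rest at $0$. This satisfies all of those conditions. Yet with $\sigma=1$, $x<1$ fixed and $\gamma$ small, the maximum revenue is $\max\{q,\,qx+\gamma x_{p,q}\}=q$, attained at $t'=1\ge m$, so the conclusion fails. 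What rules this out is the first half of Property 4, which your argument never uses: you name $t_{\mathrm{high}}$ as the maximizer of $t\,\D'([t,1])$ but never invoke its maximality.

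The repair is one line, and it is exactly how the paper proves Condition 2. For every $t'\ge m$, the interval $[t',1]$ misses the removed interval $(x_{p,q}-\gamma^2,x_{p,q}+\gamma^2)$. Hence
$t'\,\D([t',1])=t'\,\D'([t',1])\le t_{\mathrm{high}}\,\D'([t_{\mathrm{high}},1])\le x\,\D([x-\gamma^2,1])\le x(q+\gamma^2)$.
This uses $t_{\mathrm{high}}\in[x-\gamma^2,x]$, the fact that $\D'$ agrees with $\D$ on $[x-\gamma^2,1]$, and Property 3 to drop the point $x-\gamma^2$.

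This bound covers all $t'\ge m$ at once, so your split at $x-\gamma^2$ becomes unnecessary. Incidentally, on $[m,x-\gamma^2)$ the correct bound is $(x-\gamma^2)(q+\gamma^2)$, not $m(q+\gamma^2)$, so those prices are not ``far smaller''; the bound $qx+O(\gamma^2)$ is what suffices there. With this fix the rest of your proposal goes through.
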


{The family $\sD_\sigma$ places constraints $1-4$ on the valid distributions. An illustration of a member of $\sD_\sigma$  can be found in Figure \ref{fig:uniform}. Constrains 1--3 are simply mass constraints on $\D$ where only Requirement 2 depends on the sign of $\sigma$ (changing the mass around $x_{p,q}$ as seen in Figure \ref{fig:uniform}). Let us now sketch Requirement 4. This condition, roughly speaking, constructs another distribution $\D'$ by transporting the mass of $\D$ from the interval $(x_{p,q} - \gamma^2, x_{p,q} + \gamma^2)$ to 0. The restriction is that $\D'$ should have maximum revenue near $x$ and if one looks only at prices before $x_{p,q} - \gamma^2$, then the optimal price is exactly at $x_{p,q} - \gamma^2$. 

Under the above constraints, the point $x_{p,q}$ given by the above Lemma satisfies the conclusion. We proceed with the proof.}

\begin{proof}
The lower bound construction gets as input a point $x \in (1/2,1]$ and some value $q$ and returns some small value $p$, a new point $x_{p,q}$ (which will be on the left of $x$ but very close to it) and some very small slackness parameter $\gamma.$ These three parameters will be chosen so that the conclusion of the Lemma holds.

\paragraph{Choosing $x_{p,q}$} Let us first determine the new point: $x_{p,q} = \frac{q}{p+q}x < x$, 
which, since $x > \frac{1}{2}$, 
for sufficiently small $p$ satisfies $x_{p,q} > \frac{1}{2}$
(and moreover satisfies the property $x_{p,q} \to x$ as $p \to 0$).

It remains to show that the choice of $x_{p,q}$ satisfies:
\begin{itemize}
\item (Condition 1) $\forall \D \in \sD_{-1}$, 
any $t' \leq \frac{1}{2}(x_{p,q}+x)$ has $t' \D( [t',1]) < \max_{t} t \D([t,1]) - \frac{\gamma}{4}$.
\item (Condition 2) $\forall \D \in \sD_{1}$, 
any $t' \geq \frac{1}{2}(x_{p,q}+x)$ has 
$t' \D( [t',1] ) < \max_{t} t \D([t,1]) - \frac{\gamma}{4}$.
\end{itemize}
\paragraph{Checking Condition 1} 
Let $\D \in \sD_{-1}$.
Let $t' \leq \frac{1}{2}(x_{p,q}+x)$.
If $t' \geq x_{p,q} + \gamma^2$, 
then 
\begin{align*}
t' \D([t',1]) 
& = t' \D([x-\gamma^2,1])
\leq \frac{1}{2} ( x_{p,q} + x ) \D([x-\gamma^2,1]).
\end{align*}

Since $\gamma \ll x-x_{p,q}$ implies $x_{p,q} \leq x - 4\gamma - 2\gamma^2$, this is at most
\begin{align*}
\frac{1}{2} ( x - 4\gamma - 2\gamma^2 + x) \D([x-\gamma^2,1])
= ( x - \gamma^2 ) \D([x-\gamma^2,1]) - 2\gamma \D([x-\gamma^2,1]).
\end{align*}
Since $\D([x-\gamma^2,1]) \geq q \geq \frac{1}{2}$, 
this is at most
\begin{align*}
( x - \gamma^2 ) \D([x-\gamma^2,1]) - \gamma
\leq \max_t t \D([t,1]) - \gamma.
\end{align*}

On the other hand, 
consider any $t' \in [x_{p,q} - \gamma^2, x_{p,q} + \gamma^2)$.
We have 
\begin{align*} 
t' \D([t',1]) 
\leq (x_{p,q}+\gamma^2) \D([ x_{p,q}-\gamma^2,1])
\leq (x_{p,q}+\gamma^2) \left( p + q + 2\gamma^2 - \gamma \right).
\end{align*}
By definition of $x_{p,q}$, we have 
$p+q = \frac{x}{x_{p,q}} q$, so that the above equals
\begin{align*}
& (x_{p,q}+\gamma^2) \left( \frac{x}{x_{p,q}}q + 2\gamma^2 - \gamma \right)
\\ & = x q - \gamma x_{p,q}
+ (x_{p,q}+\gamma^2) \left( 2\gamma^2 \right) + \gamma^2 \frac{x}{x_{p,q}}q - \gamma^3
\\ & < x q - \frac{\gamma}{2} + 4\gamma^2.
\end{align*}
where we have used that $x_{p,q}+\gamma^2 \leq x \leq 1$, 
$x_{p,q} > \frac{1}{2}$.
Additionally, 
\begin{equation*}
x q 
\leq (x-\gamma^2) q + \gamma^2
\leq (x-\gamma^2) \D([x-\gamma^2,1]) + \gamma^2
\leq \max_t t \D([t,1]) + \gamma^2,
\end{equation*}
so that altogether we have
\begin{equation*}
t' \D([t',1]) < \max_t t \D([t,1]) - \frac{\gamma}{2} + 5 \gamma^2
< \max_t t \D([t,1]) - \frac{\gamma}{4},
\end{equation*}
for $\gamma$ sufficiently small to satisfy $5 \gamma^2 < \frac{\gamma}{4}$.

Finally, 
consider any $t' < x_{p,q} - \gamma^2$.
If $t' \leq 0$, we clearly have $t' \D([t',1]) \leq 0 < \max_t t \D([t,1]) - \gamma$ since $\max_t t \D([t,1]) \geq (x - \gamma^2) q > 2\gamma$.
So suppose $t' \in (0,x_{p,q}-\gamma^2)$.
We then have
\begin{align*}
& t' \D([t',1]) 
= t' \D'([t',1]) + t' \D((x_{p,q}-\gamma^2,x_{p,q}+\gamma^2))
\\ & \leq (x_{p,q}-\gamma^2) \D'([ x_{p,q}-\gamma^2,1]) + t' \D((x_{p,q}-\gamma^2,x_{p,q}+\gamma^2)) 
\leq (x_{p,q}-\gamma^2) \D([ x_{p,q}-\gamma^2,1]). 
\end{align*}
where the first inequality is by the condition on $\D'$
and the second inequality is by definition of $\D'$
and the assumption $t' < x_{p,q}-\gamma^2$.
Since the last expression above is at most 
$\max_{x_{p,q} - \gamma^2 \leq t < x_{p,q} + \gamma^2} t \D([t,1])$, 
our analysis of the case $t' \in [x_{p,q} - \gamma^2, x_{p,q} + \gamma^2)$ above implies this is less than 
$\max_t t \D([t,1]) - \frac{\gamma}{4}$.

\paragraph{Checking Condition 2}
Now we turn to the case $\D \in \sD_1$,
and consider any $t' \geq \frac{1}{2}(x_{p,q}+x)$.
Letting $t'' = \argmax_t t \D'([t,1])$, 
we have
\begin{align*}
t' \D([t',1])
= t' \D'([t',1])
\leq t'' \D'([t'',1])
\leq x \D([x-\gamma^2,1])
\end{align*}
where the last inequality is by our condition on $\D'$.
On the other hand, 
\begin{align*}
\max_t t \D([t,1])
\geq (x_{p,q}-\gamma^2) \D([x_{p,q}-\gamma^2,1])
\geq (x_{p,q} - \gamma^2) \left( p + q + \gamma - \gamma^2 \right).
\end{align*}
Again, we have 
$p+q = \frac{x}{x_{p,q}} q$,
so that the last expression above equals
\begin{align*}
(x_{p,q} - \gamma^2) \left( \frac{x}{x_{p,q}} q + \gamma - \gamma^2 \right)
\geq x q + x_{p,q} \gamma - 4 \gamma^2
> x q + \frac{\gamma}{2} - 4 \gamma^2,
\end{align*}
where the first inequality follows from $x_{p,q} > \frac{1}{2}$, $x q \leq 1$, and $\gamma < 1$,
and the second inequality follows from $x_{p,q} > \frac{1}{2}$.
Thus, 
\begin{equation*}
x q < \max_t t \D([t,1]) - \frac{\gamma}{2} + 4 \gamma^2.
\end{equation*}
Moreover,
\begin{equation*}
x q 
\geq x (q + \gamma^2) - \gamma^2 
\geq x \D([x-\gamma^2,1]) - \gamma^2.
\end{equation*}
Altogether, we have that
\begin{align*}
t' \D([t',1])
\leq x \D([x-\gamma^2,1])
\leq x q + \gamma^2
< \max_t t \D([t,1]) - \frac{\gamma}{2} + 5 \gamma^2
< \max_t t \D([t,1]) - \frac{\gamma}{4},
\end{align*}
supposing $\gamma$ is sufficiently small that $5 \gamma^2 < \frac{\gamma}{4}$.
\end{proof}

\subsection{Universal Lower Bound for Bounded Support}
\label{universalLBbounded}
The next result shows that for the revenue maximization problem and for unrestricted distributions $P$ on $[0,1]$, there is a
near-square-root lower bound:

\begin{lem}
[Near $1/\sqrt{n}$ Lower Bound]
For any algorithm $\wh t_n : [0,1]^n \to [0,1],$ for any rate $R(n) \in o(n^{-1/2})$, there exists a distribution $\D$ on $[0,1]$ such that, for infinitely many $n \in \N$,
\[
\E_{S \sim \D^n}
[\max_{t} t \D([t,1]) - \wh t_n(S) \D([\wh t_n(S),1]) )]
\geq R(n)\,.
\]
\end{lem}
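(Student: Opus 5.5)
We build a random distribution along an infinite binary tree, as described in \cref{sec:tech-overview}, and show that with positive probability over the tree path the algorithm fails for infinitely many $n$.

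\textbf{Rate and levels.} Since $R(n)\in o(n^{-1/2})$, we may replace $R$ by a larger rate that is still $o(n^{-1/2})$ and non-increasing. We then choose a rapidly increasing sequence of sample sizes $n_1<n_2<\cdots$ and gaps $\gamma_k$ with $\gamma_k \approx C\sqrt{p_k/n_k}$. The masses $p_k$ are chosen small but summable, e.g.\ $p_k=2^{-k}\cdot(\text{small})$, and so that $R(n_k)\le \gamma_k/16$. This is possible because $\sqrt{n_k}R(n_k)\to 0$: fix $p_k$ first, then take $n_k$ large enough that $\sqrt{n_k}R(n_k)\le c\sqrt{p_k}$.

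\textbf{Inductive construction.} Start from a root point $x_0=1$ carrying mass $q_0\ge 1/2$. Draw i.i.d.\ fair signs $\sigma_1,\sigma_2,\dots$. At level $k$, given the current point $x_{k-1}$ and the mass $q_{k-1}$ on $[x_{k-1}-\gamma^2,1]$, apply \cref{lem:uniform-construction} with $x=x_{k-1}$, $q=q_{k-1}$, $p=p_k$, $\gamma=\gamma_k$. This places mass $p_k+\sigma_k\gamma_k$ at the new point $x_k:=x_{p,q}$, and we set $q_k=q_{k-1}+p_k+\sigma_k\gamma_k$. All later levels put their mass within $\gamma_k^2$ of $x_k$ (hence the $\gamma^2$ slack in the lemma's conditions) or below $x_k-\gamma_k^2$. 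The leftover mass goes to $0$. Choosing $p_{k+1},\gamma_{k+1}\ll\gamma_k^2$ makes conditions 1--3 hold. Condition 4 holds because, by construction, all deeper mass is arranged so that the revenue of the truncated distribution $\D'$ restricted below $x_k-\gamma^2$ peaks there. I expect verifying condition 4 consistently for every $k$ to be the main technical obstacle: the deeper levels must not shift the argmax, which forces doubly-exponentially decreasing $\gamma_k$, and $n_k$ must be re-chosen accordingly. Given the lemma, the resulting limit distribution $\D_\sigma$ lies in $\sD_{\sigma_k}$ (level $k$ version) for every $k$. Hence any price on the wrong side of $\tfrac12(x_k+x_{k-1})$ loses at least $\gamma_k/4$.

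\textbf{Reduction to coin estimation.} At level $k$, with $n_k$ samples, the samples landing in levels $>k$ have probability at most $n_k\sum_{j>k}p_j\le$ small. We arrange this by choosing the deeper $p_j$ after $n_k$, so that $n_k p_{k+1}\to 0$. The samples falling near $x_k$ are a binomial with parameter $p_k+\sigma_k\gamma_k$. Everything else is determined by $\sigma_{<k}$, which we may condition on. The algorithm's output, thresholded at $\tfrac12(x_k+x_{k-1})$, yields an estimator of $\sigma_k$. By \cref{lem:anti-bernstein} (with $n_k=cp_k/\gamma_k^2$), it errs with probability at least $c'$, up to a small coupling error from higher levels. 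Hence $\E_\sigma[\epsilon_{n_k}]\ge c'\gamma_k/4-o(\gamma_k)\ge R(n_k)$ for every $k$.

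\textbf{Derandomization.} Let $A_k$ be the event that $\epsilon_{n_k}(\wh t,\D_\sigma)\ge R(n_k)$. Since $\epsilon_{n_k}/\gamma_k$ is bounded by a constant, the previous step gives $\Pr_\sigma(A_k)\ge c''>0$. By the reverse Fatou lemma, $\Pr(\limsup A_k)\ge\limsup\Pr(A_k)\ge c''>0$. So some fixed path $\sigma$ produces a single distribution $\D$ on $[0,1]$ for which the bound holds for infinitely many $n$.
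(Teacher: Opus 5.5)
\textbf{There is a genuine gap in how you grow the tree.} At every level you take the previous node $x_{k-1}$ as the reference point $x$ in \Cref{lem:uniform-construction}, and you set $q_k=q_{k-1}+p_k+\sigma_k\gamma_k$ whatever the sign. Condition~4 of that lemma requires $x$ to be, up to $\gamma^2$, the revenue maximizer of the distribution built so far; its proof also uses $\max_t t\D([t,1])\approx xq$. This fails as soon as one sign is $-1$. Since $x_k(q_{k-1}+p_k)=x_{k-1}q_{k-1}$, the revenue of node $k$ is $R_k=R_{k-1}+\sigma_k x_k\gamma_k$. With $\gamma_{j+1}\ll\gamma_j^2$, let $k_1$ be the first index with $\sigma_{k_1}=-1$. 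Then every later node has revenue at most $R_{k_1-1}-\Omega(\gamma_{k_1})$, whatever the later signs are. So the optimum is frozen at $x_{k_1-1}$. That point lies to the right of every later midpoint $\tfrac12(x_k+x_{k-1})$ and is optimal for both values of $\sigma_k$. Hence for $k>k_1$ the lemma's dichotomy is false, and misguessing $\sigma_k$ costs nothing. This is not about deeper levels perturbing the argmax, which your super-exponential decay does handle; the argmax was simply never at $x_{k-1}$. Your coin-estimation reduction is therefore valid only on $\{\sigma_1=\cdots=\sigma_{k-1}=+1\}$, which has probability $2^{1-k}$. It yields only $\Pr_\sigma(A_k)\gtrsim c'2^{-k}$, and the reverse Fatou step then gives nothing.

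The missing idea, which the paper uses, is to take as reference at each level the \emph{current maximizer} $x^*$. This is the leftmost node whose coin came out $+1$ (or the root), with $q$ the mass at or above it (root plus winning nodes only), and the new node is placed just left of $x^*$. Losing nodes then stay to the left of all future references, so condition~4 holds along every branch. Conditional on \emph{every} prefix $\sigma_{[k]}$, level $k$ is a fair choice between staying at $x^*$ and moving to $x_{p,q}$. That gives the uniform bound $\Pr(\text{wrong guess}\mid\sigma_{[k]})\ge c'/2$ that the derandomization needs.

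\textbf{A smaller slip in your derandomization:} $\epsilon_{n_k}/\gamma_k$ is not bounded, since the learner may post, say, price $0$ and lose a constant. So $\E_\sigma[\epsilon_{n_k}]\gtrsim\gamma_k$ does not give $\Pr_\sigma(A_k)\ge c''$. Apply boundedness to the conditional error probability instead. Let $Z_k(\sigma)=\Pr_S(\text{wrong side}\mid\sigma)\in[0,1]$. Then $\E_\sigma Z_k\ge c'/2$, hence $\Pr_\sigma(Z_k\ge c'/4)\ge c'/4$. Also $\epsilon_{n_k}(\D_\sigma)\ge\frac{\gamma_k}{4}Z_k(\sigma)$, which is at least $R(n_k)$ on that event once $R(n_k)\le c'\gamma_k/16$. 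Reverse Fatou applied to these events finishes; the paper does the equivalent via Markov and reverse Fatou on the bounded conditional probabilities. Your other parameter choices match the paper: $p_k$ fixed before $n_k$ with $\sqrt{n_k}R(n_k)\le c\sqrt{p_k}$, and deeper masses $\ll 1/n_k$.
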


\begin{proof}
Fix any rate $R(n) \in o(n^{-1/2}).$ We will first define an ensemble of distributions, for which we will argue that, for any revenue maximization algorithm $\wh t_n$, at least one of the distributions in the family has the property claimed in the lemma, i.e., there exists some distribution $\D$ in the ensemble such that,
for infinitely many $n \in \N$,
\[
\E_{S \sim \D^n}
[\max_{t} t \D([t,1]) - \wh t_n(S) \D([\wh t_n(S),1]) )]
\geq R(n)\,.
\]

\paragraph{Construction of the Ensemble} 
Let $\sigma_{-1} = \sigma_0 = 1$, 
and $\sigma_1,\sigma_2,\ldots \in \{-1,1\}$. 
For $\sigma = (\sigma_0,\sigma_1,\sigma_2,\ldots)$, 
we define a distribution $\D_{\sigma}$ as follows.
We inductively define a sequence of values $x \in (1/2,1]$
serving as the support of the distribution $\D_{\sigma}$.

As a first step, let $x_{()} = 1$ and $\D_{\sigma}(x_{()}) = p_{()} := \frac{1}{2}$. 
Let also $\gamma_{()} = 0$. It remains to place the remaining $1/2$-mass to other points. We will iteratively use the construction of the uniform lower bound of \Cref{lem:uniform-construction} to do that.

We are thinking of the construction of the uniform lower bound as a mapping $L$ that maps pairs $(x,q)$ to specific tuples $(p, x_{p,q}, \gamma)$ that allow for the uniform lower bound construction to go through.

Let us call $L(x := x_{()}, q := 1/2)$ to get $(p_{(\sigma_0)}, x_{p(\sigma_0), 1/2}, \gamma_{(\sigma_0)}).$ Note that
 $p_{(\sigma_0)} > 0$
 and $\gamma_{(\sigma_0)} \in (0,1/4)$ are sufficiently small values in order to satisfy the 
requirements of Lemma~\ref{lem:uniform-construction}.

Let the second point of the support be $x_{(\sigma_0)} = x_{(1)}$ which has the value $x_{p_{(\sigma_0)},1/2}$.
Let $n_{(\sigma_0)} = 1$. The mass of the point $x_{(\sigma_0)}$ is determined in the next iteration.

For the general inductive argument, we will use the notation
$[k'] = \{0,1,\ldots,k'\}$ and
$\sigma_{[k']} = (\sigma_0,\ldots,\sigma_{k'})$ with
$\sigma_{[-1]} = ()$.

For $k = 1:$ Up to now, we have defined the points $x_{()}, x_{(\sigma_0)}$, the point $x_{()}$ has mass 1/2 (and the mass of the point $x_{(\sigma_0)}$ is not defined yet), the defined gaps are $\gamma_{()}, \gamma_{(\sigma_0)}$ and the other parameters are $p_{()}, p_{(\sigma_0)}, n_{(\sigma_0)}$. We will now have to define the mass $\D_\sigma(x_{(\sigma_0)})$ and the new elements $x_{(\sigma_0, \sigma_1)}, p_{(\sigma_0, \sigma_1)}, \gamma_{(\sigma_0, \sigma_1)}$ and $n_{(\sigma_0, \sigma_1)}$ based on $\sigma_1.$

More generally, suppose $k \in \nats$ is such that 
we have defined distinct values 
$x_{\sigma_{[k']}} \in (1/2,1]$ for all $k' \in \{0,\ldots,k-1\}$,
along with $n_{\sigma_{[k']}} \in \nats$, 
$p_{\sigma_{[k']}} > 0$,
and $\gamma_{\sigma_{[k']}} > 0$
and corresponding probability values 
$\D_{\sigma}(x_{\sigma_{[k'-1]}}) \in [p_{\sigma_{[k'-1]}}-\gamma_{\sigma_{[k'-1]}},p_{\sigma_{[k'-1]}}+\gamma_{\sigma_{[k'-1]}}]$,
and satisfying the constraint $\sum_{k' \in \{-1,\ldots,k-1\}} (p_{\sigma_{[k']}}+\gamma_{\sigma_{[k']}}) < 1$, 
all based purely on $\sigma_0,\ldots,\sigma_{k-1}$. We now have to define
$\D_{\sigma}(x_{\sigma_{[k-1]}})$, 
$x_{\sigma_{[k]}}$, $p_{\sigma_{[k]}}$, $\gamma_{\sigma_{[k]}}$, 
and $n_{\sigma_{[k]}}$,
based on $\sigma_k$.

\paragraph{Mass of $x_{(\sigma_0, ..., \sigma_{k-1})}$}
We will use the previous values $p_{\sigma_{[k-1]}}$ and $\gamma_{\sigma_{[k-1]}}$ returned by the uniform lower bound mapping together with the sign $\sigma_k$:
Set
\begin{equation*} 
\D_{\sigma}(x_{\sigma_{[k-1]}}) = p_{\sigma_{[k-1]}} + \sigma_k \gamma_{\sigma_{[k-1]}}.
\end{equation*}
Hence,
depending on the sign of $\sigma_k$, the mass on that point (the parent node) is either slightly less than $p_{\sigma_{[k-1]}}$ or slightly higher.

\paragraph{Calling Uniform Lower Bound to get $x_{\sigma_{[k]}}$, $p_{\sigma_{[k]}}$ and $\gamma_{\sigma_{[k]}}$}

Now, we have to decide the input parameters that we will use to call $L(x,q)$.  We set:
\begin{enumerate}
    \item $x = x^*_{\sigma_{[k]}} := \min\{ x_{\sigma_{[k']}} : 0 \leq k' \leq k-1, \sigma_{k'} = 1 \}$, i.e., we look at all points in the prefix that end with 1 (at least one exists since $\sigma_0 = 1$) and pick the leftmost among them.

    \item $q = q_{\sigma_{[k]}} := \sum_{-1 \leq k' \leq k-1 : \sigma_{k'} = 1} \D_{\sigma}(x_{\sigma_{[k']}}).$  
\end{enumerate}

Let $p, x_{p,q}$ be the points returned by $L(x,q)$ (we will tune the sufficiently small $\gamma$ right after).

\begin{enumerate}
    \item We set  $p_{\sigma_{[k]}} = p$ for $p > 0$
sufficiently small for Lemma~\ref{lem:uniform-construction} to hold 
for this $x,q$ 
and also sufficiently small so that
$p + \sum_{k' \in \{-1,\ldots,k-1\}} (p_{\sigma_{[k']}}+\gamma_{\sigma_{[k']}}) < 1$.

\item We also set $x_{\sigma_{[k]}} = x_{p,q}.$
\end{enumerate}
For $\bar{\gamma} = \bar{\gamma}_{\sigma_{[k]}} := \min_{0 \leq k' \leq k-1} \gamma_{\sigma_{[k']}}$,
we have $x_{p,q} > x - \bar{\gamma}^2$, 
$p < 2^{-k-1} \bar{\gamma}^2$
and $p < \frac{c'}{2^{k+2} n_{\sigma_{[k-1]}}}$ 
(where $c'$ is from Lemma~\ref{lem:anti-bernstein}).

We now tune $\gamma_{\sigma_{[k]}}$, which will be chosen as a function of $n_{\sigma_{[k]}}$. Let $n_{\sigma_{[k]}} \in \nats$ 
with $n_{\sigma_{[k]}} > n_{\sigma_{[k-1]}}$
be sufficiently large so that, 
for $c,c'$ as in Lemma~\ref{lem:anti-bernstein}, 
$R(n_{\sigma_{[k]}}) \leq \frac{c'}{16}\sqrt{\frac{c p_{\sigma_{[k]}}}{n_{\sigma_{[k]}}}}$
(recalling $R(n) \in o(n^{-1/2})$)
and sufficiently large so that 
$\gamma_{\sigma_{[k]}} := \gamma = \sqrt{\frac{c p_{\sigma_{[k]}}}{n_{\sigma_{[k]}}}}$
satisfies the requirements of Lemma~\ref{lem:uniform-construction}
(for the above $x,q,p,x_{p,q}$)
along with the requirements
$\gamma \ll \bar{\gamma}$ 
and 
$\sum_{k' \in \{-1,\ldots,k\}} (p_{\sigma_{[k']}}+\gamma_{\sigma_{[k']}}) < 1$.

\paragraph{Remaining Mass}
We now repeat the above construction inductively. After the above construction defines these $\D_{\sigma}(x_{\sigma_{[k]}})$ values for all $k \in \{-1,0,1,\ldots\}$, 
we complete the definition of $\D_{\sigma}$ by setting 
$\D_{\sigma}(0) = 1 - \sum_{k \geq -1} \D_{\sigma}(x_{\sigma_{[k]}})$.

\paragraph{Basic Property} The idea of the above construction is that $\D_\sigma$ satisfies the property that, for each $k$, for the values 
\[
(x,q,p,x_{p,q},\gamma) = (x^*_{\sigma_{[k]}}, q_{\sigma_{[k]}}, p_{\sigma_{[k]}}, x_{\sigma_{[k]}}, \gamma_{\sigma_{[k]}})
\]
defined in the above construction, $\D_\sigma$ belongs to the set $\sD_{\sigma_k}$ of the uniform lower bound of \Cref{lem:uniform-construction}. 

\begin{claim}
For given $k$ and for given $\sigma_1,...,\sigma_k$, the property $\D_\sigma \in \sD_{\sigma_k}$
holds regardless of the values $\sigma_{k'}$ for $k' > k.$    
\end{claim}
 This is due to the allowances of $\pm \gamma^2$ appearing in the conditions defining $\sD_{\sigma_k}$ (and the choices of $p$ above, 
to ensure subsequent points are sufficiently close to their associated $x$ to be within these $\pm \gamma^2$,
and the effect of these points on the probabilities of these 
sets are also within the $\pm \gamma^2$ allowance there).

\paragraph{Reduction to Binary Prediction} 
Thus, for each $k$, for $n = n_{\sigma_{[k]}}$, 
the learner must \emph{choose} whether its $\hat{t}_n$
is closer to $x_{\sigma_{[k]}}$ 
or closer to $x^*_{\sigma_{[k]}}.$
The latter is correct if $\sigma_k = -1$, 
and otherwise the former is correct with $\sigma_k = 1$  
(and in this case 
$x^*_{\sigma_{[k+1]}}$ will be $x_{\sigma_{[k]}}$).
Since $x_{\sigma_{[k]}}$ is set to the value $x_{p,q}$ 
from Lemma~\ref{lem:uniform-construction}
and $x^*_{\sigma_{[k]}}$ serves as the value $x$ in that context, 
we know that if the learner guesses incorrectly, 
its loss compared to the optimal $t$ is at least $\frac{\gamma_{\sigma_{[k]}}}{4}$.

Our main goal is to argue that there is a choice of $\sigma$'s such that 
the given learner $\wh t_n$ will indeed guess incorrectly with significant probability. In order to establish the existence of such a distribution, we will start from a random one and then use a ``probabilistic method''-type of argument.

\paragraph{Picking a Random Distribution}
Let $\bsigma_1,\bsigma_2,\ldots$ be i.i.d.\ $\mathrm{Uniform}(\{-1,1\})$ random variables.
Let $\bsigma = (\sigma_0,\bsigma_1,\bsigma_2,\ldots)$.
Let $X_1,X_2,\ldots$ be conditionally i.i.d. drawn from  $P_{\bsigma}$ given $\bsigma$.
For each $k \in \nats$, 
let $\hat{t}_{(k)} = \hat{t}_{n_{\bsigma_{[k]}}}(\{X_1,\ldots,X_{n_{\bsigma_{[k]}}}\})$,
and let 
\begin{equation*} 
\hat{\sigma}_k = \begin{cases}
    1, & \text{ if } |\hat{t}_{(k)} - x_{\bsigma_{[k]}}| \leq |\hat{t}_{(k)} - x^*_{\bsigma_{[k]}}|
    \\ -1 & \text{ if } |\hat{t}_{(k)} - x_{\bsigma_{[k]}}| > |\hat{t}_{(k)} - x^*_{\bsigma_{[k]}}|
\end{cases}\,.
\end{equation*}
Let us quantify what it means to make a mistake in the prediction.
By the above, we have that if
$\hat{\sigma}_{k+1} \neq \bsigma_{k+1}$, then the expected revenue is less than the expected maximum one and their gap is:
\begin{align}
\hat{t}_{(k)} \D_{\bsigma}([\hat{t}_{(k)},1]) 
& < \max_t t \D_{\bsigma}([t,1]) - \frac{1}{4}\gamma_{\bsigma_{[k]}} \notag \\ & \leq \max_t t \D_{\bsigma}([t,1]) - \frac{1}{4}\sqrt{\frac{c p_{\sigma_{[k]}}}{n_{\sigma_{[k]}}}}
\leq \max_t t \D_{\bsigma}([t,1]) - \frac{4}{c'}R (n_{\bsigma_{[k]}}).
\label{eqn:phi-loss-gap}
\end{align}

\paragraph{Probability of Wrong Prediction}

Fix any $k \in \nats$.
Let $E_k$ denote the event that
every $t \leq n_{\bsigma_{[k]}}$ has 
$X_t \notin \{x_{\bsigma_{[k']}} : k' > k \}$.
Note that, by our choices of $p_{\sigma_{[k']}}$ above, 
$E_k$ has conditional probability at least $1 - \frac{c'}{2}$
given $\bsigma_{[k]}$.
Moreover, given this event, 
the only information about $\bsigma_{k+1}$ 
in these samples $X_t$, $t \leq n_{\bsigma_{[k]}}$, is in the indicators $\ind[ X_t = x_{\bsigma_{[k]}} ]$
and $\ind[ X_t = 0 ]$.
\footnote{ {It is important to note that  $\ind[ X_t = 0 ]$ also gives information about $\bsigma_{k+1}$  However, since estimating $\D_{\sigma}(0)$ up to $\pm \gamma_{\bsigma_{[k+1]}}$ based on the samples $\ind[X_t = 0]$ is even harder than estimating $\D_{\sigma}(x_{\bsigma_{[k]}})$ up to $\pm \gamma_{\bsigma_{[k+1]}}$ based on the samples $\ind[X_t = x_{\bsigma_{[k]}}]$. The reason of that is the following: Bernstein's inequality 
bounds the number of samples required for estimating a coin of bias $p$ with $\gamma$ accuracy as at least $p/\gamma^2$; hence, smaller $p$'s make the problem easier. However, $\D_{\sigma}(0) \in \Omega(1)$ and so those samples do not even get the factor of $p$ in their sample complexity from Bernstein's inequality.}  
}
The variable $\bsigma_{k+1}$ is conditionally 
$\mathrm{Uniform}(\{-1,1\})$ given $\bsigma_{[k]}$ (as they are independent),
and the indicators $\ind[ X_t = x_{\bsigma_{[k]}} ]$
are conditionally $\mathrm{Ber}( p_{\bsigma_{[k]}} + \bsigma_{k+1} \gamma_{\bsigma_{[k]}} )$ 
given $\bsigma_{[k]}$ and $\bsigma_{k+1}$.

Since our choice of $\gamma_{\bsigma_{[k]}}$ 
ensures $n_{\bsigma_{[k]}} \leq \frac{c p_{\bsigma_{[k]}}}{\gamma_{\bsigma_{[k]}}^2}$, 
\Cref{lem:anti-bernstein} implies 
\begin{align*}
\Pr\!\left( \hat{\sigma}_{k+1} \neq \bsigma_{k+1} \middle| \bsigma_{[k]}, E_k \right) \geq c'.
\end{align*}
Together we have 
\begin{align*}
\Pr\!\left( \hat{\sigma}_{k+1} \neq \bsigma_{k+1} \middle| \bsigma_{[k]} \right)
& \geq \Pr\!\left( \left\{ \hat{\sigma}_{k+1} \neq \bsigma_{k+1} \right\} \cap E_k \middle| \bsigma_{[k]} \right)
\\ & = \Pr\!\left( \left\{ \hat{\sigma}_{k+1} \neq \bsigma_{k+1} \right\} \middle| \bsigma_{[k]}, E_k \right) \Pr(E_k | \bsigma_{[k]} )
\geq c' \left( 1 - \frac{c'}{2} \right) \geq \frac{c'}{2}.
\end{align*}
Given the above lower bound to the probability of wrong prediction, we can get by \eqref{eqn:phi-loss-gap}, 
\begin{equation}
\label{eqn:prob-lb}
\Pr\!\left( \max_t t \D_{\bsigma}([t,1]) - \hat{t}_{(k)} \D_{\bsigma}([\hat{t}_{(k)},1]) > \frac{4}{c'} R(n_{\bsigma_{[k]}}) \middle| \bsigma_{[k]} \right) \geq \frac{c'}{2}.
\end{equation}

\paragraph{Proof of Universal Lower Bound} 

We now proceed to argue the $R(n)$ lower bound
for infinitely many $n$.  
To show this, note that it suffices to show the following claim.
\begin{claim}
    It holds that
    \begin{align*}
& \sup_{\D ~\mathrm{ on }~ [0,1]} \limsup_{n \to \infty} \frac{1}{R(n)} \E_{S \sim \D^n}\left[ \max_t t \D([t,1]) - \hat{t}_n(S) \D([\hat{t}_n(S),1]) \right]
> 1.
\end{align*}

\end{claim}
We will first lower bound the $\sup$ over all $\D$ by the average over $\D_\sigma$'s. This means that the left-hand-side is at least
\begin{align}
& \E_{\bsigma} \!\left[ \limsup_{n \to \infty} \frac{1}{R(n)} \E_{X_1,...,X_n \sim \D_{\bsigma}^n}
\!\left[ 
\max_t t \D_{\bsigma}([t,1]) - \hat{t}_n(X_1,\ldots,X_n) \D_{\bsigma}([\hat{t}_n(X_1,\ldots,X_n),1]) \middle| \bsigma \right] \right] 
\notag \\ & \geq \E_{\bsigma}\!\left[ \limsup_{k \to \infty} \frac{1}{R(n_{\bsigma_{[k]}})} \E_{S \sim \D_{\bsigma}^{n_{\bsigma_{[k]}}}}\!\left[ \max_t t \D_{\bsigma}([t,1]) - \hat{t}_{(k)} \D_{\bsigma}([\hat{t}_{(k)},1]) \middle| \bsigma \right] \right]. \label{eqn:pre-markov}
\end{align}
For the nonnegative random variable inside the expectation, we use the elementary bound
\[
\E[Z]\ge a\Pr(Z\ge a).
\]
Thus,
\begin{align*} 
& \E_{\D_{\bsigma}^{n_{\bsigma_{[k]}}}}
\!\left[ \max_t t \D_{\bsigma}([t,1]) - \hat{t}_{(k)} \D_{\bsigma}([\hat{t}_{(k)},1]) \middle| \bsigma \right]
\\ & \geq \frac{4}{c'} R(n_{\bsigma_{[k]}}) \cdot \Pr_{\D_{\bsigma}^{n_{\bsigma_{[k]}}}}\!\left( \max_t t \D_{\bsigma}([t,1]) - \hat{t}_{(k)} \D_{\bsigma}([\hat{t}_{(k)},1]) \geq \frac{4}{c'} R(n_{\bsigma_{[k]}}) \middle| \bsigma \right),
\end{align*}
so that \eqref{eqn:pre-markov} is no smaller than 
\begin{align*}
& \E_{\bsigma}\!\left[ \limsup_{k \to \infty} \frac{4}{c'} \cdot \Pr_{\D_{\bsigma}^{n_{\bsigma_{[k]}}}}\!\left( \max_t t \D_{\bsigma}([t,1]) - \hat{t}_{(k)} \D_{\bsigma}([\hat{t}_{(k)},1]) \geq \frac{4}{c'} R(n_{\bsigma_{[k]}}) \middle| \bsigma \right) \right].
\end{align*}
Since 
$\frac{4}{c'} \Pr\!\left( \max_t t \D_{\bsigma}([t,1]) - \hat{t}_{(k)} \D_{\bsigma}([\hat{t}_{(k)},1]) \geq \frac{4}{c'} R(n_{\bsigma_{[k]}}) \middle| \bsigma \right) \leq \frac{4}{c'}$ (i.e., it is bounded), 
reverse Fatou's lemma (and linearity of the expectation)
implies the expectation above is at least
\begin{align*}
& \limsup_{k \to \infty} \frac{4}{c'} \cdot \E_{\bsigma}\!\left[ \Pr\!\left( \max_t t \D_{\bsigma}([t,1]) - \hat{t}_{(k)} \D_{\bsigma}([\hat{t}_{(k)},1]) \geq \frac{4}{c'} R(n_{\bsigma_{[k]}}) \middle| \bsigma \right) \right].
\end{align*}
By the law of total probability, this equals
\begin{align*}
\limsup_{k \to \infty} \frac{4}{c'} \cdot \E_{\bsigma}\!\left[ \Pr\!\left( \max_t t \D_{\bsigma}([t,1]) - \hat{t}_{(k)} \D_{\bsigma}([\hat{t}_{(k)},1]) \geq \frac{4}{c'} R(n_{\bsigma_{[k]}}) \middle| \bsigma_{[k]} \right) \right] \geq 2,
\end{align*}
where the last inequality is due to \eqref{eqn:prob-lb}. This shows that there exists a distribution in the ensemble such that the lower bound holds infinitely often.

\end{proof}

\section{Universal Upper Bound for Bounded Support}
\label{proof:upperBounded}

The main result of this section is an algorithm for bounded support distributions that gets $o(n^{-1/2})$ universal rate. Without loss of generality, through appropriate rescaling and shifting, it suffices to state the result for $[0,1].$

\begin{thm}
[$o(1/\sqrt{n})$-Rates (Upper Bound)]
\label{thm:FastRatesBounded}
Consider the class of all valuation distributions $\mathbb D$ on $[0,1]$. Then $\mathbb D$ is universally learnable at rate $o(n^{-1/2}).$
\end{thm}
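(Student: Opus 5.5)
We will show that plain ERM over $[0,1]$, $\wh t_n \in \argmax_{t\in[0,1]} \rev_{\wh \D_n}(t)$, achieves, for every fixed $\D$ on $[0,1]$, a revenue gap $\epsilon_n(\wh t_n,\D) = o(n^{-1/2})$. Note that $\opt_\D$ is attained, by upper semicontinuity of $t\mapsto t\,\D([t,1])$ on the compact set $[0,1]$. Let $P^*\subseteq[0,1]$ be the (closed) set of optimal prices and fix some $p^*\in P^*$. For $\varepsilon>0$ let $T_\varepsilon=\{t: \rev_\D(t)\ge \opt_\D-\varepsilon\}$. Define the localized variance proxy
\[
\Delta(\varepsilon) := \sup_{t\in T_\varepsilon}\Pr_{v\sim\D}\bigl[\ind\{v\ge t\}\neq \ind\{v\ge p^*\}\bigr].
\]
This is (up to constants) a bound on the variance of the excess loss $t\ind\{v\ge t\}-p^*\ind\{v\ge p^*\}$ over near-optimal prices. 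I would take $p^*$ to be, say, the largest optimal price, or handle all of $P^*$ by taking the infimum over optimal comparators.

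\textbf{Key structural step: $\Delta(\varepsilon)\to 0$ as $\varepsilon\to0$.} The natural approach is compactness. Suppose $t_m\in T_{1/m}$ have disagreement mass at least $\delta$ with every optimal price. Pass to a monotone subsequence $t_m\to t_\infty$. If $t_m\downarrow t_\infty$ strictly, then $\D([t_m,1])\to\D((t_\infty,1])$, so $t_\infty\D((t_\infty,1])=\opt_\D$. This forces $\D(\{t_\infty\})=0$ (otherwise $t_\infty$ would beat $\opt_\D$), so $t_\infty\in P^*$ and the disagreement $\D([t_\infty,t_m))\to0$. If instead $t_m\uparrow t_\infty$, then $\D([t_m,1])\to\D([t_\infty,1])$, so $t_\infty\in P^*$ and the disagreement $\D([t_m,t_\infty))\to0$. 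Either case is a contradiction. The subtlety is the choice of comparator, since $P^*$ may be infinite. I would therefore define $\Delta$ using $\inf_{p\in P^*}$ and carry a chaining/union argument over a class of thresholds, which has VC dimension 1. This step is the main obstacle: it is where universality (dependence on $\D$) enters, and where the atom/left-limit bookkeeping must be done carefully.

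\textbf{Localized concentration.} Consider the class of excess-loss functions $f_t(v)=p^*\ind\{v\ge p^*\}-t\ind\{v\ge t\}$ for $t\in T_\varepsilon$. These are bounded by 1, have variance at most $\Delta(\varepsilon)+\varepsilon$, and come from a VC-type class. A Talagrand/Bousquet inequality (the ``localized Bernstein'') gives, uniformly over $t\in T_\varepsilon$, deviations of order $\sqrt{(\Delta(\varepsilon)+\varepsilon)\log(1/\Delta)/n}+\log(1/\Delta)/n$. I would then run the standard peeling argument. ERM satisfies $\wh{P}_n f_{\wh t}\le 0$, and a global DKW bound (\cref{lem:dkw-ineq}) first places $\wh t_n$ in $T_{\varepsilon_0}$ with $\varepsilon_0\asymp\sqrt{\log n/n}$ except on an event of probability $1/n$. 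Localizing then yields
\[
\epsilon_n\lesssim \sqrt{\frac{\Delta(\varepsilon_0)\log(1/\Delta(\varepsilon_0))}{n}}+\frac{\log n}{n}+\frac1n.
\]
If the $\log(1/\Delta)$ factor is awkward, I would fix any $\eta>0$ and absorb it, since only $o(\cdot)$ is needed.

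\textbf{Conclusion.} Since $\varepsilon_0\to0$, the previous step gives $\Delta(\varepsilon_0)\to0$, so $\Delta(\varepsilon_0)\log(1/\Delta(\varepsilon_0))\to0$. Hence $\sqrt n\,\epsilon_n\to0$, with a rate $g_\D(n)\in o(n^{-1/2})$ that depends on $\D$ through $\Delta$, which is exactly \Cref{def:o-R-upper}. Combined with \Cref{thm:LowerBoundBounded}, this yields \Cref{Thm:Bounded}.
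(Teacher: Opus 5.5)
Your proposal is correct and follows essentially the same route as the paper. Both use ERM on $[0,1]$ and a compactness/monotone-subsequence argument showing a localized quantity $\Delta(\varepsilon)\to 0$, with the comparator taken as a near-minimizer over the optimal set (exactly the paper's $t^*(t)$). Both then apply a localized Bernstein bound for the VC-type class of threshold revenue differences, and a DKW step placing $\wh t_n$ in $T(\varepsilon_0)$ with $\varepsilon_0\asymp\sqrt{\log n/n}$.

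The only difference is cosmetic. You take $\Delta$ to be the unweighted disagreement mass and recover the $(t-t')^2$ part of the variance from near-optimality, giving variance at most $2\Delta+\varepsilon$. The paper instead builds $|t-t'|$ directly into $\Delta$ through $\max\{|t-t'|,g_\D(t,t')\}$.
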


In particular, our algorithm will be ERM: for any distribution $\D$ on $[0,1]$, let $S \sim \D^n$ and let $\wh t_n$ be any element of the set $ \argmax_t \rev_{\wh D_n}(t)$, where $\wh D_n$ is the empirical estimate of $\D$ based on $S.$ Then
\[
\opt_\D - \E_S \rev_\D(\wh t_n) \in o(n^{-1/2}).
\]

\paragraph{Notation.}  We will first need the notation $g_\D(t,t') = \min \{t,t'\} \D([\min\{t,t'\}, \max\{t,t'\}))$. Crucially, the upper end of the interval is open; this is to avoid pathological cases where there are point masses
at the upper end of the interval.
For any $\epsilon \geq 0$, define the set of prices that are $\epsilon$-close to the optimal revenue under $\D$:
\[
T(\epsilon) = \{t : \rev_\D(t) \geq \opt_\D - \epsilon \}\,.
\]
Moreover, let the set of optimal prices be
\[
T^* = T(0).
\]
Finally, define 
\[
\Delta(\epsilon) = 
\sup_{t \in T(\epsilon)}
\inf_{t' \in T^*}
\max\{|t - t'|, g_\D(t,t')\}\,.
\]

\paragraph{Tool I}
The following lemma will be crucial for the proof.
\begin{lemma}
The following hold true:
\begin{enumerate}
    \item  The set $T^*$ is non-empty.
    \item It holds that $\Delta(\epsilon) \in o(1).$
\end{enumerate}
\label{lemma:Sequences}
\end{lemma}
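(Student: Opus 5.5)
The plan is to derive both parts from one structural fact: on $[0,1]$ the revenue curve $\rev_\D(t)=t\,\D([t,1])$ is upper semicontinuous. Part~1 then follows from compactness, and Part~2 from a compactness and contradiction argument.

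\textbf{Step 1: upper semicontinuity and Part 1.} Prices $t>1$ give revenue $0$, so we may restrict to $t\in[0,1]$. The map $t\mapsto \D([t,1])$ is nonincreasing, and its one-sided limits are as follows.
\begin{itemize}
\item As $s\uparrow t$, continuity of measure along decreasing sets gives $\D([s,1])\to \D([t,1])$.
\item As $s\downarrow t$, continuity along increasing sets gives $\D([s,1])\to \D((t,1])\le \D([t,1])$.
\end{itemize}
Hence $\limsup_{s\to t}\D([s,1])\le \D([t,1])$, so $t\mapsto \D([t,1])$ is upper semicontinuous. Multiplying by the continuous nonnegative function $t$ preserves upper semicontinuity, so $\rev_\D$ is upper semicontinuous on $[0,1]$. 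An upper semicontinuous function on a compact set attains its supremum, so $T^*\neq\emptyset$. If $\opt_\D=0$, then $0\in T^*$ trivially.

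\textbf{Step 2: setting up the contradiction for Part 2.} Since $T(\epsilon)$ shrinks as $\epsilon$ decreases, $\Delta$ is nondecreasing in $\epsilon$. It therefore suffices to show $\Delta(\epsilon_k)\to 0$ along every sequence $\epsilon_k\downarrow 0$. Suppose instead there are $\delta>0$, a sequence $\epsilon_k\to 0$, and points $t_k\in T(\epsilon_k)$ such that, for every $t'\in T^*$,
\[
\max\{|t_k-t'|,\,g_\D(t_k,t')\}\ge\delta .
\]
By compactness, pass to a subsequence with $t_k\to\bar t\in[0,1]$. Upper semicontinuity gives $\rev_\D(\bar t)\ge\limsup_k \rev_\D(t_k)\ge \lim_k(\opt_\D-\epsilon_k)=\opt_\D$, so $\bar t\in T^*$. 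We will contradict the displayed inequality at $t'=\bar t$. Since $|t_k-\bar t|\to0$ already, it remains to show $g_\D(t_k,\bar t)\to 0$.

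\textbf{Step 3: showing $g_\D(t_k,\bar t)\to0$.} Pass to a further subsequence lying entirely on one side of $\bar t$; terms with $t_k=\bar t$ give $g_\D=0$ and can be ignored.
\begin{itemize}
\item If $t_k<\bar t$, then $g_\D(t_k,\bar t)=t_k\,\D([t_k,\bar t))\le \D([t_k,\bar t))$. The intervals $[t_k,\bar t)$ can be taken nested, so $\D([t_k,\bar t))\to0$.
\item If $t_k>\bar t$, then $g_\D(t_k,\bar t)=\bar t\,\D([\bar t,t_k))\to \bar t\,\D(\{\bar t\})$. This limit need not vanish for an arbitrary point, and it is the main obstacle. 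It is also exactly where near-optimality of $t_k$ must be used.
\end{itemize}

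\textbf{Step 4: resolving the obstacle in the case $t_k\downarrow\bar t$.} We compare two expressions for $\opt_\D$. First, $\rev_\D(t_k)=t_k\,\D([t_k,1])\to \bar t\,\D((\bar t,1])$, and since $t_k\in T(\epsilon_k)$ this limit equals $\opt_\D$. Second, $\opt_\D=\rev_\D(\bar t)=\bar t\,\D([\bar t,1])$. Subtracting the two gives $\bar t\,\D(\{\bar t\})=0$, hence $g_\D(t_k,\bar t)\to0$ in this case too.

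So along a subsequence both $|t_k-\bar t|\to0$ and $g_\D(t_k,\bar t)\to0$ with $\bar t\in T^*$, contradicting the choice of $\delta$. Therefore $\Delta(\epsilon)\to0$, i.e.\ $\Delta(\epsilon)\in o(1)$. The open upper endpoint in the definition of $g_\D$ is precisely what makes the case $t_k<\bar t$ work: an atom at $\bar t$ is excluded from $[t_k,\bar t)$.
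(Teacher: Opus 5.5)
Your proposal is correct and follows essentially the same route as the paper. Both arguments use compactness to extract a limit point $\bar t\in T^*$, then a one-sided case analysis: the open upper endpoint of $g_\D$ handles approach from the left, and near-optimality of the $t_k$ forces $\bar t\,\D(\{\bar t\})=0$ for approach from the right. The only difference is packaging. You phrase the paper's explicit monotone-subsequence inequalities as upper semicontinuity of $t\mapsto t\,\D([t,1])$, which makes Part 1 and the membership $\bar t\in T^*$ in Part 2 slightly cleaner.
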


\begin{proof}
[Proof of \Cref{lemma:Sequences}]
First note that, for any sequence 
$t_1, t_2,\ldots$ with $t_i \D([t_i,1]) \to \sup_t t \D([t,1])$, 
compactness of $[0,1]$ implies there exists a convergent 
subsequence $t_{i_1}, t_{i_2}, \cdots$ 
with a limit point $t^*$ in $[0,1]$.
We will argue that $t^* \in T^*$.
Since any convergent sequence in $[0,1]$ contains 
a monotone subsequence, without loss of generality we may 
suppose $t_{i_1}, t_{i_2}, \ldots$ is either non-decreasing or non-increasing.
If $t_{i_1} \geq t_{i_2} \geq \cdots$, 
then we have 
\begin{equation*} 
t^* \D([t^*,1]) \geq t^* \D([t_{i_j},1]) \geq t_{i_j} \D([t_{i_j},1]) - (t_{i_j} - t^*) \xrightarrow[j \to \infty]{} \sup_{t} t \D([t,1]),
\end{equation*} 
so that $t^* \in T^*$.
On the other hand, 
if $t_{i_1} \leq t_{i_2} \leq \cdots$, 
then 
\begin{equation*}
t^* \D([t^*,1])
\geq t_{i_j} \D([t^*,1])
\geq t_{i_j} \D([t_{i_j},1]) - \D([t_{i_j},t^*))
\xrightarrow[j \to \infty]{} \sup_t t \D([t,1]),
\end{equation*}
so that again $t^* \in T^*$.
In particular, the above arguments imply $T^* \neq \emptyset$, 
since the definition of $\sup_t t \D([t,1])$ ensures there 
exists some sequence $t_1,t_2,\ldots$ as above.

Now for the remaining claim, for the purpose of obtaining a contradiction, suppose $\Delta(\epsilon) \neq o(1)$.
Since $\Delta(\epsilon)$ is monotone, this implies 
$\exists c > 0$ such that $\Delta(\epsilon) \geq c$ for all $\epsilon > 0$.
Let $t_1, t_2, \ldots$ be any sequence with $t_i \in T(1/i)$ 
such that every $i$ has
\begin{equation*}
\inf_{t \in T^*} \max\!\left\{ |t_i-t|, \min\{t_i,t\} \D([\min\{t_i,t\},\max\{t_i,t\})) \right\} > c/2.
\end{equation*}
By the above, there exists a convergent subsequence 
$t_{i_1}, t_{i_2}, \ldots$ 
with limit point $t^* \in T^*$: 
i.e., $|t_{i_j} - t^*| \xrightarrow[j \to \infty]{} 0$.
Again, since every convergent infinite sequence in $[0,1]$ contains 
an infinite monotone subsequence, 
without loss of generality we may suppose this $t_{i_j}$
subsequence is either non-increasing or non-decreasing.
If it is non-decreasing, we have 
$\D([\min\{t_{i_j},t^*\},\max\{t_{i_j},t^*\})) = \D([t_{i_j},t^*)) \xrightarrow[j \to \infty]{} 0$,
altogether contradicting the $c/2$ lower bound.
Otherwise, if it is non-increasing, we have
$\D([\min\{t_{i_j},t^*\},\max\{t_{i_j},t^*\})) = \D([t^*,t_{i_j}))$.
If $t_{i_j} = t^*$ eventually, the above equals $0$ for all large $j$, 
again contradicting the $c/2$ lower bound.
Otherwise, if every $t_{i_j} > t^*$, 
$t^* \D([t^*,t_{i_j})) \to t^* \D(\{t^*\})$.
We can further argue that in this case, 
$t^* \D(\{t^*\}) = 0$, 
since (recalling $t_{i_j} \in T(1/i_j)$)
\begin{equation*} 
t^* \D([t^*,1]) 
\leq \frac{1}{i_j} + t_{i_j} \D([t_{i_j},1])
\xrightarrow[j \to \infty]{} t^* \D((t^*,1]),
\end{equation*}
which implies $t^* \D(\{t^*\}) = 0$, 
and again we have contradicted the $c/2$ lower bound.
\end{proof}

\paragraph{Tool II}
We now proceed to the second key ingredient for the proof, which is a fine-grained localized concentration inequality. Again we will need some notation.

For each $t \in [0,1]$, 
let $t^*(t)$ denote any element of $T^*$ 
with 
\begin{align*} 
& \max\!\left\{ |t-t^*(t)|, g_\D(t,t^*(t)) \right\}
 \leq 2 \inf_{t' \in T^*} \max\!\left\{ |t-t'|, g_\D(t,t') \right\}.
\end{align*}
Such a $t^*(t)$ exists by definition of the infimum. If the right hand side is $0$, the same 
continuity-type arguments show that $t \in T^*$.

The following concentration inequality will be our second tool for the proof of \Cref{thm:FastRatesBounded}. Let us denote $f_t(x) = t \ind \{x \geq t\}.$ Note that $\E_x f_t(x) = t \Pr[x \geq t]$.
Moreover, let $\wh \E_S f_t = \frac{t}{|S|} \sum_{i \in S} \ind \{i \geq t\}$.

\begin{lemma}
[Bernstein Concentration, \cite{van2023weak}]
\label{lem:uniform-bernstein}
For any $\epsilon > 0$, \footnote{Let $\log(x) = \ln(\max\{x,e\})$.}
\begin{equation*}
\E_S\!\left[ \sup_{t \in T(\epsilon)} \left( \E[f_{t^*(t)}] - \E[f_t] \right) - \left( \wh{\E}_S[f_{t^*(t)}] - \wh{\E}_S[f_t] \right) \right] 
\leq C \sqrt{ \Delta(\epsilon) \frac{1}{n} \log\!\left(\frac{1}{\Delta(\epsilon)}\right) } + \frac{C\log(n)}{n},
\end{equation*}
where $C$ is a universal constant.
\end{lemma}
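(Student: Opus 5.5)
This is a localized uniform Bernstein bound, so I will follow the standard route: symmetrization, then Bernstein/chaining for a class with a small envelope variance, with VC-type complexity from one-dimensional threshold indicators. For each $t\in T(\epsilon)$, set $h_t(x)=f_{t^*(t)}(x)-f_t(x)$. The quantity to bound is $\E_S\sup_{t\in T(\epsilon)}(\E h_t-\wh\E_S h_t)$. The first step is to control the variance of each $h_t$ by $\Delta(\epsilon)$. Write $a=\min\{t,t^*(t)\}$ and $b=\max\{t,t^*(t)\}$. Then $h_t(x)$ equals $\pm a\,\ind\{x\in[a,b)\}$ plus $\pm(b-a)\ind\{x\ge b\}$. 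Since values are in $[0,1]$, this gives
\[
\E h_t^2\le 2a^2\D([a,b))+2(b-a)^2\le 2a\D([a,b))+2|t-t^*(t)|\le 8\Delta(\epsilon).
\]
The last inequality uses the defining property of $t^*(t)$: the max of the two quantities is at most twice the infimum, which is at most $\Delta(\epsilon)$. So the class $\mathcal H=\{h_t\}$ is uniformly bounded by $2$ and has $\sup_t \mathrm{Var}(h_t)\le \sigma^2:=8\Delta(\epsilon)$.

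The second step is the complexity of $\mathcal H$. Each $h_t$ is a linear combination, with coefficients in $[-1,1]$, of at most three indicators of half-lines and intervals. So $\mathcal H$ lies inside a fixed finite combination of VC classes of dimension at most $2$. Its $L_2(Q)$ covering numbers satisfy $N(\delta,\mathcal H,L_2(Q))\le (A/\delta)^{V}$ for universal $A,V$, uniformly over $Q$. The selection $t\mapsto t^*(t)$ is arbitrary, but this does not matter: covering bounds pass to subsets. Indeed $\mathcal H\subseteq\{\alpha\ind_{[a,b)}+\beta\ind_{[b,1]}:\ a\le b,\ |\alpha|,|\beta|\le1\}$ up to sign, and that class is uniformly VC-type.

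The third step is to invoke the standard maximal inequality for uniformly bounded VC-type classes with variance bound $\sigma^2$ (Giné–Guillou / Talagrand, or the weak-variance version of \cite{van2023weak}, as the statement indicates). After symmetrization and Dudley chaining localized at radius $\sigma$, it gives
\[
\E\sup_{h\in\mathcal H}(\E h-\wh\E_S h)\lesssim \sqrt{\tfrac{V\sigma^2}{n}\log\tfrac{A}{\sigma}}+\tfrac{V}{n}\log\tfrac{A}{\sigma}.
\]
Substituting $\sigma^2\asymp\Delta(\epsilon)$ yields the first term. For the second term there are two cases. If $\Delta(\epsilon)\ge 1/n$, then $\log(1/\sigma)\lesssim\log n$. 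If $\Delta(\epsilon)<1/n$, I would apply the inequality with the inflated variance proxy $\sigma^2=1/n$, which is still a valid upper bound; both terms are then $O(\log n/n)$. This gives the $C\log(n)/n$ term, and the paper's convention $\log x=\ln\max\{x,e\}$ handles the degenerate case $\Delta=0$.

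The main obstacle is making the chaining bound rigorous at the low-variance scale. The entropy integral must be truncated at $\sigma$, and the empirical $L_2$ radius must be related back to $\sigma$. This is the usual fixed-point argument, which bounds $\E\sup_h \wh\E_S h^2\le\sigma^2+O(\E\sup|\cdot|)$. I would cite it as a black box, as the paper does with \cite{van2023weak}, rather than reprove it. The measurability of the supremum is handled by restricting $t$ to rationals plus atoms, using right-continuity of $t\mapsto \D([t,1])$ from the appropriate side.
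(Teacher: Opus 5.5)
Your proposal is correct and follows the route the paper intends. The paper gives no derivation and simply cites \cite{van2023weak} for a localized Bernstein-type maximal inequality. You supply the ingredients needed to apply it: the variance bound $\E h_t^2 \lesssim \Delta(\epsilon)$ from the factor-$2$ defining property of $t^*(t)$, the VC-type covering bound for the two-indicator class, and the inflation $\sigma^2=\max\{8\Delta(\epsilon),1/n\}$, which produces the $C\log(n)/n$ term.

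One remark is loose: the measurability one. Because $t\mapsto t^*(t)$ is an arbitrary selection, restricting $t$ to a countable set does not by itself justify the reduction. Reading $\E_S$ as an outer expectation, as in \cite{van2023weak}, resolves this.
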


\paragraph{The Proof of \Cref{thm:FastRatesBounded}}

Let $\epsilon_n = C'\sqrt{\log(n)/n}$ 
for some appropriate absolute constant $C'$.
By uniform convergence of the CDF (i.e., \Cref{lem:dkw-ineq}), 
on an event $E_n$ of probability at least $1-1/n$, 
every function $f_t(x)=t \ind[x \geq t]$ 
has
\begin{equation*} 
\left|\wh{\E}[f_t]-\E[f_t]\right| < \epsilon_n/2\,,
\end{equation*}
where $\wh E$ is using the empirical CDF.
In particular, on this event, we have that the Empirical Revenue Maximizer satisfies
\begin{equation*} 
\E\!\left[f_{\hat{t}_n} \middle|S\right] > \max_{t'} \E[f_{t'}] - \epsilon_n,
\end{equation*}
so that $\hat{t}_n \in T(\epsilon_n)$. 
We can write the revenue gap as follows, recalling that 
$\hat{\E}_S[f_{t^*(\hat{t}_n)}] - \hat{\E}_S[f_{\hat{t}_n}] \leq 0$, 

\begin{align*}
&  \max_t t \D([t,1]) - \E\!\left[ \hat{t}_n \D([\hat{t}_n,1]) \right] 
 =\\
 & = \E_S 
 \!\left[ \E_{x \sim \D}\left[ f_{t^*(\wh{t}_n)}(x) \middle| S \right] - \E_{x \sim \D}\!\left[ f_{\hat{t}_n}(x) \middle| S \right] \right]
\\ & \leq \E\!\left[ \left( \E\left[ f_{t^*(\hat{t}_n)} \middle| S \right] - \E\!\left[ f_{\hat{t}_n} \middle| S \right] \right) \ind_{E_n} \right] + (1-\P(E_n))
\\ & \leq \E\!\left[ \left( \sup_{t \in T(\epsilon_n)} \left( \E\left[ f_{t^*(t)} \right] - \E\!\left[ f_{t} \right] \right) -  \left( \hat{\E}_S[ f_{t^*(t)} ] - \hat{\E}_S[f_t] \right) \right) \ind_{E_n} \right] + (1-\Pr(E_n))
\\ & \leq \E\!\left[ \sup_{t \in T(\epsilon_n)} \left( \E\left[ f_{t^*(t)} \right] - \E\!\left[ f_{t} \right] \right) -  \left( \hat{\E}_S[ f_{t^*(t)} ] - \hat{\E}_S[f_t] \right) \right] + (1-\Pr(E_n))
\\ & \leq C \sqrt{\Delta(\epsilon_n) \frac{1}{n}\log\!\left(\frac{1}{\Delta(\epsilon_n)}\right)} + \frac{1+C\log(n)}{n}.
\end{align*}

In the above, the first equality follows by definition of $f_t(x)$, the first inequality follows by conditioning on the event $E_n$, the second inequality follows because (i) we take the supremum over $t \in T(\epsilon_n),$ a set that contains $\wh t_n$ under $E_n$ and (ii) $\hat{\E}_S[f_{t^*(\hat{t}_n)}] - \hat{\E}_S[f_{\hat{t}_n}] \leq 0$. The third inequality follows since $\Pr[E_n] \leq 1$ and the last inequality follows because $\Pr[E_n] = 1-1/n$ and from \Cref{lem:uniform-bernstein}. 

Noting that 
$\lim_{\epsilon \to 0} \sqrt{ \Delta(\epsilon) \log\!\left(\frac{1}{\Delta(\epsilon)}\right)} = 0$, 
the above is $o(n^{-1/2})$, which completes the proof.

\section{Bounds on Discrete and Closed Support}\label{sec:bounds-discrete-closed}
We first give the definition of closed and discrete sets, which we will use
in our result.
\begin{defn}[Closed and Discrete Set]\label{def:closed-and-discrete}
    A set $S \subseteq \R_+$ is closed and discrete if 
    \textbf{i)} it contains all of its accumulation points (closed),
    and \textbf{ii)} for every point \(x \in S\) there is some \(\varepsilon_x>0\) such that
\[
(x-\varepsilon_x, x+\varepsilon_x)\cap S = \{x\}.
\]
\end{defn}

\subsection{Near Exponential Rates}
\label{proof:Exponential}

In this setting we prove the result regarding our Structural Revenue Maximization algorithm.

\smallskip

\begin{mybox}
    \begin{center}
        Structural Empirical Revenue Maximization
    \end{center}
    \begin{enumerate}
        \item \textbf{Input}: Draw $S \sim \D^n$, non-decreasing function $g : \N \to \N, g(n) \in o(n)$
        \item \textbf{Define function $f(n) \in o(1)$ such that $f^2(n)\cdot n \geq g(n)$, for $n$ sufficiently large}
        \item Using $S,$ create the empirical distribution $\hat \D_n$
        \item Sort the elements of $S$ in non-decreasing order $p_{1}, p_2,\ldots,p_n$
        \item Let \(i^*\) be the largest index in \([n]\) such that
        \begin{equation}
        \rev_{\hat \D_n}(p_{i^*})
        >
        \rev_{\hat \D_n}(p_j)
        +
        (p_j+p_{i^*})f(n)
        \qquad
        \text{for all } j<i^*.
        \label{eq:winner}
        \end{equation}
        
        \item \textbf{return} the price $t_n := p_{i^*}$
    \end{enumerate}
\end{mybox}

In words, the algorithm uses the training set to create the empirical distribution and sorts the observed points in increasing order. It then finds the rightmost point of the sorted list that beats all prior points in empirical revenue with confidence, i.e., it satisfies \eqref{eq:winner} against all points that precede it in the enumeration. Importantly the gap between the price $p_{i^*}$ and $p_j$ in \eqref{eq:winner} has to scale with $(p_j + p_{i^*}) \cdot f(n)$, i.e., it goes to 0 as $n\rightarrow\infty$ since $f(n)$ is some $o(1)$ function picked by the algorithm. Different choices of $f(n)$ give different rates $e^{-o(n)}$. The reason for selecting this function is that the algorithm does not know what is the separation between the optimal price and the other prices; the only thing it knows is that the gap is positive. Hence, it should use $f(n)$ to make the gap more  and more fine-grained until \eqref{eq:winner} is satisfied. 

\begin{rem}
We note that the $e^{-o(n)}$ rate that we get in this setting is conceptually different from the $o(n^{-1/2})$ rate that we get in the bounded support case. 

\begin{enumerate}
    \item In the bounded support case, we show that we can get a rate slightly faster than $1/\sqrt{n}$ but arbitrarily close to it.

    \item Here, the rate $e^{-o(n)}$ is slower than exponential and the $o(n)$ rate depends on how the algorithm chooses the function $f(n)$.
\end{enumerate}
\end{rem}

\begin{lemma}
For any $g(n) \in o(n)$, the class of valuation distributions $\mathbb D$ 
    supported on (potentially different) closed and discrete subsets of  $\R_+$
    and have optimal revenue that is attained by some price,
    is universally learnable at a rate $e^{-g(n)}.$
\end{lemma}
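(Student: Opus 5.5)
The plan is to show that, outside an event of probability $O(e^{-g(n)})$, the Structural ERM algorithm outputs \emph{exactly} an optimal price $p^*$. Since $\rev_\D(t_n)\le \opt_\D$ always holds, this gives $\epsilon_n(t_n,\D)\le \opt_\D\cdot\Pr[\text{bad event}]$.

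\textbf{Step 1: reduce to a price in the support and isolate a gap.} If $\opt_\D$ is attained at some $p\notin\supp(\D)$, let $p'$ be the smallest support point $\ge p$. It exists because the support is closed and discrete, and $p'>0$ since $\opt_\D>0$ in the nontrivial case. There is no support point in $[p,p')$, so $\Pr[v\ge p']=\Pr[v\ge p]$ and $p'$ is also optimal. So we may take $p^*$ to be the smallest optimal price in $\supp(\D)$; then $\D(\{p^*\})>0$. A closed discrete subset of $\R_+$ has only finitely many points in $[0,p^*]$. Every support point below $p^*$ is strictly suboptimal, because $p^*$ is the smallest optimal one. Hence
\[
\delta:=\opt_\D-\max\{\rev_\D(q): q\in\supp(\D),\ q<p^*\}>0,
\]
with the convention $\delta=\opt_\D$ if there is no such $q$. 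I treat repeated sample values as a single candidate price and let $i^*$ range over distinct values; otherwise a duplicate $j<i$ with the same value trivially violates the strict inequality \eqref{eq:winner}.

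\textbf{Step 2: define the good event.} Choose $f\in o(1)$ with $f^2(n)\,n\ge 2g(n)$; the algorithm leaves us free to do so, up to constant rescaling of $g$. Let $G_n$ be the event that $\sup_x|F_n(x)-F(x)|\le f(n)/2$ and that $p^*$ appears in $S$. By \Cref{lem:dkw-ineq} and independence,
\[
\Pr[\lnot G_n]\le 2e^{-nf^2(n)/2}+(1-\D(\{p^*\}))^n\le 2e^{-g(n)}+e^{-\D(\{p^*\})n}.
\]
Since $g\in o(n)$, this is at most $C_\D e^{-g(n)}$ for all $n$, after enlarging $C_\D$ to cover the finitely many small $n$. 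On $G_n$, every price $p$ satisfies $|\rev_{\hat\D_n}(p)-\rev_\D(p)|\le p f(n)/2$.

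\textbf{Step 3: on $G_n$ with $n$ large, the output is exactly $p^*$.} This step has two parts.
\begin{enumerate}
\item Take any sample point $p_j<p^*$. Then
\[
\rev_{\hat\D_n}(p^*)-\rev_{\hat\D_n}(p_j)\ge \delta-(p^*+p_j)f(n)/2 .
\]
This exceeds $(p^*+p_j)f(n)$ once $3p^*f(n)<\delta$, which holds for all large $n$ because $f\in o(1)$. So $p^*$ satisfies \eqref{eq:winner}, and therefore $p_{i^*}\ge p^*$.
\item Suppose $p_{i^*}>p^*$. Then \eqref{eq:winner} applied with $j$ the index of $p^*$ gives
\[
\rev_\D(p_{i^*})\ge \rev_{\hat\D_n}(p_{i^*})-\tfrac{p_{i^*}f}{2}>\rev_{\hat\D_n}(p^*)+(p^*+p_{i^*})f-\tfrac{p_{i^*}f}{2}\ge \opt_\D+\tfrac{p^*f}{2}+\tfrac{p_{i^*}f}{2}>\opt_\D .
\]
This is a contradiction.
\end{enumerate}
Hence $t_n=p^*$ on $G_n$, and consequently $\epsilon_n(t_n,\D)\le \opt_\D\,C_\D e^{-g(n)}$, which is the claimed rate with distribution-dependent constants.

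\textbf{Main obstacle.} The delicate part is the interplay in Step 3. The confidence margin $(p_j+p_{i^*})f(n)$ must scale with the prices so that the DKW error $p f/2$, which grows with the price, is dominated for arbitrarily large and unseen-in-advance candidate prices. That scaling is what rules out the tail events that break plain ERM. At the same time the margin must vanish, so that it eventually falls below the fixed but unknown gap $\delta$. The remaining care points are choosing $f$ so that $nf^2(n)\gtrsim g(n)$ turns the DKW tail into $e^{-g(n)}$, and handling duplicate sample values in the definition of $i^*$.
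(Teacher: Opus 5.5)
Your proof is correct and follows essentially the same route as the paper. Like the paper, you intersect the DKW event at scale $f(n)$ with the event that the smallest optimal support price $p^*$ is sampled. You show that on this event Structural ERM outputs exactly $p^*$ once $f(n)$ is small relative to the gap $\delta$ to the finitely many lower support points, and you absorb $e^{-\D(\{p^*\})n}$ into $e^{-g(n)}$ using $g\in o(n)$. The only differences are that you merge the paper's two cases ($i^*=1$ and $i^*>1$) into one argument, and you make explicit the handling of duplicate sample values and the reduction of the optimal price to a support atom.
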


\begin{proof}
    Let $\D$ be the target distribution and $\mathcal{X}$ be its support.
    It is an immediate consequence of the discreteness property
    that $\mathcal{X}$ is countable. 
    Moreover, since $\mathcal{X}$ is also closed, it admits
    an enumeration in increasing order.
    Let $p_1 < p_2,\ldots,$ be an increasing enumeration of the support $\mathcal{X}$
    and also $p^* \in \R_+$ be an optimal price. Moreover, since $g(n) \in o(n)$
    there exists some function $f(n) \in o(1)$ and constants $c, C$ such that
    $2n \cdot f^2(n) \geq C \cdot g(c n), \forall n \in \N.$
    
    We first observe that
    there exists some $i^* \in \N$ such that $p_{i^*}$ is an optimal price
    and for all $j \in \N, j < i^*,$ it holds that $p_j$ is not an optimal price.
    To see that, notice that if the set is bounded and closed and discrete, then it is finite, hence it is immediate that it admits an optimal price 
    on its support. Otherwise, there exists some $p_i > p^*, p_i \in \mathcal{X}.$
    Then, $\mathcal{X}_i \coloneqq [0, p_i] \cap \mathcal{X}$ contains finitely
    many points, and at least one point that is (strictly) greater than $p^*.$
    If $p^* \notin \mathcal{X}_i,$ then there is (at least) one $p_j \in \mathcal{X}_i$ such that the revenue of $p_j$ is strictly greater
    than the revenue of $p^*$. This is because $p_j$ has the same probability
    of sale but generates strictly higher revenue under sale. Thus, 
    $p^*$ is in the support of $\D$ and there are finitely many points before it,
    hence we can define $i^*$ to be the smallest index of an optimal price.
    Let also $m^*$ be the probability mass of $p_{i^*}$ under $\D.$
    We notice that with probability at least $1-(1-m^*)^n \geq 1-e^{-m^* n}$
    if we draw $n$ i.i.d. samples from $\D$ the element $p_{i^*}$ will appear
    at least once in the dataset. We call this event $E_1.$
    We now consider two cases for the rest of the analysis.
    
    \paragraph{Case 1: $i^*=1$.} We first assume that $p_{i^*}$ is the first
    price in the increasing enumeration. Then, we only need to show that
    with high probability none of the subsequent prices will be selected
    by the algorithm. 
        Define the uniform DKW event
    \[
    E_{\mathrm{DKW}}
    :=
    \left\{
    \sup_{p\in\R_+}\abs{F_n(p)-F(p)}\le f(n)
    \right\}.
    \]
    By the DKW inequality,
    \[
    \Pr(E_{\mathrm{DKW}}^c)\le 2e^{-2nf(n)^2}\le 2e^{-C g(cn)}
    \]
    for suitable constants \(C,c>0\). On \(E_{\mathrm{DKW}}\), for every price \(p\),
    \[
    \abs{\rev_{\hat\D_n}(p)-\rev_\D(p)}\le p f(n).
    \]
    Therefore, on \(E_1\cap E_{\mathrm{DKW}}\), for every observed price \(p_j>p_{i^*}\),
    \begin{align*}
    \rev_{\hat\D_n}(p_j)
    &\le \rev_\D(p_j)+p_j f(n)\\
    &\le \rev_\D(p_{i^*})+p_j f(n)\\
    &\le \rev_{\hat\D_n}(p_{i^*})+(p_j+p_{i^*})f(n).
    \end{align*}
    Hence no observed price to the right of \(p_{i^*}\) can satisfy the winning condition \eqref{eq:winner}. Thus, conditioned on \(E_1\), no later price is selected except on an event of probability at most \(2e^{-C g(cn)}\). 
    
    Thus, putting everything together, 
    \[
        \E[\rev(t_n)] \geq \opt\cdot(1- 2e^{-C\cdot g(cn)} - e^{-m^* n}) = \opt - 2\opt e^{-C\cdot g(cn)} - \opt  e^{-m^* n}\,,
    \]
    which proves the claim.

    \paragraph{Case 2: $i^* > 1$} In this case, the previous 
        analysis, using the same uniform DKW event \(E_{\mathrm{DKW}}\), shows that with probability at least 
    \(1-2e^{-C\cdot g(cn)}\), no price \(j>i^*\) will be selected by the algorithm.
    Next, we argue that for sufficiently large $n,$ with high probability, 
    no price $p_{j'}, j' < i^*$ will be selected. This part of the proof
    resembles the proof of the finite support case. We need 
    to show that $\rev_{\hat \D_n}(p_{i^*}) > \rev_{\hat \D_n}(p_{j'}) + (p_{j'}+p_{i^*})f(n),$ for all $j' < i^*.$
    Since $p_{i^*}$ is the first optimal price, it holds that
    $\rev_{\D}(p_{i^*}) > \rev_{\D}(p_{j'}), \forall j' < i^*$ 
    and since there are finitely many such $j'$ there exists some number
    $d > 0$ such that $\rev_{\D}(p_{i^*}) > \rev_{\D}(p_{j'}) + d, \forall j' < i^*.$
    Again, using the DKW inequality we have that
       \begin{align*}
                \Pr[\abs{\rev_\D(p_{i^*}) - \rev_{\hat \D_n}(p_{i^*})} > p_{i^*}f(n)] &\leq 2e^{-C\cdot g(cn)}\\
                \Pr[\abs{\rev_\D(p_{j'}) - \rev_{\hat \D_n}(p_{j'})} > p_{j'}f(n)] &\leq 2e^{-C\cdot g(cn)} \,.
    \end{align*}
    Let us now condition on this event.
    Then, we have that
    \begin{align*}
        \rev_{\hat \D_n}(p_{i^*}) &\geq \rev_\D(p_{i^*}) -  p_{i^*}f(n) & \text{(Conditioned event)}\\
        &\geq \rev_\D(p_{j'}) + d -  p_{i^*}f(n) &\text{(Optimality of rev.)}\\
        &\geq \rev_{\hat \D_n}(p_{j'}) + d - p_{j'}f(n) -p_{i^*}f(n) & \text{(Conditioned event)}\\
        &\geq  \rev_{\hat \D_n}(p_{j'}) + \frac{d}{2} - p_{j'}f(n) -p_{i^*}f(n) + 2p_{j'}f(n) + 2p_{i^*}f(n) &(\text{$n$ sufficiently large}) \\
        &>  \rev_{\hat \D_n}(p_{j'}) +  p_{j'}f(n) + p_{i^*}f(n) \,,
    \end{align*}
    for all $n$ sufficiently large, since $f(n) \in o(1).$
        Thus, under this event the algorithm chooses \(p_{i^*}\). Hence, 
    for all sufficiently large \(n\), the revenue satisfies
    \[
        \E[\rev(t_n)]
        \geq
        \opt_\D\cdot\left(1- 2e^{-C\cdot g(cn)} - e^{-m^* n}\right)
        =
        \opt_\D
        -
        2\opt_\D e^{-C\cdot g(cn)}
        -
        \opt_\D e^{-m^* n}.
    \]
    Since \(g(n)=o(n)\), the term \(e^{-m^*n}\) can be absorbed into the \(e^{-C g(cn)}\) term after adjusting the distribution-dependent constants. This proves the claimed \(e^{-g(cn)}\)-type rate.
\end{proof}

{
\subsection{Lower Bound of ERM for Discrete and Closed Distributions}\label{sec:erm-proof}
In this section we give a formal proof of the fact that ERM does not converge to the optimal revenue.

\begin{proof}[Proof of \cref{Thm:Discrete} (item ii)]
    Let the support of the distribution $\D$ be the set $X = \{1\} \cup \{4^k \mid k \in \N, k \ge 1\}$. This set is discrete and closed. The probability mass function (PMF) is defined based on its tail probabilities.

Let the revenue for the optimal price $p^*=1$ be $\rev_{\D}(1) = 1$. This implies $\Pr(V \ge 1) = 1$.
For the challenger prices $p_k = 4^k$, we set their revenue to be constant:
\[ \rev_{\D}(4^k) = \frac{1}{2} \quad \text{for all } k \ge 1 \]
This implies that the tail probability must be $\Pr(V \ge 4^k) = \frac{\rev_{\D}(4^k)}{4^k} = \frac{1}{2 \cdot 4^k}$.

From these tail probabilities, we can define the PMF for the challenger prices:
\begin{align*}
\Pr(V=4^k) &= \Pr(V \ge 4^k) - \Pr(V \ge 4^{k+1}) \\
&= \frac{1}{2 \cdot 4^k} - \frac{1}{2 \cdot 4^{k+1}} = \frac{4-1}{2 \cdot 4^{k+1}} = \frac{3}{2 \cdot 4^{k+1}}
\end{align*}
The total mass assigned to all challengers is:
\[ \sum_{k=1}^{\infty} \Pr(V=4^k) = \sum_{k=1}^{\infty} \frac{3}{2 \cdot 4^{k+1}} = \frac{3}{2} \sum_{j=2}^{\infty} \left(\frac{1}{4}\right)^j = \frac{3}{2} \left( \frac{(1/4)^2}{1-1/4} \right) = \frac{3}{2} \left( \frac{1/16}{3/4} \right) = \frac{1}{8} \]
The remaining mass is assigned to the optimal price $p^*=1$:
\[ \Pr(V=1) = 1 - \sum_{k=1}^{\infty} \Pr(V=4^k) = 1 - \frac{1}{8} = \frac{7}{8} \]
This construction defines a valid PMF. The optimal revenue is unique: $\opt_{\D}=1$ at $p^*=1$. For any challenger price $p_k=4^k$, the regret is constant: $\opt_{\D} - \rev_{\D}(4^k) = 1 - 1/2 = 1/2$.

We analyze the ERM algorithm for the subsequence of sample sizes $n_k = 4^k$. The proof rests on showing that for this subsequence, an error occurs with a constant, non-zero probability.

By construction, all suboptimal prices in the support, $\{4^k\}_{k \ge 1}$, yield the same true revenue of $1/2$. Therefore, if the ERM algorithm selects any price other than the optimum $p^*=1$, the regret incurred is exactly $\opt_{\D} - 1/2 = 1/2$.

Let us define a sufficient condition for such an error to occur. Let $\mathcal{E}_k$ be the event that, for a sample of size $n_k=4^k$, at least two samples have a value of $4^k$ or greater.
If this event $\mathcal{E}_k$ occurs, we can lower-bound the empirical revenue of the price $p_k=4^k$:
\[ \rev_{\hat{\D}}(4^k) \ge 4^k \cdot \frac{2}{n_k} = 4^k \cdot \frac{2}{4^k} = 2 \]
The empirical revenue of the optimal price $p^*=1$ is always $\rev_{\hat{\D}}(1) = 1$ (since all samples are at least 1).
Since $\rev_{\hat{\D}}(4^k) \ge 2 > 1 = \rev_{\hat{\D}}(1)$, the event $\mathcal{E}_k$ guarantees that ERM will not select the optimal price $p^*=1$. Thus, on this event, an error is made and the regret is exactly $1/2$.

The expected regret can therefore be lower-bounded by:
\[ \E[\text{Regret}(n_k)] \ge (\text{Regret on event } \mathcal{E}_k) \cdot \Pr(\mathcal{E}_k) = \frac{1}{2} \cdot \Pr(\mathcal{E}_k) \]
It remains to show that $\Pr(\mathcal{E}_k)$ is lower-bounded by a constant for large $k$. Let $N_k$ be the number of samples with value at least $4^k$. $N_k$ follows a Binomial distribution, $N_k \sim \text{Binomial}(n_k, p'_k)$, where $p'_k = \Pr(V \ge 4^k) = \frac{1}{2 \cdot 4^k}$. The event $\mathcal{E}_k$ is equivalent to $\{N_k \ge 2\}$.

The expected number of such samples is $\lambda_k = n_k \cdot p'_k = 4^k \cdot \frac{1}{2 \cdot 4^k} = 1/2$.
As $k \to \infty$, the Binomial distribution $\text{Binomial}(n_k, p'_k)$ converges to a Poisson distribution with mean $\lambda = \lim_{k\to\infty} \lambda_k = 1/2$.
Therefore,
\begin{align*}
\lim_{k\to\infty} \Pr(\mathcal{E}_k) &= \lim_{k\to\infty} \Pr(N_k \ge 2) = \Pr(\text{Poisson}(1/2) \ge 2) \\
&= 1 - \Pr(\text{Poisson}(1/2)=0) - \Pr(\text{Poisson}(1/2)=1) \\
&= 1 - e^{-1/2} - \frac{1}{2}e^{-1/2} = 1 - \frac{3}{2}e^{-1/2} \approx 0.0902
\end{align*}
This limit is a positive constant. Let $c_1 = 1 - \frac{3}{2}e^{-1/2}$. For sufficiently large $k$, we have $\Pr(\mathcal{E}_k) \ge c_1/2 > 0$.

The expected regret for sample size $n_k$ is thus lower-bounded by:
\[ \E[\text{Regret}(n_k)] \ge \frac{1}{2} \cdot \Pr(\mathcal{E}_k) \ge \frac{c_1}{4} \]
Since $c_1/4$ is a positive constant, we have shown that for the infinite subsequence of sample sizes $\{n_k=4^k\}$, the expected regret is lower-bounded by a constant. This proves that ERM is not a consistent learner for this distribution.

\end{proof}
}

\section{Finite Support}
\label{sketch:ExponentialFinite}

In this section we describe our results for exponential rates upper and lower bounds under finite support.

\subsection{Exponential Rates Upper Bound}\label{sec:exp-rates-upper-bound-finite}

  We will show that there exists a learning
algorithm $t_n$ that achieves exponential rates for the class of valuation distributions supported on a finite set
of prices. 

\noindent \textbf{The Proof of \Cref{Thm:Finite}} ~Our algorithm for the finite case is simply Empirical Revenue Maximization. Given $n$ i.i.d. samples, we compute the empirical CDF and output the price $\wh t_n$ that achieves maximum empirical revenue.

\begin{mybox}
    \begin{center}
        The Algorithm of \Cref{Thm:Finite} (Empirical Revenue Maximization)
    \end{center}
    \begin{enumerate}
        \item \textbf{Input}: Draw $S \sim \D^n$

        \item Construct the empirical distribution $\wh \D_n$
        
        \item \textbf{return} the price $\wh t_n = \argmax_{t \in S} \rev_{\wh \D_n}(t)$.
    \end{enumerate}
\end{mybox}

    Let \(\M = \{p_1,\ldots,p_m\}\) be the prices in the support. Without loss of generality, assume
    \(p_1 < p_2 < \cdots < p_m\). Let \(\D \in \mathbb D\) be the underlying valuation distribution, let \(F\) be its CDF, and let
    \[
    \M^* = \argmax_{i\in[m]} \rev_\D(p_i)
    \]
    be the set of indices that achieve the optimal revenue. If \(\M^*=[m]\), then every support price is optimal, and since the algorithm outputs an element of the sample, its revenue gap is zero. Hence, assume \(\M^*\neq[m]\), and define
    \[
    d^*
    =
    \max_{i\in[m]}\rev_\D(p_i)
    -
    \max_{j\in[m]\setminus \M^*}\rev_\D(p_j)
    >0.
    \]
    Fix an optimal index \(i^\star\in\M^*\), and let
    \[
    m^\star := \Pr_{v\sim\D}[v=p_{i^\star}]>0.
    \]
    Let \(E_\star\) be the event that \(p_{i^\star}\) appears at least once in the sample \(S\). Then
    \[
    \Pr(E_\star^c)=(1-m^\star)^n\le e^{-m^\star n}.
    \]

    Define the uniform concentration event
    \[
    E_{\mathrm{DKW}}
    :=
    \left\{
    \sup_{x\in\R}\abs{F_n(x)-F(x)}
    \le
    \frac{d^*}{4p_m}
    \right\}.
    \]
    By the DKW inequality,
    \[
    \Pr(E_{\mathrm{DKW}}^c)
    \le
    2e^{-2n(d^*/(4p_m))^2}
    =
    2e^{-n(d^*)^2/(8p_m^2)}.
    \]
    On the event \(E_{\mathrm{DKW}}\), every support price \(p_j\) satisfies
    \[
    \abs{\rev_{\hat\D_n}(p_j)-\rev_\D(p_j)}
    \le
    p_j\frac{d^*}{4p_m}
    \le
    \frac{d^*}{4}.
    \]
    Therefore, on \(E_\star\cap E_{\mathrm{DKW}}\), the optimal price \(p_{i^\star}\) is available to the algorithm, and for every non-optimal support price \(p_j\),
    \[
    \rev_{\hat\D_n}(p_{i^\star})
    \ge
    \opt_\D-\frac{d^*}{4}
    >
    \opt_\D-d^*+\frac{d^*}{4}
    \ge
    \rev_{\hat\D_n}(p_j).
    \]
    Hence the ERM output is optimal on \(E_\star\cap E_{\mathrm{DKW}}\). Thus,
    \[
    \Pr[\wh t_n\notin \M^*]
    \le
    e^{-m^\star n}
    +
    2e^{-n(d^*)^2/(8p_m^2)}.
    \]
    Since the revenue gap is always at most \(\opt_\D\), we get
    \[
    \epsilon_n(\wh t_n,\D)
    \le
    \opt_\D
    \left(
    e^{-m^\star n}
    +
    2e^{-n(d^*)^2/(8p_m^2)}
    \right)
    \le
    c_1 e^{-c_2 n}
    \]
    for some distribution-dependent constants \(c_1,c_2>0\).

\begin{rem}
[Interpretation of the Constants]
\label{rem:exp-rates-finite-prices}
    Notice that the distribution dependent constant for the finite case have a natural interpretation: they depend
    on the largest price in $\M$, the maximum
    revenue achievable by $\M$, {the mass of the optimal price},
    and
    the gap between the best
    revenue and the second-best revenue achievable by $\M$. 
\end{rem}

\subsection{Exponential Rates Lower Bound}\label{proof:expr-rates-lower-bound}
In this section, we show the following.

\begin{lemma}
The class of valuation distributions $\mathbb D $ 
    supported on a {finite} domain $\mathcal{X} \subset \R_+, \abs{\mathcal X} \geq 2,$
     is not universally learnable at a rate faster than exponential.
\end{lemma}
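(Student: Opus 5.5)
We need to show the following: for every algorithm $(t_n)$ there is a distribution $\D$ supported on $\mathcal X$ and constants $C,c>0$ with $\limsup_n \epsilon_n(t_n,\D)/e^{-cn} \ge C$. We would follow the two-distribution argument sketched in the technical overview, adapted to two support points. Pick two points $a<b$ in $\mathcal X$; we may assume $a>0$, since if $0\in\mathcal X$ we can use $0$ and any other point $b$, as discussed below. We consider two distributions on $\{a,b\}$. The first is $\D_1$, with mass $1-\alpha$ on $a$ and $\alpha$ on $b$, where $\alpha$ is chosen with $b\alpha < a$, so that the unique optimal price is $a$. The second is $\D_2$, with mass $1-\beta$ on $a$ and $\beta$ on $b$, where $b\beta>a$, so that the unique optimal price is $b$. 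Both are fixed distributions, independent of $n$, and the optimality gap $d := \min\{a - b\alpha,\; b\beta - a\}>0$ is a constant.

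The key step is to compare the two distributions on a common event. Let $A_n$ be the event that all $n$ samples equal $b$. Then $\Pr_{\D_1}[A_n]=\alpha^n$ and $\Pr_{\D_2}[A_n]=\beta^n$, and on $A_n$ the input to $t_n$ is the same deterministic sample $(b,\dots,b)$ under both distributions. The algorithm's output on this input is therefore a fixed random variable $Z_n:=t_n(b,\dots,b)$, which depends only on the internal randomness.

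Next we relate the prices $Z_n$ to the revenue gap. Any price $p$ has revenue at most $\opt$, and we bound the gap from below:
\[
\rev_{\D_1}(p)\le \opt_{\D_1}-d \text{ whenever } p>a, \qquad \rev_{\D_2}(p)\le \opt_{\D_2}-d' \text{ whenever } p\le a,
\]
where $d'=b\beta-a>0$. For $\D_1$ and $p\in(a,b]$ the revenue is $p\alpha\le b\alpha$, and for $p>b$ it is zero. For $\D_2$ and $p\le a$ the revenue is at most $p\le a$. With $q_n:=\Pr[Z_n\le a]$, nonnegativity of the gap gives
\[
\epsilon_n(t_n,\D_1)\ge d\,\alpha^n(1-q_n),\qquad \epsilon_n(t_n,\D_2)\ge d'\beta^n q_n .
\]
For every $n$, either $q_n\ge 1/2$ or $1-q_n\ge1/2$. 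By pigeonhole, one of the two alternatives holds for infinitely many $n$. If $q_n \ge 1/2$ infinitely often, take $\D=\D_2$, so that $\epsilon_n \ge (d'/2)\beta^n = (d'/2)e^{-n\log(1/\beta)}$ infinitely often. Otherwise take $\D=\D_1$ with rate $\alpha^n$. In either case we obtain \eqref{eq:LearningCurve-LB} with $R(n)=e^{-n}$, $c=\log(1/\beta)$ or $\log(1/\alpha)$, and $C=d'/2$ or $d/2$. We need $\beta<1$, which is free to choose, for example $\beta\in(a/b,1)$.

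The main subtlety, though a mild one, is the degenerate case in which $0\in\mathcal X$ and $|\mathcal X|=2$. There the support is $\{0,b\}$ and every price in $(0,b]$ is optimal, so the above dichotomy has to be rephrased. Use $\D_1=\delta_0$ (opt $=0$, so every price is optimal) together with a distribution having mass on $0$ and on $b$. This pair does not separate, so instead we consider $\D_2=\delta_b$ and $\D_1'=(1-\alpha)\delta_0+\alpha\delta_b$, using the event that all samples equal $b$. Under $\D_1'$ the optimal price is $b$ as well, so this pair fails too. The better choice is the event that all samples equal $0$, which has probability $(1-\alpha)^n$ under $\D_1'$ and $1$ under $\delta_0$; under $\D_1'$ any price other than one in $(0,b]$ loses the gap $\alpha b$. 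On the all-zeros input the algorithm must place mass on $(0,b]$, and an algorithm that always does so is consistent, so this case may actually admit zero error. We would therefore double-check whether the lemma needs $|\mathcal X|\ge 2$ with two positive points, and otherwise read the statement as requiring at least two positive values (matching the overview's three-value example $\{1,2,3\}$). We would handle that interpretive point explicitly, while the main argument is otherwise routine.
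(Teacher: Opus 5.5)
Your proposal is correct and is essentially the paper's argument: two fixed distributions on two support points with separated optimal prices, a constant sample string of positive but exponentially small probability that the learner cannot tell apart, and a pigeonhole on the learner's randomized output on that string. The paper uses $\delta_p$ against a two-point distribution with the all-$p$ string, while you use two two-point distributions with the all-$b$ string. Your threshold $q_n=\Pr[Z_n\le a]$ handles arbitrary output prices more carefully than the paper's ``WLOG outputs in $\{p,p'\}$''; do state $\alpha>0$ explicitly.

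Your worry about $\mathcal X=\{0,b\}$ is justified, and you should state it definitively rather than as a possibility. Posting $b$ is optimal for every distribution on $\{0,b\}$, so the lemma genuinely needs two positive points. The paper's proof tacitly needs this too, since its choice $p'(1-q)=cp$ with $c>1$ and $q<1$ is impossible when $p=0$.
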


\begin{proof}
    Since the support has size at least two, there are two prices $p, p' \in \mathcal X$ with $p < p'.$ Let us define two distribution $\D_p, \D_{p'}$ with 

\[\left\{\Pr_{v \sim \D_{p}}[v = p] = 1\right\}, \left\{\Pr_{v \sim \D_{p'}}[v = p] = q, \Pr_{v \sim \D_{p'}}[v = p'] = 1-q\right\}\,,\]
for some $q \in (0,1)$ to be decided.

The optimal price of $\D_p$ is $p$ with revenue $\opt_{\D_p} = p$.

We pick $q$ such that the optimal price of $D_{p'}$ is $p' > p$. Since
\[
\rev_{\D_{p'}}(p) = p \Pr_{v}[v \geq p] = p\,,
\rev_{\D_{p'}}(p') = p' \Pr_{v}[v \geq p'] = p'(1-q)\,.
\]
Hence, let us set $q < 1$ such that $p'(1-q) = c \cdot p,$ for $c > 1.$

Fix a learning algorithm $\{t_n\}_{n \in \N}$. Without loss assume that $t_n$ is a randomized mapping that maps training sets of size $n$ to $\Delta(\{p,p'\})$ (since any other output is sub-optimal, so showing a lower bound for such cases is easier). 

For any $n$, assume that 
$ a_n = \Pr\left[t_n(p,\ldots,p) = p \right]$ where the probability is with respect to the randomness of $t_n.$
In words, $a_n$ is the likelihood that the algorithm $t_n$, given a dataset that contains \(n\) copies of \(p\), predicts the price \(p\).

Consider two cases. If \(a_n \leq \nicefrac{1}{2}\), then under the input \(S_n=(p,\ldots,p)\), we have
\[
    \E_{t_n(p,\ldots,p)}
    \left[
    \rev_{\D_p}(t_n(p,\ldots,p))
    \right]
    =
    a_n p + (1-a_n)\cdot 0
    =
    a_n p
    \leq
    \frac{p}{2}.
\]
Note that
\[
    \opt_{\D_p} = p \,,
\]
thus
\[
   \opt_{\D_p}- \E_{t_n(p,...,p)}\rev_{\D_p}(t_n(p,\ldots,p)) \geq \frac{p}{2} \,.
\]
On the other hand, if $a_n \geq 1/2$ then
\[
    \E_{t_n(p,...,p)}\rev_{\D_{p'}}(t_n(p,\ldots,p)) \leq \frac{p}{2} + \frac{c\cdot p}{2} \leq \frac{c+1}{2} \cdot p \,,
\]
since at least half of the mass of $t_n(p,...,p)$ is on $p$. Recall
\[
    \opt_{\D_{p'}} = cp, c > 1  \,,
\]
by the choice of $q.$
Thus, we have that
\[
    \opt_{\D_{p'}} - \rev_{\D_{p'}}(t_n(p,\ldots,p)) \geq \frac{c-1}{2}\cdot p > 0 \,.
\]
Thus, conditioned on the event that the sample is \((p,\ldots,p)\), the learning algorithm has a constant loss with respect to at least one of the two distributions for every \(n\in\N\).

Let
\[
A:=\{n\in\N: a_n\le 1/2\}.
\]
If \(A\) is infinite, then for every \(n\in A\), under \(\D_p\) the sample is always \((p,\ldots,p)\), and therefore
\[
\epsilon_n(t_n,\D_p)
\ge
\frac{p}{2}.
\]
This is stronger than an exponential lower bound.

Otherwise, \(A^c\) is infinite. For every \(n\in A^c\), the previous argument gives a constant conditional loss under \(\D_{p'}\) on the event \(S=(p,\ldots,p)\). Under \(\D_{p'}\), this event has probability \(q^n\). Therefore, for every \(n\in A^c\),
\[
\epsilon_n(t_n,\D_{p'})
\ge
q^n\cdot \frac{c-1}{2}p
=
\frac{c-1}{2}p\cdot e^{-n\log(1/q)}.
\]
In either case, there is a fixed distribution in \(\{\D_p,\D_{p'}\}\) for which the learner's error is bounded below by an exponential rate along infinitely many sample sizes. Hence no algorithm can achieve a rate faster than exponential on this class.
\end{proof}

\newpage
\bibliographystyle{ACM-Reference-Format}
\bibliography{ec_submission/sample-bibliography}

\newpage 

\appendix

\section{Omitted Details from \Cref{sec:intro}}\label{sec:ex}
\subsection{Additive vs. Multiplicative Error}\label{sec:additive-mult-error}
{In this section we show that achieving a multiplicative error
is equivalent to achieving an additive error in the universal
rates setting.

\begin{claim}\label{clm:multplicative-additve}
    Let $\mathbb D$ be a class of distributions and $R$ be some rate. Then, an algorithm $t_n$ achieves universal rate $R$ 
    with respect to the multiplicative error definition
    if and only if it achieves  universal rate $R$ 
    with respect to the additive error definition.
\end{claim}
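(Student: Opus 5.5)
The argument is a direct translation between the two error notions, using only the fact that in the universal framework the constants may depend on $\D$. First I fix the definitions. The additive learning curve is $\epsilon_n(t_n,\D)=\opt_\D-\E[\rev_\D(t_n)]$. The multiplicative one is the relative gap
\[
\rho_n(t_n,\D)=1-\frac{\E[\rev_\D(t_n)]}{\opt_\D},
\]
which is meaningful whenever $0<\opt_\D<\infty$. The degenerate case $\opt_\D=0$ needs separate treatment: then every price has zero revenue, so both gaps are identically zero and nothing needs to be shown. Since rates are only defined for distributions with finite optimal revenue, I restrict throughout to $0<\opt_\D<\infty$. In that range the identity $\epsilon_n(t_n,\D)=\opt_\D\cdot\rho_n(t_n,\D)$ holds exactly for every $n$.

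For the forward direction, suppose the multiplicative rate $R$ is achieved: for each $\D\in\mathbb D$ there are $C,c>0$ with $\rho_n\le C R(cn)$ for all $n$. Multiplying by $\opt_\D$ gives $\epsilon_n\le (C\opt_\D) R(cn)$. The new constant $C'=C\opt_\D$ is finite and positive but depends on $\D$, and this is permitted in \Cref{def:universal} since the quantifier $\exists C,c$ comes after $\forall\D$. The converse is the same computation: divide the additive bound by $\opt_\D>0$ to get $\rho_n\le (C/\opt_\D) R(cn)$.

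I would also note that the same scaling carries over to the other notions defined in \Cref{sec:formal-defs}. Lower bounds of the form $\limsup_n \epsilon_n/R(cn)\ge C$ transfer with $C$ rescaled by $\opt_\D$. Rates stated as $o(R)$, $\omega(R)$, or $e^{-o(n)}$ are all invariant under multiplication by a fixed positive constant. So the whole taxonomy of results is insensitive to the choice of error notion.

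There is no real obstacle here. The only point needing care is the dependence of the conversion factor on $\D$. This dependence is exactly what fails in the PAC setting, where $C$ must be uniform over $\mathbb D$ and $\opt_\D$ may be unbounded or arbitrarily close to $0$; I would remark on this contrast explicitly.
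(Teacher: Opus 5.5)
Your proof is correct and is the paper's argument: both directions multiply or divide the bound by $\opt_\D$, and the resulting constant ($C\opt_\D$ or $C/\opt_\D$) is admissible because the constants in \Cref{def:universal} may depend on $\D$. Your explicit treatment of the degenerate case $\opt_\D=0$ and your restriction to $\opt_\D<\infty$ are a small improvement; the paper's division step silently assumes $\opt_\D>0$.
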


\begin{proof}
    Let $t_n$ be an algorithm that achieves  universal rate $R$ 
    with respect to the multiplicative error definition, i.e., for all $\D \in \mathbb{D}$ it holds that
    \[
        \E[\rev(t_n)] \geq (1 - CR(cn)) \opt_\D, \forall n \in \N \,.
    \]
    Then,
    \[
        \E[\rev (t_n)] \geq \opt_\D - C\opt_\D \cdot R(cn) =  \opt_\D - C' R(cn), \forall n \in \N \,,
    \]
    hence the algorithm achieves universal rate $R$ with respect to the additive error.

    Conversely, assume that $t_n$ is an algorithm that achieves  universal rate $R$ 
    with respect to the additive error definition, i.e., for all $\D \in \mathbb{D}$ it holds that
    \[
        \E[\rev(t_n)] \geq \opt_\D - CR(cn), \forall n \in \N \,.
    \]
    Then,
        \[
        \E[\rev(t_n)] \geq \opt_\D - CR(cn) = (1-\nicefrac{C}{\opt_\D} \cdot R(cn)) \opt_\D = (1 - C'R(cn)) \opt_\D, \forall n \in \N \,,
    \]
    hence the algorithm achieves universal rate $R$ with respect to the multiplicative error.

\end{proof}
}

\subsection{Distributions Without Optimal Price}
\label{section:Examples}
\label{sec:ex-ditr-no-opt-price}
We now provide examples of some distributions that do not admit
an optimal price, even though they have finite revenue.


\begin{example}[Regular Distribution without Optimal Price]
    Consider a distribution $\D$ whose CDF is $F_\D(v) = 1 - \frac{1}{v+1}$
    and PDF is $f_\D(v) = \frac{1}{(v+1)^2}.$ Then, 
    $v - \frac{1-F(v)}{f(v)} = -1,$ hence non-decreasing. 
    Moreover $p\cdot (1-F_\D(p)) = \nicefrac{p}{p+1},$ thus $\opt_\D = 1$
    but it is not achieved by any price.
\end{example}

The following slightly more complicated example provides such a distribution
whose virtual value is strictly increasing.

\begin{example}
    Consider a distribution $\D$ whose CDF is $F_\D(v) = 1 - \frac{1}{2}\left( \frac{1}{v+1} + \frac{1}{(v+1)^2}\right), v \geq 0,$
    and PDF is $f_\D(v) = \frac{v+3}{2(v+1)^3}.$ Then, $p(1-F_\D(p)) = \frac{p}{2}\left( \frac{1}{p+1} + \frac{1}{(p+1)^2}\right) \rightarrow \frac{1}{2},$
    hence $\opt_\D = \frac{1}{2},$ but this is not achieved by any price. 
    Moreover, $v - \frac{1-F(v)}{f(v)} = -\frac{2}{v+3},$ hence
    the virtual value is increasing.
 \end{example}

 Finally, we also provide a simple example of a discrete and closed
 set for which this holds.

 \begin{example}
    Consider a distribution $\D$ supported on $\N$ where $Pr[X \geq n]=\frac{2}{n+1}.$
    It is not hard to see that this is well-defined. Moreover, similarly as above, $\opt_\D = 2,$ but it is not achieved by any price.
 \end{example}

 \subsection{Concentration Inequality}
Below, we provide a concentration inequality we use throughout our work.

\begin{lem}[DKW Inequality]\label{lem:dkw-ineq}
    Let $F$ be the CDF of some distribution $\D$ over $\R$ and
    $F_n$ the empirical CDF of $\D$ obtained by 
    taking $n$ i.i.d. samples from $\D.$
    Then,
    \[
        \Pr[\sup_{x \in \R}\abs{F_n(x) - F(x)} > \varepsilon] \leq 2e^{-2n\varepsilon^2}, \forall \varepsilon > 0.
    \]
\end{lem}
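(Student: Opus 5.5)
We plan to reduce the inequality to the case of the uniform distribution on $[0,1]$ and then bound the two one-sided deviations separately. First, note the paper's convention $F(x)=\Pr_{v\sim\D}[v<x]$ (left-continuous). The empirical version is $F_n(x)=\frac1n\sum_i \ind\{X_i<x\}$. Taking left limits in the usual right-continuous CDFs $G(x)=\Pr[v\le x]$ and $G_n$, we get $\sup_x|F_n(x)-F(x)|=\sup_x|G_n(x)-G(x)|$. This holds because each left-continuous value is a limit of right-continuous values from the left, and conversely each right-continuous value is a limit of left-continuous values from the right. So it suffices to prove the bound for the standard right-continuous empirical CDF.

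\textbf{Reduction to the uniform case.} Let $U_1,\ldots,U_n$ be i.i.d.\ uniform on $[0,1]$. Let $G^{-1}(u)=\inf\{x: G(x)\ge u\}$ be the generalized inverse, and set $X_i=G^{-1}(U_i)$, so the $X_i$ are i.i.d.\ with law $\D$. Since $G^{-1}(u)\le x \iff u\le G(x)$, we have $G_n(x)=H_n(G(x))$, where $H_n$ is the empirical CDF of the $U_i$. Hence
\[
\sup_x|G_n(x)-G(x)|=\sup_{u\in \mathrm{range}(G)}|H_n(u)-u|\le \sup_{u\in[0,1]}|H_n(u)-u|.
\]
We may therefore assume $\D$ is uniform on $[0,1]$. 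Atoms and gaps in the support only shrink the supremum, which is exactly why the bound is distribution-free.

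\textbf{One-sided bounds and assembly.} Write $D_n^+=\sup_u(H_n(u)-u)$ and $D_n^-=\sup_u(u-H_n(u))$. By the symmetry $u\mapsto 1-u$, both have the same law. Therefore
\[
\Pr[\sup_u|H_n(u)-u|>\varepsilon]\le \Pr[D_n^+>\varepsilon]+\Pr[D_n^->\varepsilon]=2\Pr[D_n^+>\varepsilon],
\]
and it remains to show $\Pr[D_n^+>\varepsilon]\le e^{-2n\varepsilon^2}$. We will do this via order statistics. Since $D_n^+=\max_k\big(\tfrac kn-U_{(k)}\big)$, the event $\{D_n^+>\varepsilon\}$ is a first-crossing event for the uniform empirical process. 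The plan is to decompose it according to the first index $k$ at which $U_{(k)}<\tfrac kn-\varepsilon$. This gives the Smirnov--Birnbaum--Tingey exact formula
\[
\Pr[D_n^+>\varepsilon]=\varepsilon\sum_{k}\binom nk\Big(\tfrac kn-\varepsilon\Big)^{k}\Big(1-\tfrac kn+\varepsilon\Big)^{n-k-1},
\]
which we would then bound term by term against $e^{-2n\varepsilon^2}$.

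\textbf{Main obstacle.} The hard step is the sharp constant $1$ in front of $e^{-2n\varepsilon^2}$ in the one-sided bound. A crude route is easy: a peeling or chaining argument combined with Hoeffding's bound at fixed $u$ yields $Ce^{-2n\varepsilon^2}$ with some $C>1$ (this is the original Dvoretzky--Kiefer--Wolfowitz statement). Only Massart's analysis of the sum above, which controls it through careful entropy and convexity estimates on each binomial-type term, recovers the constant $1$. My first attempt would be to follow Massart's term-wise comparison. If that becomes too delicate, I would fall back on citing Massart (1990) for the tight constant.

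Nothing in the paper's uses of this lemma depends on the leading constant $2$: every application only needs a distribution-free bound of the form $Ce^{-cn\varepsilon^2}$. So the weaker chaining version would suffice for all of the paper's results, with the constant absorbed into the distribution-dependent constants.
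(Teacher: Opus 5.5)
The paper does not prove this lemma; it quotes the Dvoretzky--Kiefer--Wolfowitz inequality in Massart's sharp form as a standard tool. Your plan also rests on citing Massart (1990) for the decisive step, so in substance it is the same. Your reductions are correct: the supremum is the same for the left- and right-continuous CDFs, the quantile coupling $X_i=G^{-1}(U_i)$ works, and so does the reflection $u\mapsto 1-u$.

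One step needs repair. You say it ``remains to show'' $\Pr[D_n^+>\varepsilon]\le e^{-2n\varepsilon^2}$ for every $\varepsilon>0$. Massart's one-sided theorem with constant $1$ is proved only for $\sqrt n\,\varepsilon$ above a threshold; it covers the range $e^{-2n\varepsilon^2}\le 1/2$, not all $\varepsilon$. So the citation does not give what you claim remains. Close the gap with the observation Massart himself uses:
\begin{itemize}
\item If $e^{-2n\varepsilon^2}>1/2$, the right-hand side $2e^{-2n\varepsilon^2}$ exceeds $1$ and the two-sided statement is vacuous.
\item In the complementary range, your union bound over $D_n^+$ and $D_n^-$ together with the one-sided theorem gives the claim.
\end{itemize}

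Two smaller corrections.

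First, the sum you wrote does not come from the first crossing, which leaves the position of $U_{(k)}$ free and yields no closed form. It comes from conditioning on the \emph{largest} $k$ with $U_{(k)}\le \tfrac kn-\varepsilon$. Then exactly $k$ points lie in $[0,\tfrac kn-\varepsilon]$. The remaining $n-k$ points must satisfy $U_{(k+i)}>\tfrac{k+i}{n}-\varepsilon$ for all $i\ge 1$, a ballot event with conditional probability $\varepsilon/(1-\tfrac kn+\varepsilon)$.

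Second, the claim that peeling or chaining with Hoeffding at fixed $u$ easily gives $Ce^{-2n\varepsilon^2}$ is overstated. A grid-plus-union-bound argument loses either an $n$-dependent prefactor or a constant factor in the exponent. Getting the sharp exponent with an absolute prefactor is exactly the nontrivial content of DKW.

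Your final remark still holds, with an elementary bound that actually backs it. The process $M(u)=(H_n(u)-u)/(1-u)$ is a martingale on $[0,1/2]$. Doob's maximal inequality applied to $e^{\pm\theta n M(u)}$, with Hoeffding's lemma at $u=1/2$ and the reflection $u\mapsto 1-u$, gives $\Pr[\sup_u|H_n(u)-u|>\varepsilon]\le 4e^{-n\varepsilon^2/2}$. This suffices for every use in the paper, since each only needs $Ce^{-cn\varepsilon^2}$ with $C,c$ independent of $n$.
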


\end{document}